\newcommand{\e}{\exp}
\newcommand{\vect}[1]{\mathbf{#1}}
\newcommand{\what}{\hat{\mathbf{w}}}
\newcommand{\wtilde}{\tilde{\mathbf{w}}}
\newcommand{\wvec}{{\mathbf{w}}}
\newcommand{\rvec}{{\mathbf{r}}}
\newcommand{\bm}[1]{\ensuremath{\boldsymbol{#1}}}
\newcommand{\rhoVec}{\bm{\rho}(t)}
\newcommand{\x}{{\mathbf{x}}}
\newcommand{\E}{\mathbb{E}}
\newcommand{\norm}[1]{\left\lVert#1\right\rVert}
\newcommand{\abs}[1]{\left\vert#1\right\vert}
\newcommand{\xn}{{\mathbf{x}_n}}
\newcommand{\xnT}{{\mathbf{x}_n^\top }}
\newcommand{\sumnsv}{\sum\limits_{n\in \mathcal{S}}}
\newcommand{\set}{\mathcal{S}}
\newcommand{\op}{\vect{P}}
\newtheorem{theorem}{Theorem}
\newtheorem{lemma}{Lemma}
\newtheorem{definition}{Definition}
\newtheorem{corollary}{Corollary}
\newcommand{\dnote}[1]{}
\newcommand{\mnote}[1]{}
\newcommand{\jnote}[1]{}
\newcommand{\remove}[1]{{}}
\newtheorem{assm}{Assumption}
\newcounter{parentassm}
\newenvironment{subassm}
 {
  \refstepcounter{assm}%
  \protected@edef\theparentassm{\theassm}%
  \setcounter{parentassm}{\value{assm}}%
  \setcounter{assm}{0}%
  \def\theassm{\theparentassm\alph{assm}}%
  \ignorespaces
}{%
  \setcounter{assm}{\value{parentassm}}%
  \ignorespacesafterend
}
\newcommand*{\QEDA}{\hfill\ensuremath{\blacksquare}}%
\newcommand*{\QEDB}{\hfill\ensuremath{\square}}%
\begin{document}
\runningtitle{Stochastic Gradient Descent on Separable Data}
\runningauthor{Mor Shpigel Nacson, Nathan Srebro, Daniel Soudry}

%\maketitle
\twocolumn[

\aistatstitle{Stochastic Gradient Descent on Separable Data: \\Exact Convergence with a Fixed Learning Rate}

\aistatsauthor{ Mor Shpigel Nacson\textsuperscript{1} \ \ \  Nathan Srebro\textsuperscript{2} \ \ \ Daniel Soudry\textsuperscript{1}}

\aistatsaddress{ \textsuperscript{1}Technion, Israel,\ \ \ \  \textsuperscript{2}TTI Chicago, USA} 
]

\begin{abstract}
Stochastic Gradient Descent (SGD) is a central tool in machine learning. We prove that SGD converges to zero loss, even with a fixed (non-vanishing) learning rate --- in the special case of homogeneous linear classifiers with smooth monotone loss functions, optimized on linearly separable data. Previous works assumed either a vanishing learning rate, iterate averaging, or loss assumptions that do not hold for monotone loss functions used for classification, such as the logistic loss. We prove our result on a fixed dataset, both for sampling with or without replacement. Furthermore, for logistic loss (and similar exponentially-tailed losses), we prove that with SGD the weight vector converges in direction to the $L_2$ max margin vector as $O(1/\log(t))$ for almost all separable datasets, and the loss converges as $O(1/t)$ --- similarly to gradient descent. Lastly, we examine the case of a fixed learning rate proportional to the minibatch size. We prove that in this case, the asymptotic convergence rate of SGD (with replacement) does not depend on the minibatch size in terms of epochs, if the support vectors span the data. These results may suggest an explanation to similar behaviors observed in deep networks, when trained with SGD.
\end{abstract}

\section{INTRODUCTION}

Deep neural networks (DNNs) are commonly trained using stochastic gradient descent (SGD), or one of its variants. During training, the learning rate is typically decreased according to some schedule (e.g., every $T$ epochs we multiply the learning rate by some $\alpha<1$). Determining the learning rate schedule, and its dependency on other factors, such as the minibatch size, has been the subject of a rapidly increasing number of recent empirical works (\cite{hoffer2017train,Goyal2017,Jastrzebski2017,SmithLe2018} are a few examples). Therefore, it is desirable to improve our understanding of such issues. However, somewhat surprisingly, we observe that we do not have even a satisfying answer to the basic question
\begin{center}
\textit{Why do we need to decrease the learning rate during training?}
\end{center}
At first, it may seem that this question has already been answered. Many previous works have analyzed SGD theoretically (e.g., see \cite{Robbins1951,Bertsekas1999,Geary2001,Bach2011a,Ben-David2014,Ghadimi2013,Bubeck2015,Bottou2016,Ma2017} and references therein), under various assumptions. In all previous works, to the best of our knowledge, one must assume a vanishing learning rate schedule, averaging of the SGD iterates, partial strong convexity (i.e., strong convexity in some subspace), or the Polyak-Lojasiewicz (PL) condition \citep{Bassily2018} --- so that the SGD increments or the loss (in the convex case) will converge to zero for generic datasets. However,  even near its global minima, a neural network loss is not partially strongly convex, and the PL condition does not hold. Therefore, without a vanishing learning rate or iterate averaging, the gradients are only guaranteed to decrease below some constant value, proportional to the learning rate. Thus, in this case, we may fluctuate near a critical point, but never converge to it.

Consequently it may seem that in neural networks we should always decrease the learning rate in SGD or average the weights, to enable the convergence of the weights to a critical point, and to decrease the loss. However, this reasoning does not hold empirically. In many datasets, even with a fixed learning rate and without averaging, we observe that the training loss can converge to zero. For example, we examine the learning dynamics of a ResNet-18 trained on CIFAR10 in Figure \ref{fig: DNN results}. %and Table \ref{tab:Sample-value-dnn}
Even though the learning rate is fixed, the training loss converges to zero (and so does the classification error).

Notably, we do not observe any convergence issues, as we may have suspected from previous theoretical results.  In fact, if we decrease the learning rate at any point, this only decreases the convergence rate of the training loss to zero. The main benefit of decreasing the learning rate is that it typically improves generalization performance. Such contradiction between existing theoretical and empirical results may indicate a significant gap in our understanding. We are therefore interested in closing this gap.

\begin{figure*}
% \begin{centering}
% \begin{tabular}{cc}
% \includegraphics[width=0.85\columnwidth]{}  & \includegraphics[width=0.85\columnwidth]{}  \tabularnewline
% \end{tabular}
% \par\end{centering}
\begin{centering}
\begin{tabular}{cc}
\includegraphics[width=0.9\columnwidth]{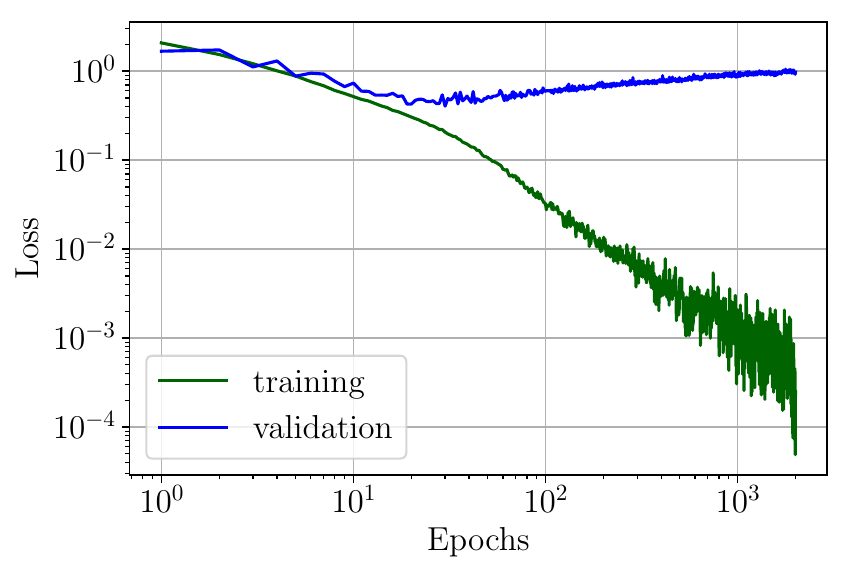}  & \includegraphics[width=0.9\columnwidth]{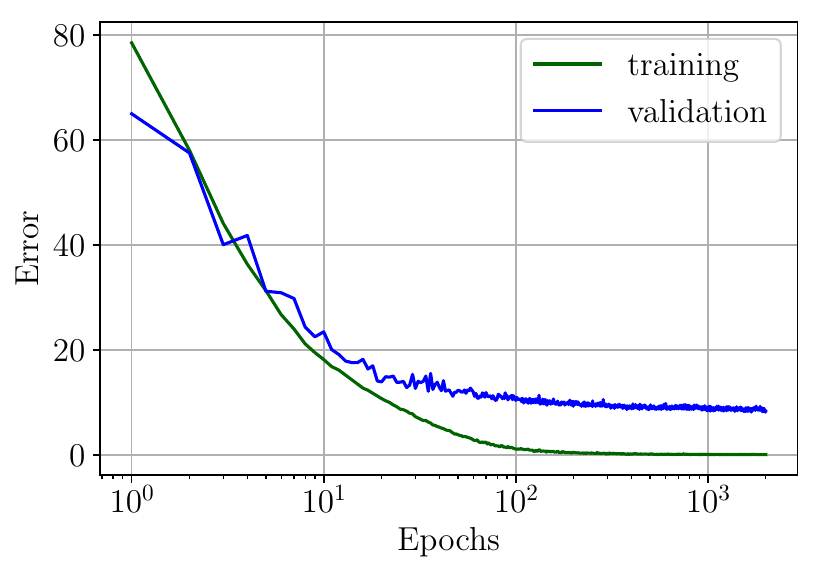}  \tabularnewline
\end{tabular}
\par\end{centering}
\caption{Training of a convolutional neural network on CIFAR10 using stochastic
gradient descent with constant learning rate, softmax
output and a cross entropy loss. We observe that, approximately: (1) The training loss and (classification) error both decays to zero; (2) after a while, the validation loss starts to increase; and (3) in contrast, the validation (classification) error slowly improves. In \citet{soudry2017implicit}, the authors observed similar results with momentum.
% Figure reproduced from \citet{soudry2017implicit}. \mnote{Similar behavior is also observed without momentum as can be seen in appendix section \ref{sec: more emipirical results}.}\dnote{better just put the SGD (no momentum) figure here,  say we get similar results in \citet{soudry2017implicit} with momentum, and remove Table 1}%, with authors' permission. 
\label{fig: DNN results}}
\end{figure*}

\remove{
\begin{table*}
\begin{centering}
\begin{tabular}{|c|c|c|c|c|c|c|}
\hline 
Epoch  & 50  & 100  & 200  & 400  & 2000  & 4000\tabularnewline
\hline 
\hline 
Train loss  & 0.1  & 0.03  & 0.02  & 0.002  & $10^{-4}$  & $3\cdot10^{-5}$\tabularnewline
\hline 
Train error  & 4\%  & 1.2\%  & 0.6\%  & 0.07\%  & 0\%  & 0\%\tabularnewline
\hline 
Validation loss  & 0.52  & 0.55  & 0.77  & 0.77  & 1.01  & 1.18\tabularnewline
\hline 
Validation error  & 12.4\%  & 10.4\%  & 11.1\%  & 9.1\%  & 8.92\%  & 8.9\% \tabularnewline
\hline 
\end{tabular}
\par\end{centering}
\vspace{2mm}
\caption{Sample values from various epochs in the experiment depicted in Fig. \ref{fig: DNN results}. Table modified from \citet{soudry2017implicit}.%, with authors' permission.
 \label{tab:Sample-value-dnn}}
\end{table*}
}
To do so, we first examine the network dynamics in Figure \ref{fig: DNN results}. Since the training error has reached zero after a certain number of iterations, by then the last hidden layer must have become linearly separable. Since the network is trained using the monotone cross-entropy loss (with softmax outputs), by increasing the norm of the weights we decrease the loss. Therefore, if the loss is minimized then the weights would tend to diverge to infinity --- as indeed happens. This weight divergence does not affect the scale-insensitive validation (classification) error, which continues to decrease during training. In contrast, the validation loss starts to increase.

To explain this behavior, \cite{soudry2017implicit,soudry2018journal} focused on the dynamics of the last layer, for a fixed separable input and no bias. For Gradient Descent (GD) dynamics, \cite{soudry2017implicit,soudry2018journal} proved that the training loss converges to zero as $1/t$, the direction of the weight vector converges to the max margin as $1/\log(t)$, and the validation loss increase as $\log(t)$.  This had similar dynamics to those observed in Figure \ref{fig: DNN results}. However, the dynamics of GD are simpler than those of SGD. Notably, it is well known that on smooth functions, for the iterates of GD, the gradient converges to zero even with a fixed learning rate --- just as long as this learning rate is below some fixed threshold (which depends on the smoothness of the function).

\paragraph{Our contributions.} In this paper we examine SGD optimization of homogeneous linear classifiers with smooth monotone loss functions, where the data is sampled either with replacement (the sampling regime typically examined in theory), or without replacement (the sampling regime typically used in practice). For simplicity, we focus on binary classification (e.g., logistic regression). First, we prove three basic results:

\begin{itemize}
\item The norm of the weights diverges to infinity for any learning rate.
\item For a sufficiently small \emph{fixed} learning rate, the loss and gradients converge to zero.
\item This upper bound we derived for the maximal learning rate is proportional to the minibatch size, when the data in SGD is sampled with replacement. 
\end{itemize}
Similar behavior to the last property is also observed in deep networks \citep{Goyal2017,SmithLe2018}. Next, given an additional assumption that the loss function has an exponential tail (e.g., logistic regression), we prove that for almost all linearly separable datasets (i.e., except for measure zero cases):
\begin{itemize}
\item The direction of the weight vector converges to that of the $L_2$ max margin solution.
\item The margin converges as $O(1/\log(t))$, while the training loss converges as $O(1/t)$.
\end{itemize}

These conclusions for SGD are the same as for GD \citep{soudry2017implicit} --- the only difference is the value of the maximal learning rate, which depends on the minibatch size. Therefore, we believe our SGD results might be similarly extended, as GD, to multi-class \citep{soudry2018journal}, other loss functions \citep{Nacson2018}, other optimization methods \citep{gunasekar2018implicit}, linear convolutional neural networks \citep{Gunasekar2018}, and hopefully to nonlinear deep networks. 

Finally, under the assumption that the SVM support vectors span the dataset, we further characterize SGD iterate asymptotic behavior. Specifically, we show that, if we keep the learning rate proportional to the minibatch size, then:
\begin{itemize}
    \item The minibatch size does not affect the asymptotic convergence rate of SGD, in terms of epochs. 
    \item In terms of SGD iterations, the fastest asymptotic convergence rate, is obtained at full batch size, i.e. GD.
\end{itemize}

These results suggest the large potential of parallelism in separable problems, as observed in deep networks \citep{Goyal2017,SmithLe2018}.

\section{PRELIMINARIES \label{sec: prev-results}} 
Consider a dataset $\left\{ \mathbf{x}_{n},y_{n}\right\} _{n=1}^{N}$, with binary labels $y_{n}\in\left\{ -1,1\right\}$ . We analyze learning
by minimizing an empirical loss of homogeneous linear predictors (i.e., without bias), of the form
\begin{equation}
\mathcal{L}\left(\mathbf{w}\right)=\sum_{n=1}^{N}\ell\left(y_{n}\mathbf{w}^{\top}\mathbf{x}_{n}\right)\,,\label{eq: general loss functions}
\end{equation}
where $\mathbf{w}\in\mathbb{R}^{d}$ is the weight vector. To simplify notation, we assume that
$\forall n:\,y_{n}=1$ \textemdash{} this is true without loss of
generality, since we can always re-define $y_{n}\mathbf{x}_{n}$ as
$\mathbf{x}_{n}$.

We are particularly interested in problems that are linearly separable and with a smooth strictly decreasing and non-negative loss function. Therefore, we assume:

{\assm{The dataset is strictly linearly separable: $\exists\mathbf{w}_{*}$
such that $\forall n:\,\mathbf{w}_{*}^{\top}\mathbf{x}_{n}>0$ .\label{assum: Linear sepereability}}}

Given that the data is linearly separable, the maximal $L_2$ margin is strictly positive
\begin{equation}
\gamma=\max_{\mathbf{w}\in \mathbb{R}^d:\left\Vert \mathbf{w}\right\Vert =1}\min_{n}\mathbf{w}^{\top}\mathbf{x}_{n} > 0\,.
\label{eq: max margin gamma}
\end{equation}

{\assm{$\ell\left(u\right)$ is a positive, differentiable, $\beta$-smooth function (\emph{i.e.}, its derivative is $\beta$-Lipshitz), monotonically
decreasing to zero, (so\footnote{The requirement of nonnegativity and that the loss asymptotes to zero
is purely for convenience. It is enough to require the loss is monotone
decreasing and bounded from below. Any such loss asymptotes to some
constant, and is thus equivalent to one that satisfies this assumption,
up to a shift by that constant.} $\forall u:\,\ell\left(u\right)>0,\ell^{\prime}\left(u\right)<0$
and $\lim_{u\rightarrow\infty}\ell\left(u\right)=\lim_{u\rightarrow\infty}\ell^{\prime}\left(u\right)=0$), and $\lim\sup_{u\rightarrow -\infty}\ell^{\prime}\left(u\right) \neq 0$.\label{assum: loss properties}}}

Many common loss functions, including the logistic and probit losses, follow Assumption \ref{assum: loss properties}.
Assumption \ref{assum: loss properties} also straightforwardly implies
that $\mathcal{L}\left(\mathbf{w}\right)$ is a $\beta\sigma_{\max}^{2}$-smooth
function, where the columns of $\mathbf{X}$ are all samples, and
$\sigma_{\max}$ is the
maximal singular value of $\mathbf{X}$.

Under these conditions, the infimum of the optimization problem is
zero, but it is not attained at any finite $\mathbf{w}$. Furthermore,
no finite critical point $\mathbf{w}$ exists. We consider minimizing
eq. \ref{eq: general loss functions} using Stochastic Gradient Descent
(SGD) with a fixed learning rate $\eta$, \emph{i.e., }with steps
of the form: 
\begin{equation}
\mathbf{w}\left(t+1\right)=\mathbf{w}\left(t\right)-\frac{\eta}{B}\sum_{n\in\mathcal{B}\left(t\right)}\ell^{\prime}\left(\mathbf{w}\left(t\right)^{\top}\mathbf{x}_{n}\right)\mathbf{x}_{n},\label{eq: SGD dynamics with B}
\end{equation}
where $\mathcal{B}\left(t\right)\subset\left\{ 1,\dots,N\right\} $
is a minibatch of $B$ distinct indices, chosen so $K=N/B$ is an integer, and that it satisfies one of the following assumptions.
The first option is the assumption of random sampling with replacement:
\begin{subassm}
\begin{assm}\label{assm:SGD Sampling with replacement}
\emph{[Random sampling with replacement]} At each iteration $t$ we randomly and uniformly sample a minibatch $\mathcal{B}(t)$ of $B$ distinct indices, i.e. so each sample has an identical probability to be selected. 
\end{assm}
For example, this assumption holds if at each iteration we uniformly sample the indices without replacement from $\left\{ 1,\dots,N\right\}$, or uniformly sample $k\in\{1,\dots,K \}$ and select $\mathcal{B}\left(t\right)=\mathcal{B}_{k}$, where $\{\mathcal{B}_{k}\}_{k=0}^{K-1}$  is some fixed partition of the data indices, i.e., \[
\cup_{k=0}^{K-1}\mathcal{B}_{k}=\left\{ 1,\dots,N\right\} .
\]
This assumption is rather common in theoretical analysis, but less common in practice.
The next alternative sampling method is more common in practice:
\begin{assm}
[Sampling without replacement] At each epoch, the minibatches partition the data: \[\forall u\in\{0,1,2,\dots\}:\,\cup_{k=0}^{K-1}\mathcal{B}\left(Ku+k\right)=\left\{ 1,\dots,N\right\}.\]\label{assm:SGD Sampling} \end{assm} \end{subassm}  This way, each sample is chosen exactly once at each epoch, and SGD completes balanced passes over the data. An important special case of this assumption is random sampling without replacement, which is the practically common method. Other special cases are periodic sampling (round-robin), and even adversarial selection of the order of the samples.

\section{MAIN RESULT 1: THE LOSS CONVERGES TO A GLOBAL INFIMUM}	

The weight norm always diverges to infinity, for any learning rate, as we prove next. 

\begin{lemma}
Given assumptions \ref{assum: Linear sepereability} and \ref{assum: loss properties}, and any starting point $\mathbf{w}(0)$, the iterates of SGD  on $\mathcal{L}\left(\mathbf{w}\right)$ (eq. \ref{eq: SGD dynamics with B}), with either sampling regimes (Assumption \ref{assm:SGD Sampling with replacement} or \ref{assm:SGD Sampling}),  diverge
to infinity, i.e. $\left\Vert \mathbf{w}\left(t\right)\right\Vert \rightarrow\infty$.
\end{lemma}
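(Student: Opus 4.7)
The plan is to track the scalar $\zeta(t) := \mathbf{w}(t)^\top \mathbf{w}_*$, where $\mathbf{w}_*$ is the separating vector from Assumption~\ref{assum: Linear sepereability} and $\gamma_* := \min_n \mathbf{w}_*^\top \mathbf{x}_n > 0$. A direct computation from the SGD update~(\ref{eq: SGD dynamics with B}) gives
\[
\zeta(t+1) - \zeta(t) \;=\; -\frac{\eta}{B}\sum_{n\in\mathcal{B}(t)} \ell'(\mathbf{w}(t)^\top \mathbf{x}_n)\, \mathbf{w}_*^\top \mathbf{x}_n \;\geq\; \frac{\eta \gamma_*}{B}\sum_{n\in\mathcal{B}(t)} \bigl(-\ell'(\mathbf{w}(t)^\top \mathbf{x}_n)\bigr) \;\geq\; 0,
\]
using $\ell' < 0$ and $\mathbf{w}_*^\top \mathbf{x}_n \geq \gamma_*$. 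So $\zeta(t)$ is non-decreasing along \emph{every} sample path, covering both sampling regimes without any probabilistic input.

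Next I would split on the behavior of $\zeta(t)$. If $\zeta(t) \to \infty$, then $\|\mathbf{w}(t)\| \geq \zeta(t)/\|\mathbf{w}_*\| \to \infty$ and we are done. Otherwise $\zeta(t)$ is bounded and hence converges, so the telescoping increments are summable and must tend to zero. The inequality above then forces $\sum_{n\in\mathcal{B}(t)} (-\ell'(\mathbf{w}(t)^\top \mathbf{x}_n)) \to 0$; since each summand is non-negative, every term must vanish, i.e.\ $\max_{n\in\mathcal{B}(t)} \bigl(-\ell'(\mathbf{w}(t)^\top \mathbf{x}_n)\bigr) \to 0$ as $t \to \infty$.

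The key auxiliary fact I would extract from Assumption~\ref{assum: loss properties} is: for every $M > 0$ there is $\epsilon > 0$ such that $-\ell'(u) < \epsilon$ implies $u > M$. This is where the otherwise peculiar condition $\limsup_{u\to-\infty} \ell'(u) \neq 0$ enters: combined with continuity and strict positivity of $-\ell'$, and with $-\ell'(u) \to 0$ only at $+\infty$, it yields a uniform positive lower bound for $-\ell'$ on any half-line $(-\infty, M]$. Applied to our setting this gives $\min_{n\in\mathcal{B}(t)} \mathbf{w}(t)^\top \mathbf{x}_n \to +\infty$ in the bounded case.

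Finally, to close by contradiction, I would suppose $\|\mathbf{w}(t_k)\| \leq M$ along some subsequence; picking any $n_k \in \mathcal{B}(t_k)$ (non-empty since $B \geq 1$), Cauchy--Schwarz yields $\mathbf{w}(t_k)^\top \mathbf{x}_{n_k} \leq M \max_n \|\mathbf{x}_n\|$, contradicting the uniform divergence just obtained. The whole argument is pathwise, so no probabilistic reasoning is required in the with-replacement case; the only subtle point is the auxiliary lemma on $-\ell'$ and its derivation from Assumption~\ref{assum: loss properties}, which is exactly why that assumption is stated the way it is.
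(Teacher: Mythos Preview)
Your proposal is correct and follows essentially the same approach as the paper: track $\mathbf{w}_*^\top \mathbf{w}(t)$, observe monotonicity from $\ell'<0$ and separability, and split on whether this inner product diverges or stays bounded, invoking Cauchy--Schwarz in either branch. Your version is in fact more careful than the paper's, which glosses over the role of the $\limsup_{u\to-\infty}\ell'(u)\neq 0$ hypothesis; you correctly isolate it as the ingredient that gives a uniform positive lower bound for $-\ell'$ on every left half-line, which is exactly what is needed to pass from $-\ell'(\mathbf{w}(t)^\top\mathbf{x}_n)\to 0$ to $\mathbf{w}(t)^\top\mathbf{x}_n\to\infty$.
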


\begin{proof}
Since the data is linearly separable, $\exists\mathbf{w}_{*}$ such
that $\forall n:\,\mathbf{w}_{*}\mathbf{x}_{n}>0$. We examine the
dot product of $\mathbf{w}^{*}$ with the iterates of SGD
\[
\mathbf{w}_{*}^{\top}\mathbf{w}\left(t\right)=\mathbf{w}_{*}^{\top}\mathbf{w}\left(0\right)-\frac{\eta}{B}\sum_{u=0}^{t-1}\sum_{n\in\mathcal{B}\left(u\right)}\ell^{\prime}\left(\mathbf{x}_{n}^{\top}\mathbf{w}\left(u\right)\right)\mathbf{w}_{*}^{\top}\mathbf{x}_{n}\,.
\]
Since $\forall n:\,\mathbf{w}_{*}\mathbf{x}_{n}>0$ and $-\ell'(u)> 0$ for any finite $u$, we get that either
$\mathbf{w}_{*}^{\top}\mathbf{w}\left(t\right)\rightarrow\infty$ or
$\ell^{\prime}\left(\mathbf{x}_{n}^{\top}\mathbf{w}\left(u\right)\right)\rightarrow0$.
In the first case, from Cauchy-Shwartz 
\[
\left\Vert \mathbf{w}\left(t\right)\right\Vert \geq\left\Vert \mathbf{w}_{*}^{\top}\mathbf{w}\left(t\right)\right\Vert /\left\Vert \mathbf{w}_{*}\right\Vert \rightarrow\infty\,.
\]
In the second case, since $-\ell^{\prime}\left(u\right)$ is strictly
positive for any finite value, and achieves zero only at $u\rightarrow\infty$,
we must have $\mathbf{x}_{n}^{\top}\mathbf{w}\left(t\right)\rightarrow\infty$,
which again implies 
\[
\left\Vert \mathbf{w}\left(t\right)\right\Vert \geq\left\Vert \mathbf{x}_{n}^{\top}\mathbf{w}\left(t\right)\right\Vert /\left\Vert \mathbf{x}_{n}\right\Vert \rightarrow\infty\,.
\] 
Combing both cases, we prove the theorem.
\end{proof}

As the weights go to infinity, we wish to understand the asymptotic behavior of the loss. As the next theorem shows, if the fixed learning rate $\eta$ is sufficiently small, then we get that the loss converges to zero.

\begin{restatable}{theorem}{LRasymptotic}
\label{thm: Main Theorem 1}Let $\mathbf{w}\left(t\right)$ be the
iterates of SGD (eq. \ref{eq: SGD dynamics with B}) from any starting point $\mathbf{w}(0)$, where samples are either \emph{(case 1)} selected randomly with replacement (Assumption \ref{assm:SGD Sampling with replacement})) and with learning rate \begin{equation}
\frac{\eta}{B}<\frac{2\gamma^{2}}{\beta\sigma_{\max}^{4}}\,, \label{eq: eta condition with replacement}
\end{equation}
or \emph{(case 2)} sampled without replacement (Assumption \ref{assm:SGD Sampling})) and with  learning rate \begin{equation}
\frac{\eta}{B}<\min\left[\frac{1}{2K\beta\sigma_{\max}^{2}},\frac{\gamma}{2\beta\sigma_{\max}^{3}\left( K+\gamma^{-1}\sigma_{\max}\right)}\right] \,. \label{eq: eta condition}
\end{equation}
For linearly separable data (Assumption \ref{assum: Linear sepereability}), and smooth-monotone loss function (Assumption \ref{assum: loss properties}), we have the following, almost surely (with probability $1$) in the first case, and surely in the second case:
\begin{enumerate}
\item The loss converges to zero:  \[\lim_{t\rightarrow\infty}\mathcal{L}\left(\mathbf{w}\left(t\right)\right)=0,\]
\item All samples are correctly classified, given sufficiently long time: \[\forall n: \lim_{t\rightarrow\infty}\mathbf{w}\left(t\right)^{\top}\mathbf{x}_{n}=\infty,\]
\item The iterates of SGD are square summable: \[\sum_{t=0}^{\infty}\left\Vert \mathbf{w}\left(t+1\right)-\mathbf{w}\left(t\right)\right\Vert ^{2}<\infty.\] 
\end{enumerate}
\end{restatable}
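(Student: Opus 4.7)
The plan is to show, in both cases, that $\sum_{t\ge 0}\|\nabla\mathcal{L}(\w(t))\|^2<\infty$ (almost surely in Case~1, surely in Case~2); the three conclusions then follow from linear separability together with Assumption~\ref{assum: loss properties}. Normalizing $\w_*$ to unit norm so that $\w_*^\top\x_n\ge\gamma$, the key inequality is $\|\nabla\mathcal{L}(\w)\|\ge -\w_*^\top\nabla\mathcal{L}(\w)=\sum_n|\ell'(\w^\top\x_n)|\,\w_*^\top\x_n \ge \gamma\sum_n|\ell'(\w^\top\x_n)|$. Hence $\|\nabla\mathcal{L}(\w(t))\|\to 0$ forces each $|\ell'(\w(t)^\top\x_n)|\to 0$, and since Assumption~\ref{assum: loss properties} makes $\ell'$ vanish only as $u\to\infty$, we conclude $\w(t)^\top\x_n\to\infty$, giving Claims~1 and~2. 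Claim~3 follows from the pointwise bound $\|\w(t+1)-\w(t)\|^2 \le (\eta/B)^2\sigma_{\max}^2\bigl(\sum_{n\in\mathcal{B}(t)}|\ell'(\w(t)^\top\x_n)|\bigr)^2$ combined with the smoothness-implied $\ell'(u)^2\le 2\beta\ell(u)$; this reduces Claim~3 to summability of $\mathcal{L}(\w(t))$, which will itself be a byproduct of the same descent argument.

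For Case~1 (sampling with replacement), I apply the smoothness-based descent lemma to the $\beta\sigma_{\max}^2$-smooth function $\mathcal{L}$ and take conditional expectation over the random minibatch given $\w(t)$. Because each sample lies in the minibatch with probability $B/N$, the linear-term expectation is $-(\eta/N)\|\nabla\mathcal{L}(\w(t))\|^2$. For the quadratic noise term I will use the pointwise inequality $\|g(t)\|\le (\sigma_{\max}/\gamma)|\w_*^\top g(t)|$, which is obtained by chaining the elementary upper bound $\|g(t)\|\le(\sigma_{\max}/B)\sum_{n\in\mathcal{B}(t)}|\ell'(\w(t)^\top\x_n)|$ with the separability-induced lower bound $-\w_*^\top g(t)\ge(\gamma/B)\sum_{n\in\mathcal{B}(t)}|\ell'(\w(t)^\top\x_n)|$, and which encodes that every stochastic gradient points into the half-space where $\w_*$ makes progress. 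Coupling this with a further separability bound on $|\w_*^\top g(t)|$ in terms of $|\w_*^\top\nabla\mathcal{L}(\w(t))|$ should give $\E[\|g(t)\|^2\mid\w(t)]\le C\,\|\nabla\mathcal{L}(\w(t))\|^2$, and the learning-rate condition $\eta/B<2\gamma^2/(\beta\sigma_{\max}^2)$ is precisely what is needed so that $\E[\mathcal{L}(\w(t+1))\mid\w(t)]\le\mathcal{L}(\w(t))-c\|\nabla\mathcal{L}(\w(t))\|^2$ for some $c>0$. Doob's martingale convergence theorem then yields almost-sure convergence of $\mathcal{L}(\w(t))$ and summability of $\|\nabla\mathcal{L}(\w(t))\|^2$.

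For Case~2 (sampling without replacement) the randomness vanishes but is replaced by deterministic intra-epoch drift. My plan is to sum the single-step smoothness inequalities across the $K$ minibatches of one epoch, writing $\w_u:=\w(Ku)$. The aggregated linear terms telescope into $-(\eta/B)\|\nabla\mathcal{L}(\w_u)\|^2$ plus a drift correction coming from the discrepancy between $\ell'(\w(Ku+k)^\top\x_n)$ and $\ell'(\w_u^\top\x_n)$, while the quadratic smoothness terms aggregate to $O(K(\eta/B)^2)$. The drift is controlled by iterating $|\ell'(\w(Ku+k)^\top\x_n)-\ell'(\w_u^\top\x_n)|\le\beta\sigma_{\max}\|\w(Ku+k)-\w_u\|$ together with a triangle-inequality bound on $\|\w(Ku+k)-\w_u\|$ in terms of the preceding per-step updates. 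The first condition $\eta/B<1/(2K\beta\sigma_{\max}^2)$ tames the aggregated quadratic smoothness contribution, while the second $\eta/B<\gamma/[2\beta\sigma_{\max}^3(K+\gamma^{-1}\sigma_{\max})]$ uses separability to make the descent gain dominate the accumulated drift. The main obstacle, and the reason two separate learning-rate conditions appear in the theorem statement, is precisely this balancing act: without strong convexity or a Polyak--Lojasiewicz inequality, separability is the only available surrogate for a lower bound on $\|\nabla\mathcal{L}\|$, and the intra-epoch drift over $K$ steps must remain a controlled fraction of the per-epoch descent gain — a requirement that can only be enforced by simultaneously tightening both learning-rate constraints.
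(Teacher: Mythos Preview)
Your overall architecture matches the paper: in both cases you establish $\sum_t\|\nabla\mathcal{L}(\w(t))\|^2<\infty$ via a smoothness-based descent inequality, with separability furnishing the crucial lower bound on $\|\nabla\mathcal{L}\|$ in terms of the individual $|\ell'(\w^\top\x_n)|$. Your $L_1$-type inequality $\|\nabla\mathcal{L}(\w)\|\ge\gamma\sum_n|\ell'(\w^\top\x_n)|$ is correct and in fact implies the paper's $L_2$ version (Lemma~\ref{lem: min NN PCA}), namely $\|\nabla\mathcal{L}(\w)\|\ge\gamma\bigl(\sum_n(\ell'(\w^\top\x_n))^2\bigr)^{1/2}$. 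The Case~2 sketch is also on the right track and parallels the paper's Lemma~\ref{lem: norm bounds}.

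However, there are two concrete problems.

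\textbf{Case~1 quadratic term: your route loses a factor of $K$.} Chaining the pointwise bounds $\|g(t)\|\le(\sigma_{\max}/\gamma)|\w_*^\top g(t)|$ and $|\w_*^\top g(t)|\le B^{-1}|\w_*^\top\nabla\mathcal{L}(\w(t))|$ gives $\E[\|g(t)\|^2\mid\w(t)]\le\sigma_{\max}^2\gamma^{-2}B^{-2}\|\nabla\mathcal{L}(\w(t))\|^2$. Combined with the linear term $-(\eta/N)\|\nabla\mathcal{L}\|^2$, the descent condition becomes $\eta/B<2\gamma^2/(K\beta\sigma_{\max}^4)$, which depends on $K$ and does \emph{not} recover the stated $K$-independent threshold~\eqref{eq: eta condition with replacement}. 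The paper instead bounds $\|g(t)\|^2\le\sigma_{\max}^2 B^{-2}\sum_{n\in\mathcal{B}(t)}(\ell')^2$, takes conditional expectation \emph{first} (picking up the factor $B/N$ from $\E z_{t,n}^2$), and only \emph{then} applies the separability bound $\sum_n(\ell')^2\le\gamma^{-2}\|\nabla\mathcal{L}\|^2$. The order matters: applying the $\gamma$-bound after expectation is what makes the $1/K$ in the quadratic term match the $1/K$ in the linear term and cancel.

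\textbf{Claim~3: summability of $\mathcal{L}(\w(t))$ is not a byproduct.} The descent argument yields $\sum_t\|\nabla\mathcal{L}(\w(t))\|^2<\infty$, not $\sum_t\mathcal{L}(\w(t))<\infty$; there is no general bound $\ell(u)\le C|\ell'(u)|$ under Assumption~\ref{assum: loss properties} (e.g.\ logistic loss as $u\to-\infty$). You do not need it anyway: from your own key inequality, $\|\w(t+1)-\w(t)\|\le(\eta/B)\sigma_{\max}\sum_{n\in\mathcal{B}(t)}|\ell'|\le(\eta/B)(\sigma_{\max}/\gamma)\|\nabla\mathcal{L}(\w(t))\|$, so square-summability of the increments follows directly from that of $\|\nabla\mathcal{L}(\w(t))\|$, exactly as in the paper.
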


The complete proof of this theorem is given in section \ref{sec:proof of theorem 1} in the appendix. The proof relies on the following key lemma
\begin{lemma}\label{lem: min NN PCA}
The $L_{2}$ max margin lower bounds the minimal ``non-negative
right eigenvalue'' of $\mathbf{X}$
\begin{equation} 
\gamma=\max_{\mathbf{w}\in\mathbb{R}^{d}:\left\Vert \mathbf{w}\right\Vert =1}\min_{n}\mathbf{w}^{\top}\mathbf{x}_{n} \leq \min_{\mathbf{v}\in\mathbb{R}_{\geq0}^{d}:\left\Vert \mathbf{v}\right\Vert =1}\left\Vert \mathbf{X}\mathbf{v}\right\Vert \label{eq: min NN PCA}
\end{equation}
\end{lemma}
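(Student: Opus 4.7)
\textbf{Proof plan for Lemma \ref{lem: min NN PCA}.} The plan is to lower bound $\|\mathbf{X}\mathbf{v}\|$ by projecting $\mathbf{X}\mathbf{v}$ onto the max margin direction, and then exploit the fact that $\mathbf{v}$ lies on the nonnegative unit sphere to replace the $\ell_2$ normalization by an $\ell_1$ one. Throughout I will write $\mathbf{X}\mathbf{v}=\sum_{n=1}^{N} v_n \mathbf{x}_n$, so that nonnegativity of $\mathbf{v}$ plays directly against the uniform positive-margin bound $\mathbf{w}^{\top}\mathbf{x}_n\ge\gamma$.

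First, let $\mathbf{w}^{*}$ be a unit vector attaining the maximum in the definition of $\gamma$, so that $\mathbf{w}^{*\top}\mathbf{x}_n\ge\gamma$ for every $n$. For any $\mathbf{v}\in\mathbb{R}_{\ge 0}^{N}$ with $\|\mathbf{v}\|=1$, Cauchy--Schwarz applied to $\mathbf{w}^{*}$ and $\mathbf{X}\mathbf{v}$ gives $\|\mathbf{X}\mathbf{v}\|\ge \mathbf{w}^{*\top}\mathbf{X}\mathbf{v}$, since $\|\mathbf{w}^{*}\|=1$. Expanding the right-hand side and using $v_n\ge 0$ together with the margin inequality yields
\[
\mathbf{w}^{*\top}\mathbf{X}\mathbf{v} \;=\; \sum_{n=1}^{N} v_n\, \mathbf{w}^{*\top}\mathbf{x}_n \;\ge\; \gamma \sum_{n=1}^{N} v_n .
\]

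It remains to show that $\sum_{n} v_n \ge 1$ under the constraints $v_n\ge 0$ and $\sum_n v_n^2 = 1$. The key observation is that each coordinate satisfies $v_n^2 \le \sum_m v_m^2 = 1$, so $0\le v_n \le 1$, which implies $v_n\ge v_n^2$ termwise. Summing gives $\sum_n v_n \ge \sum_n v_n^2 = 1$. Combining the three displayed inequalities yields $\|\mathbf{X}\mathbf{v}\|\ge \gamma$ for every admissible $\mathbf{v}$, and taking the minimum over such $\mathbf{v}$ completes the proof.

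The argument is essentially a one-line chain and I do not anticipate a genuine obstacle; the only subtle point worth flagging is the step $\sum_n v_n \ge \sum_n v_n^2$, which silently uses that the constraint region is the intersection of the unit $\ell_2$ ball with the nonnegative orthant (so that every coordinate is bounded by $1$). Without nonnegativity this step would fail, which is exactly why the lemma bounds $\gamma$ only by the minimum over the \emph{nonnegative} unit sphere rather than by the smallest singular value of $\mathbf{X}$.
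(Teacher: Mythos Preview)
Your proof is correct and follows essentially the same approach as the paper: project $\mathbf{X}\mathbf{v}$ onto the max-margin direction via Cauchy--Schwarz, use the margin bound $\mathbf{w}_*^\top\mathbf{x}_n\ge\gamma$ together with $v_n\ge 0$, and then invoke $\sum_n v_n \ge \|\mathbf{v}\|_2 = 1$. The only cosmetic difference is that the paper phrases the last step as $\sum_n v_n \ge \sqrt{\sum_n v_n^2}$ (calling it a triangle inequality), whereas you justify it termwise via $v_n\ge v_n^2$ for $v_n\in[0,1]$; these are the same fact.
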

\begin{proof} In this proof we define \textbf{$\mathbf{v^*}$ }as the minimizer
of the right hand side of eq. \ref{eq: min NN PCA}, and $\mathbf{w}_{*}$ as the maximizer of the optimization problem on the left hand side of the same equation. On the one hand 
\begin{equation}
\mathbf{w}_{*}^{\top}\mathbf{X}\mathbf{v}^*\overset{\left(1\right)}{\leq}\left\Vert \mathbf{w}_{*}\right\Vert \left\Vert \mathbf{X}\mathbf{v}^*\right\Vert \overset{\left(2\right)}{=}\min_{\mathbf{v}\in\mathbb{R}_{\geq0}^{d}:\left\Vert \mathbf{v}\right\Vert =1}\left\Vert \mathbf{X}\mathbf{v}\right\Vert \,,\label{eq: wXv 1}
\end{equation}
where in $\left(1\right)$ we used Cauchy-Shwartz inequality, and
in $\left(2\right)$ we used the definition of $\mathbf{v}^*$, and
that $\left\Vert \mathbf{w}_{*}\right\Vert =1$. On the other hand,
\begin{equation}
\mathbf{w}_{*}^{\top}\mathbf{X}\mathbf{v}^*\overset{\left(1\right)}{\geq}\gamma\sum_{n=1}^{N}v_{n}^*\overset{\left(2\right)}{\geq}\gamma\sqrt{\sum_{n=1}^{N}(v_{n}^*)^2} \overset{\left(3\right)}{=}\gamma\,,\label{eq: wXv 2}
\end{equation}
where in $\left(1\right)$ we used the definition of the $L_{2}$ max
margin from the left hand side of eq. \ref{eq: min NN PCA} and $\mathbf{v}^* \in \mathbb{R}^d_+$, in $\left(2\right)$ we used
that $v_{n}\geq0$ and the triangle inequality, and in  $\left(3\right)$  we used that $\left\Vert \mathbf{v}\right\Vert =1$. Together,
eqs. \ref{eq: wXv 1} and \ref{eq: wXv 2} imply the Lemma.
\end{proof}

This Lemma is useful since the SGD weight increments in eq. \ref{eq: SGD dynamics with B} have the form $\mathbf{Xv}$, where $\mathbf{v}$ is some vector with non-negative components. This enables us to bound the norm of the SGD updates using the norm of the full gradient, which allows us to use similar analysis as for GD. Additionally, we note the regime we analyze in Theorem \ref{thm: Main Theorem 1} is somewhat unusual,  as the weight vector goes to infinity. In many previous works it is assumed that there exists a finite critical point, or that the weights are bounded within a compact domain.

\paragraph{Theorem \ref{thm: Main Theorem 1} Implications.} In both sampling regimes, we obtained that a fixed (non-vanishing) learning rate results in convergence to zero error. In the case of random sampling with replacement (Assumption \ref{assm:SGD Sampling with replacement}) we got a better upper bound on the learning rate (eq. \ref{eq: eta condition with replacement}), which does not depend on $K$. Interestingly, this bound matches the empirical findings of  \citet{Goyal2017,SmithLe2018}, which observed that in a large range $\eta \propto B$. Interestingly, in our case the relation $\eta \propto B$ holds exactly for all $B$ in the maximum learning rate (eq. \ref{eq: eta condition with replacement}). In contrast, for linear regression, the relation becomes sub-linear for large $B$  \citep{Ma2017}.

We also considered here the case when the datapoints are sampled without replacement (Assumption \ref{assm:SGD Sampling}). This is in contrast to most theoretical SGD results, which typically assume sampling with replacement (which is less common in practice). There are a few notable exceptions (\cite{Geary2001,Bertsekas2015,Shamir2016}, and references therein). Perhaps the most similar previous result is the classical result of (Proposition 2.1 in \cite{Geary2001}), which has a similar sampling schedule, and in which the weights can go to infinity. However, in this result the learning rate must go to zero for the SGD iterates to converge. In our case, we are able to relax this assumption since we focus on linear classification with a monotone loss and separable data. 

When assuming sampling without replacement (Assumption \ref{assm:SGD Sampling}) the learning rate bound  (eq. \ref{eq: eta condition}) becomes significantly lower --- roughly proportional to $1/K$. This is because such a sampling assumption is very pessimistic (e.g., the samples can be selected by an adversary). Therefore, a small (yet non vanishing) learning rate is required to guarantee convergence. Such a dependence on $K$ is expected, since in this case we need to use a incremental gradient method type of proof, where such low learning rates are common. For example, in \citet{Bertsekas2015} Proposition 3.2b, to get a low final error we must have a learning rate $\eta \ll 1/K^2$.

% \begin{restatable}{lemma}{lem: norm bounds}
% a
% \end{restatble}

% The main reason for this is that the "discretization error" terms (the difference from the continuous time limit) depend on the gradients of the loss function. 

\section{MAIN RESULT 2: THE WEIGHT VECTOR DIRECTION CONVERGES TO THE MAX MARGIN}	\begin{figure*}
\begin{centering}
\includegraphics[width=1\textwidth]{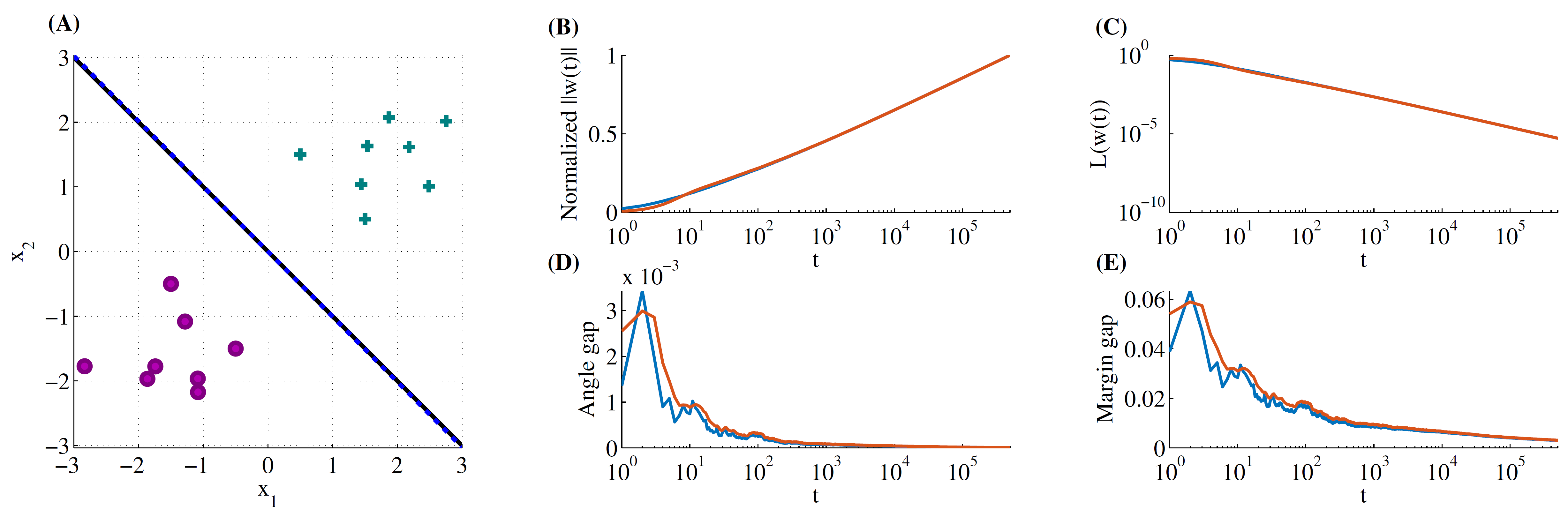} 
\par\end{centering}
\caption{\textbf{Visualization of Theorem \ref{thm: direction convergence to SVM} on a synthetic dataset in which the
$L_{2}$ max margin vector $\hat{\mathbf{w}}$ is precisely known}.
\textbf{(A)} The dataset (positive and negatives samples ($y=\pm1$)
are respectively denoted by $'+'$ and $'\circ'$), max margin separating
hyperplane (black line), and the asymptotic solution of SGD (dashed
blue). For both SGD (blue line) and SGD with momentum (orange line), we show: \textbf{(B)
}The norm of $\mathbf{w}\left(t\right)$, normalized so it would equal
to $1$ at the last iteration, to facilitate comparison. As expected
(from eq. \ref{define wVec}), the norm increases logarithmically;
\textbf{(C) }the training loss. As expected, it decreases as $t^{-1}$
(eq. \ref{eq: logistic loss convergence}); and \textbf{(D\&E) }the
angle and margin gap of $\mathbf{w}\left(t\right)$ from $\hat{\mathbf{w}}$
(eqs. \ref{eq: angle} and \ref{eq: margin}). As expected, these
are logarithmically decreasing to zero. Figure reproduced from \citet{soudry2017implicit}. We also observe similar behavior with different input dimension $d$. This is demonstrated in Figure \ref{fig: SGD results in higher dimension} \label{fig:Synthetic-dataset}}
\end{figure*}
Next, we focus on a special case of monotone loss functions:
	\begin{definition} \label{def: exponential tail} A function $f(u)$ has a ``tight exponential tail", if there exist positive constants $\mu_+,\mu_-$, and $\bar{u}$ such that $\forall u>\bar{u}$:
		$$(1-\e(-\mu_- u))e^{-u} \le f(u) \le (1+\e(-\mu_+ u))e^{-u} $$ 
	\end{definition}
	\begin{assm} \label{assum: tight exp-tail}
	  The negative loss derivative $-\ell'(u)$ has a tight exponential tail.
	\end{assm}
    
Specifically, this applies to the logistic loss function. Given this additional assumption, we prove that SGD converges to the $L_2$ max margin solution.  
    % \mnote{Daniel, please note the change in $\wvec(t)$}\dnote{noted, good :)}
    \begin{theorem} \label{thm: direction convergence to SVM}
	For almost all datasets for which the assumptions of Theorem \ref{thm: Main Theorem 1} hold, if $-\ell'(u)$ has a tight exponential tail (Assumption \ref{assum: tight exp-tail}), then the iterates of SGD, for any $\wvec(0)$, will behave as:
	\begin{equation} \label{define wVec}
	\wvec(t) = \what \log\left(\frac{\eta}{B}\cdot \frac{t}{K}\right)+\bm{\rho}(t),
	\end{equation}
	where $\hat{\mathbf{w}}$ is the following $L_{2}$ max margin separator:
	\begin{equation}
	\hat{\mathbf{w}}=\underset{\mathbf{\mathbf{w}}\in\mathbb{R}^{d}}{\mathrm{argmin}}\left\lVert \mathbf{w}\right\rVert^2 \,\,\mathrm{s.t.}\,\,\mathbf{w}^{\top}\mathbf{x}_{n}\geq1,
	\end{equation}
	and the residual $\Vert\bm{\rho}(t)\Vert$  is bounded almost surely in the first case of Theorem \ref{thm: Main Theorem 1} (random sampling with replacement), or surely in the second case (sampling without replacement). 
\end{theorem}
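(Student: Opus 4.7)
My strategy is to mirror the gradient-descent analysis of \citet{soudry2017implicit} by substituting the ansatz $\wvec(t)=\what\log(\eta t/(BK))+\bm{\rho}(t)$ into the SGD recursion (eq.~\ref{eq: SGD dynamics with B}) and then showing that the residual stays bounded. Theorem~\ref{thm: Main Theorem 1} already guarantees $\wvec(t)^{\top}\xn\to\infty$ for every $n$, so the tight exponential tail of $-\ell'$ (Assumption~\ref{assum: tight exp-tail}) lets me replace $-\ell'(\wvec(t)^{\top}\xn)$ by $\exp(-\wvec(t)^{\top}\xn)$ up to a multiplicative $(1+O(e^{-\mu\wvec(t)^{\top}\xn}))$ correction whose accumulated effect along the trajectory is finite.

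Under the ansatz, $\exp(-\wvec(t)^{\top}\xn)=(\eta t/(BK))^{-\what^{\top}\xn}\exp(-\xn^{\top}\bm{\rho}(t))$. The KKT conditions for the max-margin SVM force $\what^{\top}\xn=1$ for support vectors $n\in\set$ and $\what^{\top}\xn>1$ for the rest, so the non-support contributions to the SGD increment decay strictly faster than $1/t$ and are summable. Matching the leading $\what/t$ from $\what\log(1+1/t)$ against the support-vector part of the minibatch gradient (via the identity $\tfrac{\eta}{B}\cdot(\eta t/(BK))^{-1}=K/t$) yields a residual recursion of the form
\begin{equation}
\bm{\rho}(t+1)-\bm{\rho}(t)=\tfrac{1}{t}\Big[-\what+K\!\!\sum_{n\in\set\cap\c{B}(t)}\!\!e^{-\xn^{\top}\bm{\rho}(t)}\xn\Big]+\epsilon(t),
\end{equation}
where $\epsilon(t)$ absorbs the non-support terms, the tail correction, and higher-order $\log$ expansion, and satisfies $\sum_t\|\epsilon(t)\|<\infty$. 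Taking the minibatch expectation collapses the $K$ factor (each $n$ appears with probability $B/N=1/K$), and the mean drift vanishes at any $\bm{\rho}^{*}$ solving $\xn^{\top}\bm{\rho}^{*}=-\log\alpha_n$ for $n\in\set$, where $\what=\sum_{n\in\set}\alpha_n\xn$ with $\alpha_n>0$ is the KKT decomposition; generically (almost all datasets) the support vectors are linearly independent, so $\bm{\rho}^{*}$ exists, and linearising around it shows the mean drift is locally contractive.

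I would handle the two sampling regimes differently. Under sampling without replacement, every $K$ consecutive iterations visit each sample exactly once, so summing the recursion over one epoch reproduces the GD drift exactly, up to a within-epoch correction of size $O(1/t^2)$ per epoch (summable over $t$); this gives a sure bound on $\|\bm{\rho}(t)\|$. Under sampling with replacement, the same aggregation only holds in expectation, so I would decompose $\bm{\rho}(t+1)-\bm{\rho}(t)=\E[\bm{\rho}(t+1)-\bm{\rho}(t)\mid\mathcal{F}_t]+M(t)$: the martingale difference $M(t)$ has conditional variance $O(1/t^2)$, so $\sum_t M(t)$ converges almost surely, and combining with the contractive mean drift yields the a.s.\ $O(1)$ bound. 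The main obstacle I anticipate is avoiding circularity when establishing that $\bm{\rho}(t)$ actually enters the linearisation neighbourhood of $\bm{\rho}^{*}$; the natural fix is first to leverage Theorem~\ref{thm: Main Theorem 1} (square-summability of increments and $\c{L}\to 0$) to obtain a weak a priori bound such as $\|\bm{\rho}(t)\|=o(\log t)$, and only then to bootstrap to the uniform $O(1)$ bound using the exponential-tail structure together with linear independence of the support vectors.
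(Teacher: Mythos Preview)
Your overall plan (substitute the ansatz, use the exponential tail to isolate the support-vector contribution, and show the residual stays bounded) matches the paper's architecture, but the way you propose to handle the stochasticity contains a real gap, and the paper resolves it by a different device than the one you sketch.

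The problem is the circularity you yourself flag. In your recursion the martingale difference has conditional variance of order $t^{-2}\max_{n\in\set}e^{-2\xn^\top\bm{\rho}(t)}$, so the ``$O(1/t^2)$'' claim already presupposes that $\xn^\top\bm{\rho}(t)$ is bounded below --- exactly what you are trying to prove. Your proposed bootstrap, deducing $\|\bm{\rho}(t)\|=o(\log t)$ from Theorem~\ref{thm: Main Theorem 1}, does not go through: square-summability of $\|\wvec(t+1)-\wvec(t)\|$ only yields $\|\wvec(t)\|=O(\sqrt{t})$ via Cauchy--Schwarz, and $\mathcal{L}\to 0$ gives no rate on $\xn^\top\wvec(t)$, so there is no a priori control preventing $\xn^\top\bm{\rho}(t)\to-\infty$ along some subsequence. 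The same issue bites the without-replacement argument: the within-epoch correction is proportional to the gradient magnitude, which again involves $e^{-\xn^\top\bm{\rho}(t)}$.

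The paper sidesteps the circularity by \emph{not} subtracting the deterministic $\hat{\mathbf{w}}/t$. Instead it subtracts the \emph{random} quantity $\tfrac{K}{t}\sum_{n\in\set\cap\mathcal{B}(t)}\alpha_n\xn$, i.e., it defines the residual relative to the running sum $K\sum_{u<t}u^{-1}\sum_{n\in\set\cap\mathcal{B}(u)}\alpha_n\xn$. This has two effects. First, all randomness is pushed into a pure random-series lemma (their Lemma~\ref{lem:: sum sv aux lemma}) showing that this running sum equals $\log(t/K)\hat{\mathbf{w}}$ plus a bounded constant $\check{\mathbf{w}}$ plus $o(1)$; crucially the summands $\alpha_n\xn/u$ do not depend on the trajectory, so Hoeffding/martingale bounds apply with no circularity, and $\check{\mathbf{w}}$ is allowed to depend on \emph{future} sampling choices. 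Second, the residual increment now pairs each sampled support vector's $e^{-\xn^\top\rvec(t)}$ term against its own $\alpha_n$, so the inner product $(\rvec(t+1)-\rvec(t))^\top\rvec(t)$ contains terms of the form $\tfrac{K}{t}\alpha_n\big(e^{-\xn^\top\rvec(t)}-1\big)\xn^\top\rvec(t)$, which are globally nonpositive up to summable corrections by the elementary inequality $(e^{-x}-1)x\le 0$. This Lyapunov-type bound (their Lemma~\ref{lem: (r(t+1)-r(t))r(t) bound}) holds \emph{without any assumption on the size of $\rvec(t)$}, so combined with $\sum_t\|\rvec(t+1)-\rvec(t)\|^2<\infty$ (which does follow from Theorem~\ref{thm: Main Theorem 1}) it gives boundedness of $\|\rvec(t)\|$ directly. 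If you want to repair your argument, this pathwise-matched subtraction is the missing ingredient.
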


Thus, from Theorem \ref{thm: direction convergence to SVM}, for almost any linearly separable data set (e.g., with probability 1 if the data is sampled from an absolutely continuous distribution) , the normalized weight vector converges to the normalized max margin vector, i.e., 
    \[\lim_{t\rightarrow\infty}\frac{\mathbf{w}\left(t\right)}{\left\Vert \mathbf{w}\left(t\right)\right\Vert }=\frac{\hat{\mathbf{w}}}{\left\Vert \hat{\mathbf{w}}\right\Vert }\]
with rate $1/\log(t)$, identically to GD \citep{soudry2017implicit}. Interestingly, the number of minibatches per epoch $K$ affects only the constants. Intuitively, this is reasonable, since if we rescale the time units, then the log term in eq. \ref{define wVec} will only add a constant to the residual $\rho(t)$.

\paragraph{Proof idea.} The theorem is proved in appendix section \ref{sec: proof of theorem 2}.
The proof builds on the results of  \cite{soudry2017implicit} for GD: as the weights diverge, the loss converges to zero, and only the gradients of the support vector remain significant. This implies that the gradient direction, as a positive linear combination of support vectors converges to the direction of the max margin. The main difficulty in extending the proof to the case of SGD is that at each iteration, $\wvec(t)$ is updated using only a subset of the data points. This could potentially lead to large difference from the GD solution. However, conceptually, we show that this difference of $\mathbf{w}(t)$ from the GD dynamics solution is $O(1)$ in $t$. The main novel idea here is that in order to calculate this $O(1)$ difference at time $t$, we use information on sampling selections made in the future, i.e. at times larger than $t$.

\paragraph{Convergence Rates.} Theorem \ref{thm: direction convergence to SVM} directly implies the same convergence rates as in GD \citep{soudry2017implicit}. Specifically, in the $L_2$ distance
\begin{equation}
\left\Vert \frac{\mathbf{w}\left(t\right)}{\left\Vert \mathbf{w}\left(t\right)\right\Vert }-\frac{\hat{\mathbf{w}}}{\left\Vert \hat{\mathbf{w}}\right\Vert }\right\Vert =O\left(\frac{1}{\log t}\right)\,,\label{eq: normalized weight vector}
\end{equation}
in the angle
\begin{equation}
1-\frac{\mathbf{w}\left(t\right)^{\top}\hat{\mathbf{w}}}{\left\Vert \mathbf{w}\left(t\right)\right\Vert \left\Vert \hat{\mathbf{w}}\right\Vert }=O\left(\frac{1}{\log^2 t}\right)\,,\label{eq: angle}
\end{equation}
and in the margin gap
\begin{equation}
\frac{1}{\left\Vert \hat{\mathbf{w}}\right\Vert }-\frac{\min_{n}\mathbf{x}_{n}^{\top}\mathbf{w}\left(t\right)}{\left\Vert \mathbf{w}\left(t\right)\right\Vert }=O\left(\frac{1}{\log t}\right)\,.\label{eq: margin}
\end{equation}
On the other hand, the loss itself decreases as
\begin{equation}
\mathcal{L}\left(\mathbf{w}\left(t\right)\right)=O\left({\frac{1}{t}}\right)\,.\label{eq: logistic loss convergence}
\end{equation}
In Figure \ref{fig:Synthetic-dataset} we visualize these results. Additionally, in Figure \ref{fig: minibatch} we observe that the convergence rates remain nearly the same for different minibatch sizes --- as long as we linearly scale the learning rate with the minibatch size, i.e. $\eta \propto B$. This behavior fits with the behavior of the maximal learning rate for which SGD converge in the case of sampling with replacement (eq. \ref{eq: eta condition with replacement}). However, it is not clear from Theorem \ref{thm: direction convergence to SVM} why the convergence rate stays almost exactly the same with such a linear scaling, since we do not know how does $\rho(t)$ depends on $\eta$ and $B$.
In the special case where the SVM support vectors span the dataset, we can further characterize $\rho(t)$ asymptotic dependence on $\eta$ and $B$. We define $\op\in\mathbb{R}^{d\times d}$ as the orthogonal projection matrix to the subspace spanned by the support vectors, and $\bar{\op}=\vect{I}-\op$ as the complementary projection. In addition, we denote $\alpha_{n}$
as the SVM dual variables so $\what=\sum_{n\in\set}\alpha_{n}\xn$.
%This is stated in the following theorem.

% \mnote{New Theorem}\dnote{remind, why do have this Theorem? Was this to claim that when the support vectors span the data, then we have perfect linear scaling (since asymptotically the $\rho$ does not depend on K)?}\mnote{Yes.} \dnote{OK, then add context, and corollary saying this. Also, connect to figures where we show this.}
% \mnote{Do you have a discussion about this somewhere? (you told me it's in remove and to remind you to add it)}\dnote{no, I didn't find anything.}
\begin{restatable}{theorem}{LRasymptotic2}
\label{thm: refined Theorem} Under the conditions and notation of
Theorem \ref{thm: direction convergence to SVM}, for almost all datasets, if in addition the support vectors
span the data (\emph{i.e.}~$\mathrm{rank}\left(\mathbf{X}_{\set}\right)=\mathrm{rank}\left(\mathbf{X}\right)$, 
where $\mathbf{X}_{\set}$ is a matrix whose columns are only those data points $\x_n$ s.t.~$\what^\top\x_n=1$), then $\lim_{t\rightarrow\infty}\boldsymbol{\rho}\left(t\right)=\tilde{\mathbf{w}}$,
where $\tilde{\mathbf{w}}$ is a solution to 
\begin{equation}
\forall n\in\set:\,\exp\left(-\mathbf{x}_{n}^{\top}\tilde{\mathbf{w}}\right)=\alpha_{n}\,, \ \bar{\op}\left(\wtilde-\wvec(0)\right)=0\,.\label{eq: w tilde}
\end{equation}
\end{restatable}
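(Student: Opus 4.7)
The plan is to split the analysis by the projection $\op$ onto the span of the support vectors and its complement $\bar\op$. The $\bar\op$ component is essentially automatic: by the rank assumption every $\xn$ lies in $\mathrm{range}(\op)$, so every SGD increment does too, giving $\bar\op\wvec(t)=\bar\op\wvec(0)$ identically in $t$. Since $\what=\sum_{n\in\set}\alpha_n\xn$ sits in $\mathrm{range}(\op)$, we have $\bar\op\what=0$; applying $\bar\op$ to the decomposition $\wvec(t)=\what\log((\eta/B)(t/K))+\bm\rho(t)$ then yields $\bar\op\bm\rho(t)=\bar\op\wvec(0)$ for all $t$, which is the second equation of \eqref{eq: w tilde}.

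For the in-span component, start from Theorem~\ref{thm: direction convergence to SVM}: for $n\in\set$, $\xnT\wvec(t)=\log((\eta/B)(t/K))+\xnT\bm\rho(t)$, while for $n\notin\set$, $\xnT\what>1$ forces $\xnT\wvec(t)\ge c_n\log t+O(1)$ with some $c_n>1$. The tight-exponential-tail assumption gives $-\ell'(u)=e^{-u}(1+O(e^{-\mu u}))$, so non-support-vector gradient contributions are $o(1/t)$ and summable, whereas support-vector contributions reduce to $\frac{BK}{\eta t}e^{-\xnT\bm\rho(t)}\xn$ up to a multiplicative $O(1/t^\mu)$ correction. Plugging into the SGD update and subtracting $\what(\log(t+1)-\log t)=\what/t+O(1/t^2)$, the residual evolves as
\begin{equation*}
\bm\rho(t+1)-\bm\rho(t)=-\tfrac{1}{t}\nabla_{\rho}\Psi(\bm\rho(t))+\xi_t,\qquad \Psi(\rho):=\sum_{n\in\set}e^{-\xnT\rho}+\what^{\top}\rho,
\end{equation*}
with $\xi_t$ a cumulative error term. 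The function $\Psi$ is strictly convex on $\mathrm{range}(\op)$ because the support vectors span it, and its critical-point equation $\sum_{n\in\set}e^{-\xnT\rho}\xn=\what=\sum_{n\in\set}\alpha_n\xn$ has, for almost all datasets (where the support vectors are linearly independent), the unique solution characterized by $e^{-\xnT\rho}=\alpha_n$ --- which is the first equation of \eqref{eq: w tilde}.

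The main obstacle is turning this drift identification into genuine convergence of $\bm\rho(t)$. I would take the Lyapunov function $V(t)=\Psi(\op\bm\rho(t))-\Psi(\op\wtilde)\ge 0$ and establish $\E[V(t+1)-V(t)\mid\mathcal{F}_t]\le -c\,V(t)/t+\varepsilon_t$ with $\sum_t\varepsilon_t<\infty$. The error $\varepsilon_t$ bundles: (i) the stochastic-sampling fluctuation $\Delta\wvec(t)-\E[\Delta\wvec(t)\mid\mathcal{F}_t]$, whose variance is $O(1/t^2)$ under sampling with replacement and which telescopes across each epoch under sampling without replacement; (ii) the exponential-tail correction of size $O(1/t^{1+\mu})$; and (iii) the non-support-vector contributions, already shown to be $o(1/t)$ and summable. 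A Robbins--Siegmund / harmonic-series argument then yields $V(t)\to 0$ --- almost surely in case~1 and surely in case~2 of Theorem~\ref{thm: Main Theorem 1} --- giving $\op\bm\rho(t)\to\op\wtilde$, which combined with the $\bar\op$ part proves $\bm\rho(t)\to\wtilde$.
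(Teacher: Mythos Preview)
Your $\bar\op$ argument matches the paper's. For the $\op$ component you take a genuinely different route.

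The paper does not run Robbins--Siegmund on $\bm\rho(t)$. It passes instead to an auxiliary residual $\rvec(t)$ obtained by subtracting from $\wvec(t)$ the \emph{sample-path} sum $K\sum_{u<t}u^{-1}\sum_{n\in\set\cap\mathcal{B}(u)}\alpha_n\xn$ (not its expectation), together with $\log(\eta)\what+\wtilde+\check{\mathbf{w}}$. Lemma~\ref{lem:: sum sv aux lemma} shows this sample-path sum equals $\log(t/K)\what+\check{\mathbf{w}}+o(1)$, where $\check{\mathbf{w}}$ is a finite constant depending on the \emph{entire future} sampling sequence. This look-ahead trick absorbs the stochastic fluctuation once and for all, so the remaining analysis is purely pathwise: the Lyapunov function is the quadratic $\|\rvec(t)\|^2$, and the second part of Lemma~\ref{lem: (r(t+1)-r(t))r(t) bound} (already established in the proof of Theorem~\ref{thm: direction convergence to SVM}) supplies the strict descent $(\rvec(t+1)-\rvec(t))^\top\rvec(t)\le -C_4/t$ whenever $\|\op\rvec(t)\|\ge\epsilon_1$. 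A short squeeze/contradiction argument then forces $\|\rvec(t)\|\to 0$.

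Your stochastic-approximation reading --- mean field $-\nabla\Psi$, convex potential $V=\Psi-\Psi^\star$, PL on the bounded set guaranteed by Theorem~\ref{thm: direction convergence to SVM}, then Robbins--Siegmund --- is a legitimate alternative and arguably more conceptual; it should go through for sampling with replacement. Its soft spot is sampling without replacement: there is no martingale filtration for Robbins--Siegmund, and your one line that the fluctuation ``telescopes across each epoch'' hides real work (you would need within-epoch control of $\bm\rho$ analogous to Lemma~\ref{lem: norm bounds}, and then run your Lyapunov argument on the epoch timescale). The paper's look-ahead device handles both regimes with the same pathwise proof, at the price of the somewhat unusual future-dependent constant $\check{\mathbf{w}}$.
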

The theorem is proved in appendix section \ref{sec: proof of theorem 3}. Note that $\wtilde$ is only dependent on the dataset and the initialization. This fact enables us to state the following result for the asymptotic behavior of SGD.
\begin{corollary} \label{corollary: GD refined result}
    Under the conditions and notation of Theorem \ref{thm: refined Theorem}, SGD iterate will behave as:
    \[
	    \wvec(t) = \what \log\left(\frac{\eta}{B}\cdot \frac{t}{K}\right)+\wtilde+o(1)\,,
	\]
	where $\what$ is the maximum-margin separator, $\wtilde$ is the solution of eq. \ref{eq: w tilde} (which does not depend on $K,\, \eta$ and $B$), and $o(1)$ is a vanishing term. Therefore, if the step size is kept proportional to the minibatch size, \textit{i.e.}, $\eta\propto B$, changing the number of minibatches $K$ is equivalent to linearly re-scaling the time units of $t$.
\end{corollary}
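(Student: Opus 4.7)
The plan is to derive Corollary \ref{corollary: GD refined result} as a direct algebraic consequence of Theorems \ref{thm: direction convergence to SVM} and \ref{thm: refined Theorem}, followed by a brief observation about the logarithm. Theorem \ref{thm: direction convergence to SVM} already supplies the decomposition $\wvec(t) = \what\log\bigl(\tfrac{\eta}{B}\cdot\tfrac{t}{K}\bigr) + \bm{\rho}(t)$, and Theorem \ref{thm: refined Theorem} --- applied under the additional hypothesis that the support vectors span the data --- gives $\lim_{t\rightarrow\infty}\bm{\rho}(t) = \wtilde$, i.e.\ $\bm{\rho}(t) = \wtilde + o(1)$. Substituting this into the decomposition immediately yields the claimed asymptotic expression, which completes the first half of the statement.

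For the second half I would first verify that $\wtilde$ itself does not depend on $K$, $\eta$, or $B$. This follows by direct inspection of its defining equations (eq.\ \ref{eq: w tilde}): the conditions $\exp(-\xnT\wtilde) = \alpha_n$ on $\set$ and $\bar{\op}(\wtilde - \wvec(0)) = 0$ involve only the dataset (through the support vector set $\set$, the dual variables $\alpha_n$, and the projections $\op, \bar{\op}$) and the initial iterate $\wvec(0)$. With this in hand, the time-rescaling claim reduces to a property of the logarithm: when $\eta/B = c$ is held fixed, the argument of the log becomes $c t / K$, so comparing two runs with $K$ and $K'$ minibatches at iterations related by $t' = (K'/K)\, t$ makes the log terms coincide; because $\what$ and $\wtilde$ also coincide across the two runs, the iterate sequences differ only by the $o(1)$ terms. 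Equivalently, when expressed in epochs $u = t/K$, the asymptotic formula reads $\what \log(c u) + \wtilde + o(1)$, which is manifestly independent of $K$.

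\textbf{Main obstacle.} I do not anticipate a real technical obstacle here --- the corollary is a repackaging of the two preceding theorems combined with one logarithm identity. The only point that deserves care is confirming, by inspection of eq.\ \ref{eq: w tilde}, that $\wtilde$ is indeed free of $K$, $\eta$, and $B$; once this is established, both claims follow in essentially one line of algebra each.
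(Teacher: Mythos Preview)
Your proposal is correct and matches the paper's own reasoning essentially line for line: the paper does not give a separate formal proof of the corollary but simply notes, immediately before stating it, that $\wtilde$ depends only on the dataset and the initialization, and treats the rest as an immediate consequence of Theorems \ref{thm: direction convergence to SVM} and \ref{thm: refined Theorem} together with the logarithm identity.
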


% \mnote{
%  I think we should add $\bar{\op}\left(\wtilde-\wvec(0)\right)=0$ to the Theorem}.\dnote{but it is already there. Is this an obsolete comment?}\mnote{I added it yesterday.}\dno
% e\OK then. Good with me.}
From the corollary, we expect the same asymptotic convergence rates for all batch sizes $B$ as long as we scale the learning rate linearly with the batch size, \textit{i.e.}, keep $\eta\propto B$. This is exactly the behavior we observe in Figure \ref{fig: minibatch}. Since changing the number of minibatches is equivalent to linearly re-scaling the time units, smaller $K$ implies faster asymptotic convergence assuming full parallelization capabilities (i.e. the minibatch size does not affect the iterate time). Additionally, note that the corollary only guarantees the same asymptotic behavior. Particularly, different initializations and datasets can exhibit different behavior initially. It remains an interesting direction for future work to understand $\rho(t)$ dependence on $\eta$ and $B$, in the case when the support vectors do not span the dataset.

Lastly, for logistic regression loss, the validation loss (calculated on an independent validation set $\mathcal{V}$) increases as
\[\mathcal{L}_{\mathrm{val}}\left(\mathbf{w}\left(t\right)\right)=\sum_{\x\in\mathcal{V}}\ell\left(\mathbf{w}\left(t\right)^{\top}\mathbf{x}\right)=\Omega(\log (t) ).\]
Notably, as was observed in \cite{soudry2017implicit}, these asymptotic rates also match what we observe numerically for the convnet in Figure \ref{fig: DNN results}: the training loss decreases as $1/t$, the validation loss increases as $\log(t)$, and the validation (classification) improves very slowly, similarly to the logarithmic decay of the angle gap (so the convnet might have a similarly slow decay to its respective implicit bias). 
\remove{
\begin{figure*}
\begin{centering}
\includegraphics[width=1\textwidth]{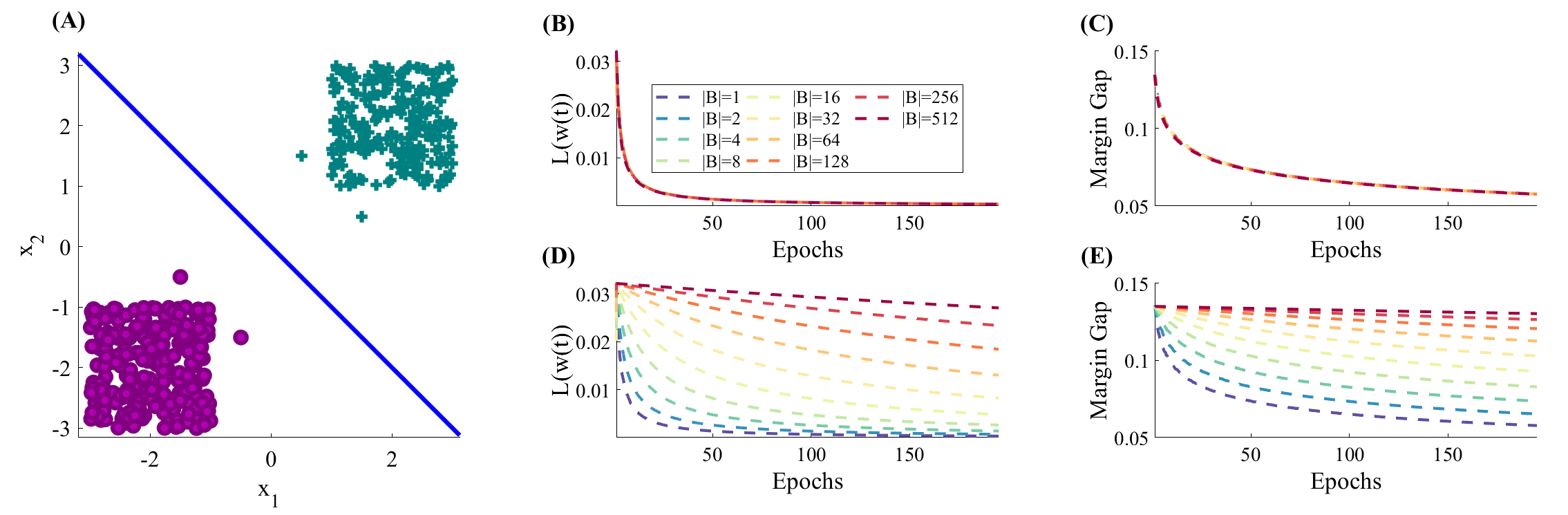} 
\par\end{centering}
\caption{\textbf{We observe the convergence rate of SGD remains almost exactly the same for all minibatch sizes when the learning rate is proportional to the minibatch size} ($\eta=0.01NB$ in figures B and C Vs $\eta=0.01N$ in figures D and E). We initialized $\mathbf{w}(0)$ to be a standard normal vector, and then (for all $B$) we did a single GD iteration with a small learning rate of $\eta=0.01$. The reason for this single iteration is that we wanted to avoid the initial instability of the high learning rate. We used a dataset  \textbf{(A)} with $N=512$ samples divided into two classes, and with the same support vectors as in Figure \ref{fig:Synthetic-dataset}. The convergence of the loss \textbf{(B)} and margin \textbf{(C)} is practically identical for all minibatch sizes. When we used a fixed learning rate, the convergence rate was different \textbf{(D-E)}. } \label{fig: minibatch}
\end{figure*}
}
\begin{figure*}
\begin{centering}
\includegraphics[width=1\textwidth]{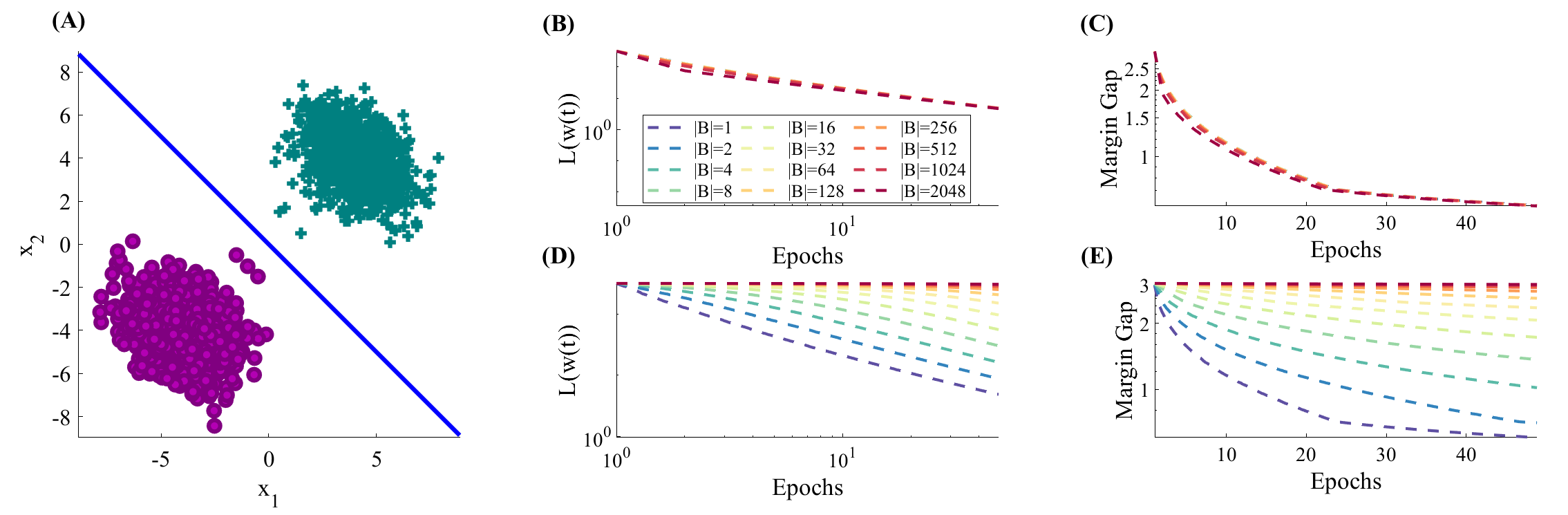} 
\par\end{centering}
\caption{\textbf{We observe the convergence rate of SGD remains almost exactly the same for all minibatch sizes when the learning rate is proportional to the minibatch size} ($\eta=\frac{2\gamma^{2}}{\beta\sigma_{\max}^{2}}B$ in panels \textbf{B} and \textbf{C}, vs. $\eta=\frac{2\gamma^{2}}{\beta\sigma_{\max}^{2}}$ in panels \textbf{D} and \textbf{E}). We initialized $\mathbf{w}(0)$ to be a standard normal vector. We used a dataset  \textbf{(A)} with $N=2048$ samples divided into two classes, and with the same support vectors as in Figure \ref{fig:Synthetic-dataset}. The convergence of the loss \textbf{(B)} and margin \textbf{(C)} is practically identical for all minibatch sizes. When we used a fixed learning rate, the convergence rate was different \textbf{(D-E)}. } \label{fig: minibatch}
\end{figure*}

\section{DISCUSSION AND RELATED WORKS}
In Theorem \ref{thm: Main Theorem 1} we proved that for monotone smooth loss functions on linearly separable data, the iterates of SGD with a sufficiently small (but non-vanishing) learning rate converge to zero loss. In contrast to typical convergence to finite critical points, in this case, the "noise" inherent in SGD vanishes asymptotically. Therefore, we do not need to decrease the learning rate, or average the SGD iterates, to ensure exact convergence. Decaying the learning rate during training will only decrease the convergence speed of the loss. 

To the best of our knowledge, such exact convergence result previously required that either (1) the loss function is partially strongly convex,  i.e. strongly convex except on some subspace (where the dynamics are frozen), as shown in \citep{Ma2017} for the case of over-parameterized linear regression (with more parameters then samples); or (2) that the Polyak-Lojasiewicz (PL) condition applies \citep{Bassily2018}. However, in this paper we do not require such conditions, which does not hold for deep networks, even in the vicinity of the (finite or infinite) critical points. Moreover, the dependence of the learning rate on the minibatch size is different, as we discuss next. 

We proved Theorem \ref{thm: Main Theorem 1} both for random sampling with replacement (Assumption \ref{assm:SGD Sampling with replacement}) and for sampling without replacement (Assumption \ref{assm:SGD Sampling}). In the first case, eq. \ref{eq: eta condition with replacement} implies that, to guarantee convergence, we need to increase the learning rate proportionally to the minibatch size. In the second case (sampling without replacement) the learning rate bound (eq. \ref{eq: eta condition}) is more pessimistic, since our assumption is more general (e.g., it includes adversarial sampling).

In Theorem \ref{thm: direction convergence to SVM}, we proved, given the additional assumption of an exponential tail (e.g., as in logistic regression), that for almost all datasets the weight vector converges to the $L_2$ max margin in direction as $1/\log(t)$, and that the training loss converges to zero as $1/t$. We believe these results could be extended for every dataset, using the techniques of \cite{soudry2018journal}. Again, decaying the learning rate will only degrade the convergence speed to the max margin direction. In fact, the results of \cite{Nacson2018} indicate that we may need to \emph{increase} the learning rate to improve convergence: For GD, \cite{Nacson2018} proved that this can drastically improve the convergence rate from $1/\log(t)$ to $\log(t)/\sqrt{t}$. It is yet to be seen if such results might also be applied to deep networks.

In Theorem \ref{thm: refined Theorem} we further characterized the weights asymptotic behaviour under the additional assumption that the SVM support vectors span the dataset. Combining the results from Theorem \ref{thm: direction convergence to SVM} and Theorem \ref{thm: refined Theorem} we obtain Corollary \ref{corollary: GD refined result}. This corollary states that, under linear scaling of the learning rate with the batch size, the asymptotic convergence rate of SGD, in terms of epochs, is not affected by the mini-batch size.

Thus, we have shown that exact linear scaling of the learning rate with the minibatch size ($\eta\propto B$) is beneficial in two ways: (a) in Theorem \ref{thm: Main Theorem 1} for the upper bound of the learning rate in the case of of random sampling with replacement (b) in Corollary \ref{corollary: GD refined result} for the asymptotic behaviour of the weights assuming tight exponential loss function and that the SVM support vectors span the data.
This exact linear scaling, stands in contrast to previous theoretical results with exact convergence \citep{Ma2017}, in which there exists a "saturation limit". Above this limit we should not increase the learning rate linearly with the minibatch size, or the convergence rate will be degraded, and eventually we will loose the convergence guarantee. 
%In \note{Theorem \ref{thm: Main Theorem 1}} we only proved such a linear scaling $\eta\propto B$ exists for the maximum learning rate for which we are guaranteed to converge. However, in 
As predicted by Corollary \ref{corollary: GD refined result}, in Figure \ref{fig: minibatch} we observe that with a linear scaling $\eta\propto B$, the convergence plots exactly match: as we can see, there is almost no asymptotic difference between different minibatch sizes. Therefore, in contrast to \citet{Ma2017}, there is no "optimal" minibatch size. In this case, to minimize the number of SGD iterations we should use the largest minibatch possible. This will speed up convergence in wall clock time (as was done in \citet{Goyal2017,SmithLe2018}) if it is possible to parallelize the calculation of a minibatch  --- so one SGD update with a minibatch of size $MB$ takes less time then $M$ updates of SGD with minibatch of size $B$.

An early version of this manuscript previously appeared on arxiv. However, it had only the results in the case of sampling without replacement, and no Theorem \ref{thm: refined Theorem}. Two other related SGD results appeared on arXiv in parallel (with less than a week difference). 

First, \citet{ji2018risk} analyzed logistic regression optimized by SGD on separable data (in addition to other results on GD when the data is non-separable). \citet{ji2018risk} also assume a fixed learning rate, but use averaging of the iterates (which is known to enable exact convergence). They focus on the case in which the datapoints are independently sampled from a separable distribution, while we focused on the case of sampling from a fixed dataset. They show, that with high probability, the population risk converges to zero as $\tilde{O}(1/t)$. As explained in \citet{ji2018risk}, such a fast rate was proven before only for strongly convex loss functions (the logistic loss is not strongly convex). We showed a similar rate, but for the empirical risk (eq. \ref{eq: logistic loss convergence}). We additionally showed that the weight vector converges in direction to the direction of the $L_2$ max margin. 

Second, among other results, \citet{Xu2018} also examined optimizing logistic regression with SGD on a fixed dataset using random sampling with replacement, iterate averaging and a vanishing learning rate. There, in Theorems 3.2 and 3.3, it is shown that the expectation of the loss converges as $\tilde{O}(1/t)$ and the expectation of the averaged iterates converges in the norm as $O(1/\sqrt{\log(t)}$, which is slower than our result. Thus, in contrast to both works \cite{ji2018risk,Xu2018}, we did not assume iterate averaging or decreasing learning rate. Additionally, our new results on sampling with replacement give a linear relationship between the learning rate and the minibatch size, and Corollary \ref{corollary: GD refined result} shows the affect of the minibatch size on the asymptotic convergence rate.

\section{CONCLUSIONS}
We found that for logistic regression with no bias on separable data, SGD behaves similarly to GD in terms of the implicit bias and convergence rate. The only difference is the maximum possible learning rate should change proportionally to the minibatch size. It remains to be seen if this also holds for deep networks.

% basically, we deal with showing that all these issues from SGD are negligible in Lemma 5 and then use these result to prove convergence similarly to the method used in ICLR paper

\subsubsection*{Acknowledgements}

The authors are grateful to C. Zeno, and I. Golan for
helpful comments on the manuscript. This research was supported by the Israel Science foundation (grant No. 31/1031), and by the Taub foundation. A Titan Xp used for this research was donated by the NVIDIA Corporation. NS was partially supported by NSF awards IIS-1302662 and IIS-1764032.

%\bibliography{Biblio}
%\bibliographystyle{plainnat}
\bibliographystyle{plainnat_no_URL}
	%\pagebreak
	\newpage
	\onecolumn
	\appendix

\part*{Appendix}

For simplicity of notation, in the appendix we absorb the constant $1/B$ in the SGD dynamics into the learning rate dynamics:
\begin{equation}
\mathbf{w}\left(t+1\right)=\mathbf{w}\left(t\right)-\eta\sum_{n\in\mathcal{B}\left(t\right)}\ell^{\prime}\left(\mathbf{w}\left(t\right)^{\top}\mathbf{x}_{n}\right)\mathbf{x}_{n},\label{eq: gradient descent linear}
\end{equation}
In the main paper, we modify the results proven in the appendix to fit the SGD dynamics in eq. \ref{eq: SGD dynamics with B}. 

\section{Proof of Theorem \ref{thm: Main Theorem 1} \label{sec:proof of theorem 1}}	

Our proof relies on Lemma \ref{lem: min NN PCA}. Specifically, since we assumed $\ell^{\prime}(u)<0$, this Lemma implies that
\begin{equation}  \left\Vert \nabla\mathcal{L}\left(\mathbf{w}\left(t\right)\right)\right\Vert =\left\Vert \sum_{n=1}^{N}\ell^{\prime}\left(\mathbf{x}_{n}^{\top}\mathbf{w}\left(t\right)\right)\mathbf{x}_{n}\right\Vert \geq\gamma\sqrt{\sum_{n=1}^{N}\left(\ell^{\prime}\left(\mathbf{x}_{n}^{\top}\mathbf{w}\left(t\right)\right)\right)^{2}} \, . \label{eq:normgradbatch}
\end{equation}
Next, we will rely on this key fact to prove our results for each case.

\subsection{Case 1: Random sampling with replacement}
From the $\beta$-smoothness of the loss
\[
\mathcal{L}\left(\mathbf{w}\left(t+1\right)\right)-\mathcal{L}\left(\mathbf{w}\left(t\right)\right)\leq\nabla\mathcal{L}\left(\mathbf{w}\right)^{\top}\left(\mathbf{w}\left(t+1\right)-\mathbf{w}\left(t\right)\right)+\frac{\beta\sigma_{\max}^2}{2}\left\Vert \mathbf{w}\left(t+1\right)-\mathbf{w}\left(t\right)\right\Vert ^{2}
\]

Taking expectation, we have
\begin{align*}
 & \E\mathcal{L}\left(\mathbf{w}\left(t+1\right)\right)-\mathcal{\E L}\left(\mathbf{w}\left(t\right)\right)\\
 & \leq\E\left[\nabla\mathcal{L}\left(\mathbf{w}\left(t\right)\right)^{\top}\left[\left(\mathbf{w}\left(t+1\right)-\mathbf{w}\left(t\right)\right)\right]\right]+\frac{\beta\sigma_{\max}^2}{2}\E\left\Vert \mathbf{w}\left(t+1\right)-\mathbf{w}\left(t\right)\right\Vert ^{2}\\
 & \overset{\left(1\right)}{\leq}\E\left[\E\left[\nabla\mathcal{L}\left(\mathbf{w}\left(t\right)\right)^{\top}\left(\mathbf{w}\left(t+1\right)-\mathbf{w}\left(t\right)\right)|\mathbf{w}\left(t\right)\right]\right]+\frac{\beta\sigma_{\max}^2}{2}\E\left\Vert \sum_{n=1}^{N}z_{t,n}\ell^{\prime}\left(\mathbf{w}\left(t\right)^{\top}\mathbf{x}_{n}\right)\mathbf{x}_{n}\right\Vert ^{2}\\
 & \overset{\left(2\right)}{\leq}-\eta\frac{1}{K}\E\left\Vert \nabla\mathcal{L}\left(\mathbf{w}\left(t\right)\right)\right\Vert ^{2}+\frac{\beta\sigma_{\max}^{4}}{2}\eta^{2}\E\left[\sum_{n=1}^{N}z_{t,n}^{2}\left(\ell^{\prime}\left(\mathbf{w}\left(t\right)^{\top}\mathbf{x}_{n}\right)\right)^{2}\right]\\
 & \overset{\left(3\right)}{\leq}-\eta\frac{1}{K}\E\left\Vert \nabla\mathcal{L}\left(\mathbf{w}\left(t\right)\right)\right\Vert ^{2}+\frac{\beta\sigma_{\max}^{4}}{2}\eta^{2}\frac{1}{K}\sum_{n=1}^{N}\E\left(\ell^{\prime}\left(\mathbf{w}\left(t\right)^{\top}\mathbf{x}_{n}\right)\right)^{2}\\
 & \overset{\left(4\right)}{\leq}-\eta\frac{1}{K}\E\left\Vert \nabla\mathcal{L}\left(\mathbf{w}\left(t\right)\right)\right\Vert ^{2}+\frac{\beta\sigma_{\max}^{4}}{2\gamma^{2}}\eta^{2}\frac{1}{K}\E\left\Vert \nabla\mathcal{L}\left(\mathbf{w}\left(t\right)\right)\right\Vert ^{2}\\
 & =-\frac{1}{K}\eta\left(1-\frac{\beta\sigma_{\max}^{4}}{2\gamma^{2}}\eta\right)\E\left\Vert \nabla\mathcal{L}\left(\mathbf{w}\left(t\right)\right)\right\Vert ^{2}\,,
\end{align*}
where in $\left(1\right)$ we defined $z_{t,n}$ as a random variable
equal to $1$ if sample $n$ is selected at time $t$, or $0$ otherwise,
in $\left(2\right)$ we used the definition of $\sigma_{\max}$, in
$\left(3\right)$ we used $\E z_{t,n}^{2}=\E z_{t,n}=K^{-1}$ and
$\E\left[\left(\mathbf{w}\left(t+1\right)-\mathbf{w}\left(t\right)\right)|\mathbf{w}\left(t\right)\right]=K^{-1}\nabla\mathcal{L}\left(\mathbf{w}\right)$,
and in $\left(4\right)$ we used eq. \ref{eq:normgradbatch}.
Therefore, if 
\begin{equation}
\eta<\frac{2\gamma^{2}}{\beta\sigma_{\max}^{4}}\label{eq: eta condition with replacement,appendix}
\end{equation}
then 
\[
q\triangleq\frac{1}{K}\left(1-\frac{\beta\sigma_{\max}^{4}}{2\gamma}\eta\right)>0\,,
\]
and we can write
\[
    \E\left\Vert \nabla\mathcal{L}\left(\mathbf{w}\left(t\right)\right)\right\Vert^{2} \le 
    \frac{\mathcal{\E L}\left(\mathbf{w}\left(t\right)\right)-\E\mathcal{L}\left(\mathbf{w}\left(t+1\right)\right)}{\eta q}.
\] Summing over $t$ we have
 
\begin{equation}
\sum_{t=1}^{\infty}\E\left\Vert \nabla\mathcal{L}\left(\mathbf{w}\left(t\right)\right)\right\Vert ^{2}\leq\frac{\E\mathcal{L}\left(\mathbf{w}\left(1\right)\right)-\lim_{t\rightarrow\infty}\mathcal{\E L}\left(\mathbf{w}\left(t\right)\right)}{\eta q}\leq\frac{\E\mathcal{L}\left(\mathbf{w}\left(1\right)\right)}{\eta q}<\infty\label{eq: inifinite sum of expected square gradients}
\end{equation}
and therefore $\E\left\Vert \nabla\mathcal{L}\left(\mathbf{w}\left(t\right)\right)\right\Vert \rightarrow0$.
Moreover, the Markov inequality, we have
\[
\mathrm{P}\left(\sum_{t=1}^{\infty}\left\Vert \nabla\mathcal{L}\left(\mathbf{w}\left(t\right)\right)\right\Vert ^{2}<c\right)\geq1-\frac{\E\sum_{t=1}^{\infty}\left\Vert \nabla\mathcal{L}\left(\mathbf{w}\left(t\right)\right)\right\Vert ^{2}}{c}\,.
\]
Combining this equation with equation \ref{eq: inifinite sum of expected square gradients}, and
taking the limit of $c$ to $\infty$, we obtain 
\begin{equation}
\mathrm{P}\left(\sum_{t=1}^{\infty}\left\Vert \nabla\mathcal{L}\left(\mathbf{w}\left(t\right)\right)\right\Vert ^{2}<\infty\right)=1\,.\label{eq: probability 1}
\end{equation}
Therefore, with probability 1, we have $\forall n:$ $\mathbf{w}\left(t\right)^{\top}\mathbf{x}_{n}\rightarrow\infty$,
which implies $\mathcal{L}\left(\mathbf{w}\left(t\right)\right)\rightarrow0$.
Moreover, 
\begin{align*}
 & \sum_{t=1}^{\infty}\left\Vert \mathbf{w}\left(t+1\right)-\mathbf{w}\left(t\right)\right\Vert ^{2}=\eta^{2}\sum_{t=1}^{\infty}\left\Vert \sum_{n\in\mathcal{B}\left(t\right)}\ell^{\prime}\left(\mathbf{w}\left(t\right)^{\top}\mathbf{x}_{n}\right)\mathbf{x}_{n}\right\Vert ^{2}\\
\leq & \eta^{2}\sigma_{\max}^{2}\sum_{t=1}^{\infty}\sum_{n\in\mathcal{B}\left(t\right)}\left(\ell^{\prime}\left(\mathbf{w}\left(t\right)^{\top}\mathbf{x}_{n}\right)\right)^{2}\leq\eta^{2}\sigma_{\max}^{2}\sum_{t=1}^{\infty}\sum_{n=1}^{N}\left(\ell^{\prime}\left(\mathbf{w}\left(t\right)^{\top}\mathbf{x}_{n}\right)\right)^{2}\\
\overset{\left(1\right)}{\leq} & \frac{\eta^{2}\sigma_{\max}^{2}}{\gamma^{2}}\sum_{t=1}^{\infty}\left\Vert \nabla\mathcal{L}\left(\mathbf{w}\left(t\right)\right)\right\Vert ^{2}\overset{\left(2\right)}{<}\infty
\end{align*}
where in $\left(1\right)$ we used eq. \ref{eq:normgradbatch}, and
$\left(2\right)$ is true with probability 1 from eq. \ref{eq: probability 1}.

\subsection{Case 2: Sampling without replacement}
Linear separability enforces a lower bound on the norm of these increments (eq. \ref{eq:normgradbatch}, which follows form Lemma \ref{lem: min NN PCA}). This bound enables us to bound the SGD increments, and other related quantities, in terms of the norm of the full gradient (Lemma \ref{lem: norm bounds} below).

\begin{lemma}
\label{lem: norm bounds}For all $t\in\mathbb{N}$ and $k\in\left\{ 0,1,\dots,K\right\} $, such that $t$ and $t+k-1$ are in the same epoch, we have
\begin{align*}
\left\Vert \mathbf{w}\left(t+k\right)-\mathbf{w}\left(t\right)+\eta\nabla\mathcal{L}\left(\mathbf{w}\left(t\right)\right)\right\Vert  & \leq \eta^{2}k\beta\sigma_{\max}^{3}\gamma^{-1}\left[1-\eta k\beta\sigma_{\max}^{2}\right]^{-1}\left\Vert \nabla\mathcal{L}\left(\mathbf{w}\left(t\right)\right)\right\Vert \\
\left\Vert \mathbf{w}\left(t+k\right)-\mathbf{w}\left(t\right)\right\Vert  & \leq
\eta\gamma^{-1}\sigma_{\max}\left[1-\eta k\beta\sigma_{\max}^{2}\right]^{-1}\left\Vert \nabla\mathcal{L}\left(\mathbf{w}\left(t\right)\right)\right\Vert \\
\left\Vert \nabla\mathcal{L}\left(\mathbf{w}\left(t+k\right)\right)-\nabla\mathcal{L}\left(\mathbf{w}\left(t\right)\right)\right\Vert & \leq\eta\beta\gamma^{-1}\sigma_{\max}^{2}\left[1-\eta k\beta\sigma_{\max}^{2}\right]^{-1}\left\Vert \nabla\mathcal{L}\left(\mathbf{w}\left(t\right)\right)\right\Vert \,.
\end{align*}
\end{lemma}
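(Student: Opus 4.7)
The plan is to induct on $k$, driven by the telescoped SGD identity $\mathbf{w}(t+k)-\mathbf{w}(t)=-\eta\sum_{j=0}^{k-1}\nabla_{\mathcal{B}(t+j)}\mathcal{L}(\mathbf{w}(t+j))$; the base case $k=0$ trivially satisfies all three inequalities. My strategy is to handle the second bound first (it is the workhorse), deduce the third bound from it via smoothness, and then establish the first bound by a second-order analysis that exploits cancellation over a full epoch.

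For the second bound I would rewrite the update by freezing every partial gradient at the reference point $\mathbf{w}(t)$:
\[
\mathbf{w}(t+k)-\mathbf{w}(t)=-\eta\nabla_{\mathcal{S}_k}\mathcal{L}(\mathbf{w}(t))-\eta\sum_{j=0}^{k-1}[\nabla_{\mathcal{B}(t+j)}\mathcal{L}(\mathbf{w}(t+j))-\nabla_{\mathcal{B}(t+j)}\mathcal{L}(\mathbf{w}(t))],
\]
where $\mathcal{S}_k=\bigcup_{j=0}^{k-1}\mathcal{B}(t+j)\subseteq\{1,\ldots,N\}$ is the set of samples touched up to time $t+k-1$. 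Because $-\ell'(\mathbf{w}(t)^\top\mathbf{x}_n)>0$, the partial gradient $\nabla_{\mathcal{S}_k}\mathcal{L}(\mathbf{w}(t))$ equals $-\mathbf{X}\mathbf{v}$ for some $\mathbf{v}\geq\mathbf{0}$ supported on $\mathcal{S}_k$; combining $\|\mathbf{v}\|\leq\|\mathbf{r}\|$ (entrywise) with the full-gradient estimate $\|\nabla\mathcal{L}(\mathbf{w}(t))\|\geq\gamma\|\mathbf{r}\|$ coming from Lemma \ref{lem: min NN PCA} (see also eq.\ \ref{eq:normgradbatch}) yields $\|\nabla_{\mathcal{S}_k}\mathcal{L}(\mathbf{w}(t))\|\leq(\sigma_{\max}/\gamma)\|\nabla\mathcal{L}(\mathbf{w}(t))\|$. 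Each drift term is bounded by $\beta\sigma_{\max}^2\|\mathbf{w}(t+j)-\mathbf{w}(t)\|$ by smoothness. Plugging the inductive hypothesis $\|\mathbf{w}(t+j)-\mathbf{w}(t)\|\leq f(j)\|\nabla\mathcal{L}(\mathbf{w}(t))\|$ with $f(j)=\eta\sigma_{\max}/[\gamma(1-\eta j\beta\sigma_{\max}^2)]$ into the sum, monotonicity of $f$ allows the coarse step $\sum_{j=0}^{k-1}f(j)\leq kf(k-1)$; a short algebraic check confirms $\eta\sigma_{\max}/\gamma+\eta\beta\sigma_{\max}^2\cdot kf(k-1)\leq f(k)$, closing the induction.

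The third bound follows immediately: $\beta\sigma_{\max}^2$-smoothness of $\nabla\mathcal{L}$ converts the second bound into the claimed control on gradient differences. For the first bound, the crucial observation is that once the minibatch indices sweep the entire epoch, the frozen partial gradient collapses onto the full gradient, $\nabla_{\mathcal{S}_k}\mathcal{L}(\mathbf{w}(t))=\nabla\mathcal{L}(\mathbf{w}(t))$, and the same decomposition rearranges to
\[
\mathbf{w}(t+k)-\mathbf{w}(t)+\eta\nabla\mathcal{L}(\mathbf{w}(t))=-\eta\sum_{j=0}^{k-1}[\nabla_{\mathcal{B}(t+j)}\mathcal{L}(\mathbf{w}(t+j))-\nabla_{\mathcal{B}(t+j)}\mathcal{L}(\mathbf{w}(t))].
\]
Only smoothness drift survives on the right, and applying the triangle inequality together with the already-established second bound for each $\|\mathbf{w}(t+j)-\mathbf{w}(t)\|$ produces exactly the claimed $\eta^2 k$ scaling.

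The main obstacle I anticipate is the bookkeeping inside the induction for the second bound: the recursion couples $a_k=\|\mathbf{w}(t+k)-\mathbf{w}(t)\|$ to the entire history $\{a_j\}_{j<k}$, and preserving the closed form $[1-\eta k\beta\sigma_{\max}^2]^{-1}$ requires the coarse estimate $\sum_{j=0}^{k-1}f(j)\leq kf(k-1)$ together with the monotone comparison $1-\eta(k-1)\beta\sigma_{\max}^2\geq 1-\eta k\beta\sigma_{\max}^2$; these are routine manipulations but easy to misstate. Lemma \ref{lem: min NN PCA} applied to \emph{partial} (nonnegatively combined) gradients is the conceptual engine that allows each subset gradient to be bounded by the full gradient at $\mathbf{w}(t)$; everything else is a careful accounting of smoothness drift terms.
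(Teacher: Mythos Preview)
Your proposal is correct and matches the paper's approach: decompose the increment into the frozen partial gradient plus smoothness drift, bound the former by $(\sigma_{\max}/\gamma)\|\nabla\mathcal{L}(\mathbf{w}(t))\|$ via Lemma \ref{lem: min NN PCA}, and close a recursion on $\|\mathbf{w}(t+k)-\mathbf{w}(t)\|$. The only packaging difference is that the paper isolates the recursion $\delta_k\le\theta+\epsilon\sum_{j<k}\delta_j\Rightarrow\delta_k\le\theta/(1-k\epsilon)$ into a standalone technical lemma (Lemma \ref{lem: delta recursion bound}), proved by unrolling and summing a geometric series, whereas your direct induction with the coarse bound $\sum_{j<k}f(j)\le kf(k-1)$ works just as well. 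Your remark that the first inequality requires the minibatches to sweep all of $\{1,\dots,N\}$ is exactly right: the paper's proof in fact establishes that bound with the partial sum $\sum_{u<k}\sum_{n\in\mathcal{B}(t+u)}\ell'(\mathbf{x}_n^\top\mathbf{w}(t))\mathbf{x}_n$ in place of $\nabla\mathcal{L}(\mathbf{w}(t))$ for general $k$ (eq.\ \ref{eq: inequality Delta W plus Gradient 2}), which coincides with the stated form precisely at $k=K$---the only case invoked downstream.
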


\begin{proof} See appendix section \ref{sec:proof of lemma 4}.
\end{proof}

Together, these bounds enable us to complete the proof. First, we assume that $t$ is the first iteration in some epoch, i.e., $t=uK$ for some $u\in\{0,1,2,\dots\}$. The $\beta$-smoothness of the loss function
$\ell\left(u\right)$ (Assumption \ref{assum: loss properties}),
implies that $\mathcal{L}\left(\mathbf{w}\left(t\right)\right)$ is
$\beta\sigma_{\max}^{2}$-smooth. This entails that
\begin{align}
 & \mathcal{L}\left(\mathbf{w}\left(t+K\right)\right)-\mathcal{L}\left(\mathbf{w}\left(t\right)\right)-\frac{\beta\sigma_{\max}^{2}}{2}\left\Vert \mathbf{w}\left(t+K\right)-\mathbf{w}\left(t\right)\right\Vert ^{2}\nonumber \\
 & \leq\nabla\mathcal{L}\left(\mathbf{w}\left(t\right)\right)^{\top}\left(\mathbf{w}\left(t+K\right)-\mathbf{w}\left(t\right)\right)\nonumber \\
 & =\nabla\mathcal{L}\left(\mathbf{w}\left(t\right)\right)^{\top}\left(-\eta\nabla\mathcal{L}\left(\mathbf{w}\left(t\right)\right)+\mathbf{w}\left(t+K\right)-\mathbf{w}\left(t\right)+\eta\nabla\mathcal{L}\left(\mathbf{w}\left(t\right)\right)\right)\nonumber \\
 & \leq-\eta\left\Vert \nabla\mathcal{L}\left(\mathbf{w}\left(t\right)\right)\right\Vert ^{2}+\left\Vert \nabla\mathcal{L}\left(\mathbf{w}\left(t\right)\right)\right\Vert \left\Vert \mathbf{w}\left(t+K\right)-\mathbf{w}\left(t\right)+\eta\nabla\mathcal{L}\left(\mathbf{w}\left(t\right)\right)\right\Vert \label{eq: smootness inequality}
\end{align}
and therefore, 
\begin{align*}
 & \mathcal{L}\left(\mathbf{w}\left(t+K\right)\right)-\mathcal{L}\left(\mathbf{w}\left(t\right)\right)\\
 & \overset{\left(1\right)}{\leq}-\eta\left\Vert \nabla\mathcal{L}\left(\mathbf{w}\left(t\right)\right)\right\Vert ^{2}+\eta^{2}K\beta\sigma_{\max}^{3}\gamma^{-1}\left[1-\eta K\beta\sigma_{\max}^{2}\right]^{-1}\left\Vert \nabla\mathcal{L}\left(\mathbf{w}\left(t\right)\right)\right\Vert ^2\\
 & +\frac{1}{2}\eta^2\beta\gamma^{-2}\sigma^4_{\max}\left[1-\eta K\beta\sigma_{\max}^{2}\right]^{-2}\left\Vert \nabla\mathcal{L}\left(\mathbf{w}\left(t\right)\right)\right\Vert^2\\
 & =-\eta\left(1-\eta\left(K\beta\sigma_{\max}^{3}\gamma^{-1}\left[1-\eta K\beta\sigma_{\max}^{2}\right]^{-1}+\frac{1}{2}\beta\gamma^{-2}\sigma_{\max}^{4}\left[1-\eta K\beta\sigma_{\max}^{2}\right]^{-2}\right)\right)\left\Vert \nabla\mathcal{L}\left(\mathbf{w}\left(t\right)\right)\right\Vert ^{2} \\
 & \overset{\left(2\right)}{\leq}-\eta\left(1-\eta 2\beta\sigma_{\max}^{3}\gamma^{-1}\left(K+\gamma^{-1}\sigma_{\max}\right)\right)\left\Vert \nabla\mathcal{L}\left(\mathbf{w}\left(t\right)\right)\right\Vert ^{2}\\
  & \overset{\left(3\right)}{=}-\eta\left(1-\eta q\right)\left\Vert \nabla\mathcal{L}\left(\mathbf{w}\left(t\right)\right)\right\Vert ^{2}
\end{align*}
where in $\left(1\right)$ we used eq. \ref{eq: smootness inequality}
and the first two equations in Lemma \ref{lem: norm bounds}, in $\left(2\right)$ we recall
we assumed that $\eta<1/(2K\beta\sigma_{\max}^{2})$ in eq. \ref{eq: eta condition}, and in $(3)$ we denoted $q=2\beta\sigma_{\max}^{3} \gamma^{-1}\left(K+\gamma^{-1}\sigma_{\max}\right)$.
Recall we assumed $\eta q<1$ in eq. \ref{eq: eta condition}. Summing $t$ over $0,K,2K,\dots,$we obtain 
\[
\sum_{u=0}^{\infty}\left\Vert \nabla\mathcal{L}\left(\mathbf{w}\left(uK\right)\right)\right\Vert ^{2}\leq\frac{\mathcal{L}\left(\mathbf{w}\left(0\right)\right)-\lim_{u\rightarrow\infty}\mathcal{L}\left(\mathbf{w}\left(uK\right)\right)}{\eta\left(1-\eta q\right)}\leq\frac{\mathcal{L}\left(\mathbf{w}\left(0\right)\right)}{\eta\left(1-\eta q\right)}<\infty
\]
since $\mathcal{L}\left(\mathbf{w}\right)\geq0$ and $\eta q < 1$ according to our assumption on $\eta$. 

Next, we consider general time $t$ (i.e., not only first iteration at epochs, as we assumed until now). We note that, for any $k$ such that $t+k-1$ is in the same epoch as $t$, we have that
\begin{align*}
\left\Vert \nabla\mathcal{L}\left(\mathbf{w}\left(t+k\right)\right)\right\Vert  & \leq \left\Vert \nabla\mathcal{L}\left(\mathbf{w}\left(t\right)\right)\right\Vert +\left\Vert \nabla\mathcal{L}\left(\mathbf{w}\left(t+k\right)\right)-\nabla\mathcal{L}\left(\mathbf{w}\left(t\right)\right)\right\Vert \\
 & \leq\left(1+\eta\beta\gamma^{-1}\sigma_{\max}^{2}\left[1-\eta k\beta\sigma_{\max}^{2}\right]^{-1}\right)\left\Vert \nabla\mathcal{L}\left(\mathbf{w}\left(t\right)\right)\right\Vert \, ,
\end{align*}
where we used the last equation in Lemma \ref{lem: norm bounds}. 
Thus, combining the last two equations we obtain 
\begin{align}
 & \sum_{u=0}^{\infty}\left\Vert \nabla\mathcal{L}\left(\mathbf{w}\left(u\right)\right)\right\Vert^2 =\sum_{u=0}^{\infty}\sum_{k=0}^{K-1}\left\Vert \nabla\mathcal{L}\left(\mathbf{w}\left(uK+k\right)\right)\right\Vert^2 \nonumber \\
\leq & \left(1+\eta\beta\gamma^{-1}\sigma_{\max}^{2}\left[1-\eta K\beta\sigma_{\max}^{2}\right]^{-1}\right)^{2}K\sum_{u=0}^{\infty}\left\Vert \nabla\mathcal{L}\left(\mathbf{w}\left(uK\right)\right)\right\Vert^2 <\infty \, \label{eq: summability of square gradients}
\end{align}
which also implies that $\left\Vert \nabla\mathcal{L}\left(\mathbf{w}\left(t\right)\right)\right\Vert \rightarrow0$.
Next, we recall eq. \ref{eq:normgradbatch} to obtain
\[
\sqrt{\sum_{n=1}^{N}\left(\ell^{\prime}\left(\mathbf{x}_{n}^{\top}\mathbf{w}\left(t\right)\right)\right)^{2}}\leq\frac{1}{\gamma}\left\Vert \nabla\mathcal{L}\left(\mathbf{w}\left(t\right)\right)\right\Vert \rightarrow0 \,.
\]
Therefore, $\forall n\,:\,\ell^{\prime}\left(\mathbf{x}_{n}^{\top}\mathbf{w}\left(t\right)\right)\rightarrow0$.
Since $(\ell^{\prime}\left(u\right))^2$ is strictly positive, and equal
to zero only at $u\rightarrow\infty$ (from assumption \ref{assum: loss properties}),
we obtain that 
$
\lim_{t\rightarrow\infty}\mathbf{x}_{n}^{\top}\mathbf{w}\left(t\right)=\infty\,.
$

Finally, using eq. \ref{eq:normgradbatch} again, we obtain
\begin{align}
 & \left\Vert \nabla\mathcal{L}\left(\mathbf{w}\left(t\right)\right)\right\Vert  \geq\gamma\sqrt{\sum_{n=1}^{N}\left(\ell^{\prime}\left(\mathbf{x}_{n}^{\top}\mathbf{w}\left(t\right)\right)\right)^{2}}\geq\gamma\sqrt{\sum_{n\in\mathcal{B}\left(t\right)}\left(\ell^{\prime}\left(\mathbf{x}_{n}^{\top}\mathbf{w}\left(t\right)\right)\right)^{2}} \nonumber \\
 & \geq\frac{\gamma}{\sigma_{\max}}\left\Vert \sum_{n\in\mathcal{B}\left(t\right)}\ell^{\prime}\left(\mathbf{x}_{n}^{\top}\mathbf{w}\left(t\right)\right)\mathbf{x}_{n}\right\Vert =\frac{\gamma}{\sigma_{\max}}\eta^{-1}\left\Vert \mathbf{w}\left(t+1\right)-\mathbf{w}\left(t\right)\right\Vert \,.\label{eq: norm sgd increment}
\end{align}

Combining eq. \ref{eq: norm sgd increment} and \ref{eq: summability of square gradients}
we obtain that 
$
\sum_{t=0}^{\infty}\left\Vert \mathbf{w}\left(t+1\right)-\mathbf{w}\left(t\right)\right\Vert ^{2}<\infty\,.
$
\QEDA

\subsection{Proof of Lemma \ref{lem: norm bounds} \label{sec:proof of lemma 4}}
First, we prove the following technical Lemma.
\begin{lemma}
\label{lem: delta recursion bound}Let $\epsilon$ and $\gamma$ be
two positive constants. If 
$
\delta_{k}\le \theta +\epsilon\sum_{u=0}^{k-1}\delta_{u},
$
then
\begin{equation}
\delta_{k}\leq\frac{\theta}{1-k\epsilon}\label{eq: recursion 1}
\end{equation}
 and 
\begin{equation}
\sum_{u=0}^{k-1}\delta_{u}\leq \frac{k\theta}{1-k\epsilon}\,.\label{eq: recursion 2}
\end{equation}
 \end{lemma}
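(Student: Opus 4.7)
The plan is to prove (\ref{eq: recursion 1}) first, by strong induction on $k$, then derive (\ref{eq: recursion 2}) by summing. This is a standard discrete Grönwall-type argument, and throughout we work in the regime $k\epsilon < 1$, which is the only regime in which the stated bounds are meaningful.

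For the base case $k=0$, the hypothesis reduces to $\delta_0 \le \theta$ (the sum is empty), and this matches $\theta/(1-0\cdot\epsilon)$. For the inductive step, assume $\delta_u \le \theta/(1-u\epsilon)$ for every $u < k$. Since $1/(1-u\epsilon)$ is increasing in $u$, each such term satisfies $\delta_u \le \theta/(1-(k-1)\epsilon)$, and therefore
$$\delta_k \le \theta + \epsilon \sum_{u=0}^{k-1}\delta_u \le \theta + \frac{k\epsilon\,\theta}{1-(k-1)\epsilon} = \frac{\theta(1+\epsilon)}{1-(k-1)\epsilon}.$$
It remains to verify $\theta(1+\epsilon)/(1-(k-1)\epsilon) \le \theta/(1-k\epsilon)$, which after cross-multiplying becomes $(1+\epsilon)(1-k\epsilon) \le 1-(k-1)\epsilon$ and simplifies to $k\epsilon^2 \ge 0$ — trivially true.

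For (\ref{eq: recursion 2}), apply (\ref{eq: recursion 1}) at each index $u \le k-1$ and use $1-u\epsilon > 1-k\epsilon$ to conclude $\delta_u \le \theta/(1-u\epsilon) \le \theta/(1-k\epsilon)$; summing the $k$ terms yields the claim. I don't foresee any genuine obstacle — the only mild subtlety is the algebraic verification in the inductive step, which collapses at once to $k\epsilon^2 \ge 0$.
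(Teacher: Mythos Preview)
Your proof is correct, but it takes a genuinely different route from the paper. The paper unrolls the recursion explicitly: substituting the hypothesis into itself $k$ times yields
\[
\delta_k \le \theta\bigl[1 + \epsilon k + \epsilon^2 k(k-1) + \cdots + \epsilon^k k!\bigr] \le \theta \sum_{u=0}^{k}(k\epsilon)^u \le \frac{\theta}{1-k\epsilon},
\]
and the sum bound (\ref{eq: recursion 2}) is then read off from the same chain of inequalities. You instead run a clean strong-induction (discrete Gr\"onwall) argument, bounding each earlier $\delta_u$ by the crudest available value $\theta/(1-(k-1)\epsilon)$ and then verifying the single algebraic inequality $(1+\epsilon)(1-k\epsilon)\le 1-(k-1)\epsilon$. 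Your approach is shorter and more standard; the paper's unrolling is more explicit about where the geometric series comes from but requires tracking nested sums. Both arguments implicitly use $k\epsilon<1$ and that the recursive hypothesis holds at every index up to $k$, which is how the lemma is applied downstream.
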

\begin{proof}
We prove this by direct calculation 
\begin{align*}
\delta_{k} & \le\theta+\epsilon\sum_{u=0}^{k-1}\delta_{u}\leq\theta+\epsilon\sum_{u_{1}=0}^{k-1}\left(\theta+\epsilon\sum_{u_{2}=0}^{u_{1}-1}\delta_{u_{2}}\right)\\
 & \leq\theta+\epsilon\sum_{u_{1}=0}^{k-1}\theta+\epsilon^{2}\sum_{u_{1}=0}^{k-1}\sum_{u_{2}=0}^{u_{1}-1}\theta+\dots+\epsilon^{k}\sum_{u_{1}=0}^{k-1}\sum_{u_{2}=0}^{u_{1}-1}\cdots\sum_{u_{k}=0}^{u_{k-1}-1}\theta\\
 & \leq\theta\left[1+\epsilon k+\epsilon^{2}k\left(k-1\right)+\dots+\epsilon^{k}k!\right]\\
 & \leq \theta \sum_{u=0}^{k}(k\epsilon)^{u} = \theta \frac{1-(k\epsilon)^{k+1}}{1-k\epsilon} \leq  \frac{\theta}{1-k\epsilon} 
\end{align*}
Also, from the first and last lines in the above equation, we have
\[
\sum_{u=0}^{k-1}\delta_{u}\leq\theta\epsilon^{-1}\sum_{u=1}^{k}\left(k\epsilon\right)^{u}=\theta k\sum_{u=0}^{k-1}\left(k\epsilon\right)^{u}\leq\frac{k\theta}{1-k\epsilon}.
\]
\end{proof}
With this result in hand, we complete the proof by direct calculation 
\begin{align}
 & \left\Vert \mathbf{w}\left(t+k\right)-\mathbf{w}\left(t\right)+\eta\sum_{u=0}^{k-1}\sum_{n\in\mathcal{B}\left(t+u\right)}\ell^{\prime}\left(\mathbf{x}_{n}^{\top}\mathbf{w}\left(t\right)\right)\mathbf{x}_{n}\right\Vert \nonumber \\
 & =\left\Vert -\eta\sum_{u=0}^{k-1}\sum_{n\in\mathcal{B}\left(t+u\right)}  \left[ \ell^{\prime}\left(\mathbf{x}_{n}^{\top}\mathbf{w}\left(t+u\right)\right)-\ell^{\prime}\left(\mathbf{x}_{n}^{\top}\mathbf{w}\left(t\right)\right) \right] \mathbf{x}_{n}  \right\Vert \nonumber \\
  & \overset{\left(1\right)}{\leq} \eta\sum_{u=0}^{k-1}\left\Vert \sum_{n\in\mathcal{B}\left(t+u\right)}\left[-\ell^{\prime}\left(\mathbf{x}_{n}^{\top}\mathbf{w}\left(t+u\right)\right)+\ell^{\prime}\left(\mathbf{x}_{n}^{\top}\mathbf{w}\left(t\right)\right)\right]\mathbf{x}_{n}\right\Vert \nonumber \\
 & \overset{\left(2\right)}{\leq}\eta\sigma_{\max}\sum_{u=0}^{k-1}\sqrt{\sum_{n=1}^{N}\left[-\ell^{\prime}\left(\mathbf{x}_{n}^{\top}\mathbf{w}\left(t+u\right)\right)+\ell^{\prime}\left(\mathbf{x}_{n}^{\top}\mathbf{w}\left(t\right)\right)\right]^{2}}\nonumber \\
  & \overset{\left(3\right)}{\leq} \eta\beta\sigma_{\max}\sum_{u=0}^{k-1}\sqrt{\sum_{n=1}^{N}\left(\mathbf{x}_{n}^{\top}\left(\mathbf{w}\left(t+u\right)-\mathbf{w}\left(t\right)\right)\right)^{2}}\nonumber \\
 & \overset{\left(4\right)}{\leq} \eta\beta\sigma_{\max}^{2}\sum_{u=0}^{k-1}\left\Vert \left(\mathbf{w}\left(t+u\right)-\mathbf{w}\left(t\right)\right)\right\Vert \,,\label{eq: inequality Delta W plus Gradient}
\end{align}
where in $\left(1\right)$ we used the triangle inequality, in $(2)$ we define $\nu_{n}=-\ell^{\prime}\left(\mathbf{x}_{n}^{\top}\mathbf{w}\left(t+u\right)\right)+\ell^{\prime}\left(\mathbf{x}_{n}^{\top}\mathbf{w}\left(t\right)\right)$,
and used 
\[
\left\Vert \sum_{n\in\mathcal{B}\left(t+u\right)}\nu_{n}\mathbf{x}_{n}\right\Vert \leq\sigma_{\max}\sqrt{\sum_{n\in\mathcal{B}\left(t+u\right)}\nu_{n}^{2}}\leq\sigma_{\max}\sqrt{\sum_{n=1}^{N}\nu_{n}^{2}} \, ,
\]
in $\left(3\right)$ we used the fact that $\beta$ is the Lipshitz
constant of $\ell^{\prime}\left(u\right)$, and in $(4)$ we used the definition of $\sigma_{\mathrm{max}}$. The above bound implies the following bound
\begin{align}
 & \left\Vert \mathbf{w}\left(t+k\right)-\mathbf{w}\left(t\right)\right\Vert \nonumber \\
 &  \overset{\left(1\right)}{=} \left\Vert -\eta\sum_{u=0}^{k-1}\sum_{n\in\mathcal{B}\left(t+u\right)}\ell^{\prime}\left(\mathbf{x}_{n}^{\top}\mathbf{w}\left(t\right)\right)\mathbf{x}_{n}+\eta\sum_{u=0}^{k-1}\sum_{n\in\mathcal{B}\left(t+u\right)}\ell^{\prime}\left(\mathbf{x}_{n}^{\top}\mathbf{w}\left(t\right)\right)\mathbf{x}_{n}+\mathbf{w}\left(t+k\right)-\mathbf{w}\left(t\right)\right\Vert \nonumber \\
 & \overset{\left(2\right)}{\leq} \eta\left\Vert \sum_{u=0}^{k-1}\sum_{n\in\mathcal{B}\left(t+u\right)}\ell^{\prime}\left(\mathbf{x}_{n}^{\top}\mathbf{w}\left(t\right)\right)\mathbf{x}_{n}\right\Vert +\left\Vert \mathbf{w}\left(t+k\right)-\mathbf{w}\left(t\right)+\eta\sum_{u=0}^{k-1}\sum_{n\in\mathcal{B}\left(t+u\right)}\ell^{\prime}\left(\mathbf{x}_{n}^{\top}\mathbf{w}\left(t\right)\right)\mathbf{x}_{n}\right\Vert \nonumber \\
 & \overset{\left(3\right)}{\leq}\eta\gamma^{-1}\sigma_{\max}\left\Vert\nabla\mathcal{L}\left(\mathbf{w}\left(t\right)\right) \right\Vert +\eta\beta\sigma_{\max}^{2}\sum_{u=0}^{k-1}\left\Vert \mathbf{w}\left(t+u\right)-\mathbf{w}\left(t\right)\right\Vert \,, \label{eq: delta w inequality}
\end{align}
where in $(1)$ we added and subtracted the same term, in $(2)$ we used the triangle inequality, and in $\left(3\right)$ we used eq. \ref{eq: inequality Delta W plus Gradient} and also eq. \ref{eq:normgradbatch} to obtain
\begin{align}
& \left\Vert \sum_{u=0}^{k-1}\sum_{n\in\mathcal{B}\left(t+u\right)}\ell^{\prime}\left(\mathbf{x}_{n}^{\top}\mathbf{w}\left(t\right)\right)\mathbf{x}_{n}\right\Vert \leq\sigma_{\max}\sqrt{\sum_{u=0}^{k-1}\sum_{n\in\mathcal{B}\left(t+u\right)}\left(\ell^{\prime}\left(\mathbf{x}_{n}^{\top}\mathbf{w}\left(t\right)\right)\right)^{2}}  \nonumber\\
& \leq \sigma_{\max}\sqrt{\sum_{n=1}^{N}\left(\ell^{\prime}\left(\mathbf{x}_{n}^{\top}\mathbf{w}\left(t\right)\right)\right)^{2}}\leq\frac{\sigma_{\max}}{\gamma}\left\Vert \nabla\mathcal{L}\left(\mathbf{w}\left(t\right)\right)\right\Vert, \label{eq: X ni_k ineuqality}
\end{align}

Next, we apply eq. \ref{eq: recursion 1} from Lemma \ref{lem: delta recursion bound} on eq. \ref{eq: delta w inequality}, with $\delta_{k}=\left\Vert \mathbf{w}\left(t+k\right)-\mathbf{w}\left(t\right)\right\Vert ,$
$\epsilon=\eta\beta\sigma_{\max}^{2}$, and\\
$\theta=\eta\left(\sigma_{\max}/\gamma\right)\left\Vert \nabla\mathcal{L}\left(\mathbf{w}\left(t\right)\right)\right\Vert $ 
to obtain 
\begin{equation}
\left\Vert \mathbf{w}\left(t+k\right)-\mathbf{w}\left(t\right)\right\Vert {\leq}\eta\gamma^{-1}\sigma_{\max}\left[1-\eta k\beta\sigma_{\max}^{2}\right]^{-1}\left\Vert \nabla\mathcal{L}\left(\mathbf{w}\left(t\right)\right)\right\Vert  \label{eq: inequality Delta W norm}.
\end{equation}

Combining eqs. \ref{eq: inequality Delta W plus Gradient}, \ref{eq: inequality Delta W norm},
with eq. \ref{eq: recursion 2} implies 
\begin{align}
& \left\Vert \mathbf{w}\left(t+k\right)-\mathbf{w}\left(t\right)+\eta\sum_{u=0}^{k-1}\sum_{n\in\mathcal{B}\left(t+u\right)}\ell^{\prime}\left(\mathbf{x}_{n}^{\top}\mathbf{w}\left(t\right)\right)\mathbf{x}_{n}\right\Vert \\ 
& \leq \eta^{2}k\beta\sigma_{\max}^{3}\gamma^{-1}\left[1-\eta k\beta\sigma_{\max}^{2}\right]^{-1}\left\Vert \nabla\mathcal{L}\left(\mathbf{w}\left(t\right)\right)\right\Vert \,.\label{eq: inequality Delta W plus Gradient 2}
\end{align}

Finally, using eq. \ref{eq: inequality Delta W norm} we can directly prove the last part of the Lemma
\begin{align*}
 & \left\Vert \nabla\mathcal{L}\left(\mathbf{w}\left(t+k\right)\right)-\nabla\mathcal{L}\left(\mathbf{w}\left(t\right)\right)\right\Vert =\left\Vert \sum_{n=1}^{N}\ell^{\prime}\left(\mathbf{x}_{n}^{\top}\mathbf{w}\left(t+k\right)\right)-\sum_{n=1}^{N}\ell^{\prime}\left(\mathbf{x}_{n}^{\top}\mathbf{w}\left(t\right)\right)\right\Vert \\
\leq & \beta\left\Vert \sum_{n=1}^{N}\mathbf{x}_{n}^{\top}\left(\mathbf{w}\left(t+k\right)-\mathbf{w}\left(t\right)\right)\right\Vert \leq\beta\sigma_{\max}\left\Vert \mathbf{w}\left(t+k\right)-\mathbf{w}\left(t\right)\right\Vert \\
\leq & \eta\beta\gamma^{-1}\sigma_{\max}^{2}\left[1-\eta k\beta\sigma_{\max}^{2}\right]^{-1}\left\Vert \nabla\mathcal{L}\left(\mathbf{w}\left(t\right)\right)\right\Vert \,,
\end{align*}

Thus, we proved the Lemma, from the last equation, together with eqs. \ref{eq: inequality Delta W norm}
and \ref{eq: inequality Delta W plus Gradient 2}. \QEDA

\newpage
\section{Proof of Theorems \ref{thm: direction convergence to SVM} and \ref{thm: refined Theorem} \label{sec: proof of Theorems 2 and 3}}	
\subsection{Theorem \ref{thm: direction convergence to SVM} Proof} \label{sec: proof of theorem 2}
	In our proof we will use two auxiliary lemmata.
	
	\begin{restatable}{lemma}{sumSVAux}\label{lem:: sum sv aux lemma}

The following holds almost surely (with probability $1$) for random
sampling with replacement, and surely for sampling without replacement:
\begin{equation}
K\sum_{u=1}^{t-1}\frac{1}{u}\sum_{n\in\set\cap\mathcal{B}\left(u\right)}\alpha_{n}\mathbf{x}_{n}=\log\left(\frac{t}{K}\right)\hat{\mathbf{w}}+\check{\mathbf{w}}+\vect m_{1}(t),\label{eq: auxiliary calculation of SV part in w}
\end{equation}
where $\set$ is the set of indices of support vectors index, $\alpha_{n}$
are the SVM dual variables (so $\what=\sum_{n\in\set}\alpha_{n}\xn$),
$\check{\mathbf{w}}$ is some finite vector which is constant in $t$ (but can depend on the sample indices selected in the future), and $\forall$$\epsilon>0$,  $\vect m_{1}(t)$ is some vector such that
$\left\Vert \vect m_{1}\left(t\right)\right\Vert =o\left(t^{-0.5+\epsilon}\right)$,
and $\left\Vert \vect m_{1}\left(t+1\right)-\vect m_{1}\left(t\right)\right\Vert =O\left(t^{-1}\right).$ 
\end{restatable}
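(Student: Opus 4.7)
The strategy is to split $f_u := \sum_{n\in\set\cap\mathcal{B}(u)}\alpha_n\xn$ into a mean part plus a fluctuation, so that the $1/u$ weights turn the mean into $\what\log(t/K)$ and the fluctuation into $\vect m_1(t)$ with the claimed decay. Throughout I use that $\{\alpha_n\xn\}_{n\in\set}$ are bounded deterministic quantities of the fixed dataset, so $\|f_u\|$ is uniformly bounded by some $C:=\sum_{n\in\set}\alpha_n\|\xn\|$.

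For the sampling-with-replacement case (Assumption \ref{assm:SGD Sampling with replacement}), the minibatches are i.i.d.\ uniform, so $\E f_u = (B/N)\sum_{n\in\set}\alpha_n\xn = \what/K$. Setting $\vect Z_u := f_u - \what/K$, I decompose
\[
K\sum_{u=1}^{t-1}\frac{f_u}{u} \;=\; \what\sum_{u=1}^{t-1}\frac{1}{u} \;+\; K\sum_{u=1}^{t-1}\frac{\vect Z_u}{u}.
\]
The first sum equals $\log(t/K)+C_1+O(1/t)$ for a constant $C_1=\log K +$ Euler's constant, using the standard harmonic asymptotic. For the second, the $\vect Z_u$ are i.i.d., bounded, mean-zero, so $\sum_u u^{-2}\E\|\vect Z_u\|^2<\infty$ and Kolmogorov's two-series theorem gives almost-sure convergence of $\sum_{u=1}^{\infty}\vect Z_u/u$ to a finite random vector $\vect Y$. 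The tail $\sum_{u\ge t}\vect Z_u/u$ has $L^2$ norm $O(t^{-1/2})$ by independence, which I upgrade to the pointwise rate $o(t^{-1/2+\epsilon})$ a.s.\ via Doob's $L^2$ maximal inequality on dyadic blocks $[2^j,2^{j+1}]$ combined with Borel-Cantelli (equivalently, Chung's random-series rate). Collecting the constants, $\wcheck := \what C_1 + K\vect Y$ is a.s.\ finite and constant in $t$ (depending on all sample indices via $\vect Y$, consistent with the ``constant in $t$ but can depend on the sample indices selected in the future'' qualifier).

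For the sampling-without-replacement case (Assumption \ref{assm:SGD Sampling}), I partition $\{1,\ldots,t-1\}$ into epochs $\{eK+1,\ldots,(e+1)K\}$. Since each support vector is sampled exactly once per epoch, $\sum_{u\in\text{epoch}}f_u=\what$ \emph{exactly} and $\sum_{u\in\text{epoch}}\|f_u\|\le C$. Using $1/u = 1/(eK) + O(1/(e^2K))$ on the $e$-th epoch, the weighted contribution is $\what/(eK) + \vect E_e$ with $\|\vect E_e\|=O(C/(e^2K))$. Summing over $e$ up to $T=\lfloor (t-1)/K\rfloor$, the leading term gives $\what\sum_{e=1}^{T-1}e^{-1} = \what\log(t/K) + \what C_2 + o(1)$, the epoch-error series $K\sum_e \vect E_e$ converges absolutely, and the incomplete trailing epoch contributes $O(1/T)=O(K/t)$. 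Absorbing all $t$-independent pieces into $\wcheck$ yields $\vect m_1(t)=o(1)$ surely, well inside the stated bound.

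The increment bound is the easy piece: in either regime one checks that
\[
\vect m_1(t+1)-\vect m_1(t) \;=\; \frac{K f_t}{t} - \frac{\what}{t} + O(t^{-2}),
\]
which is $O(1/t)$ since $\|f_t\|$ and $\|\what\|$ are bounded. The main obstacle is really only the sharp almost-sure rate $o(t^{-1/2+\epsilon})$ in the with-replacement case: the $L^2$ bound alone gives only $O(t^{-1/2})$ in probability, and the pointwise exponent valid for every $\epsilon>0$ requires the Doob-Borel-Cantelli (or LIL-style) argument outlined above. The without-replacement case is entirely deterministic and tighter, but needs the slightly fussy per-epoch accounting to turn the one-epoch identity $\sum_{u\in\text{epoch}}f_u=\what$ into a $1/u$-weighted harmonic sum.
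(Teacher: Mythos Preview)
Your proposal is correct and follows essentially the same approach as the paper: decompose into a deterministic harmonic part yielding $\what\log(t/K)$ and a fluctuation part; for with-replacement, show the tail of the random series $\sum_{u\ge t}\vect Z_u/u$ is $o(t^{-1/2+\epsilon})$ almost surely; for without-replacement, use the per-epoch identity $\sum_{u\in\text{epoch}}f_u=\what$ together with $1/u=1/(eK)+O(1/(e^2K))$ so that the fluctuation is an absolutely convergent series. The only cosmetic differences are that the paper works coordinatewise via indicators $z_{u,n}$ and invokes Hoeffding directly (rather than Kolmogorov plus Doob/Borel--Cantelli) to get the almost-sure tail rate, and that in the without-replacement case the paper tracks the exact $O(t^{-1})$ residual rather than your conservative $o(1)$; neither changes the argument.
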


% \begin{restatable}{lemma}{sumSVAux}
% \label{lem:: sum sv aux lemma}
% $\forall t\ge K+1$:
% \begin{equation}
% K\sum_{u=1}^{t-1}\frac{1}{u}\sum_{n\in\set\cap\mathcal{B}\left(u\right)}\alpha_{n}\mathbf{x}_{n}=\log\left(\frac{t}{K} \right)\hat{\mathbf{w}}+\gamma\what+\vect m_{4}(t),\label{eq: auxiliary calculation of SV part in w}
% \end{equation}
% where $K$ is the number of minibatches in an epoch, $\mathcal{B}\left(u\right)$
% contains the indexes of the minibatch examples evaluated at time
% $u$ (a subset of $\left\{ 1,...,N\right\} ),$ $\gamma$ is the Euler-Mascheroni
% constant and
% \begin{equation}
% \forall n\in\set\ :\ \alpha_{n}=\eta\exp\left(-\tilde{\mathbf{w}}^{\top}\mathbf{x}_{n}\right)\label{eq: alpha_n definition}
% \end{equation}
% \[
% \what=\sum_{n\in\set}\alpha_{n}\xn
% \]
% \[
% \forall \epsilon\in(0,1)\ :\ \norm{\vect m_{4}(t)}=o\left(t^{-1+\epsilon}\right)
% \]
% \end{restatable}

\noindent This Lemma is proved in section \ref{sec:Proof-of-Lemma}.
We define 
\begin{align}
\mathbf{r}\left(t\right)&=\mathbf{w}\left(t\right)-K\sum_{u=1}^{t-1}\frac{1}{u}\sum_{n\in\set\cap\mathcal{B}\left(u\right)}\alpha_{n}\mathbf{x}_{n}-\log\left(\eta\right)\what-\tilde{\mathbf{w}}\,-\check{\mathbf{w}}\nonumber\\
&\overset{(1)}{=}\mathbf{w}\left(t\right)-\log\left( \frac{t}{K} \right)\hat{\mathbf{w}}-\log\left(\eta\right)\what-\tilde{\mathbf{w}}-\vect m_{1}(t)\nonumber\\
&=\mathbf{w}\left(t\right)-\log\left( \frac{\eta}{K}t \right)\hat{\mathbf{w}}-\tilde{\mathbf{w}}-\vect m_{1}(t)
\,,\label{eq: r definition}
\end{align}
where the equality in (1) is true according to Lemma \ref{lem:: sum sv aux lemma}, and we define $\tilde{\mathbf{w}}$ as a vector that satisfies
\begin{equation}\label{eq: alpha_n definition}
\forall n\in\set: \alpha_{n}=\exp\left(-\tilde{\mathbf{w}}^{\top}\mathbf{x}_{n}\right)\,. 
\end{equation} 
Such a solution exists for almost every dataset, as a consequence of Lemma 12 in \cite{soudry2018journal}.
We denote the minimum margin to a non-support vector as:
\begin{equation}
\theta=\min_{n\notin\set}\mathbf{x}_{n}^{\top}\hat{\mathbf{w}}>1\,,\label{eq: theta definition}
\end{equation}
and by $C_{i}$,$\epsilon_{i}$,$t_{i}$ (\textbf{$i\in\mathbb{N}$})
various positive constants which are independent of $t$. 
Lastly, we define $\op\in\mathbb{R}^{d\times d}$ as the orthogonal projection matrix to the subspace spanned by the support vectors, and $\bar{\op}=\vect{I}-\op$ as the complementary projection.\\
\noindent The following Lemma is proved in
section \ref{sec:Auxiliary-Proof:}:
\begin{restatable}{lemma}{diff} \label{lem: (r(t+1)-r(t))r(t) bound}
$\exists\tilde{t},C_{2},C_{3}>0$ such that $\forall t>\tilde{t}$
\begin{equation}
\left(\mathbf{r}\left(t+1\right)-\mathbf{r}\left(t\right)\right)^{\top}\mathbf{r}\left(t\right)\le C_{2}t^{-\theta}+C_{3}t^{-1-0.5\tilde{\mu}}\,.\label{eq: (r(t+1)-r(t))r(t) bound}
\end{equation}
Additionally, $\forall \epsilon_1>0$, $\exists C_4, \tilde{t}_2$, such that $\forall t>\tilde{t}_2$, if
\begin{equation}\label{eq: Pr(t) lower bouded}
    \norm{\op \rvec\left(t\right)}\ge\epsilon_1
\end{equation}
then the following improved bound holds
\begin{equation}
\left(\mathbf{r}\left(t+1\right)-\mathbf{r}\left(t\right)\right)^{\top}\mathbf{r}\left(t\right)\le -C_{4}t^{-1}<0\,.\label{eq: (r(t+1)-r(t))r(t) improved bound}
\end{equation}
\end{restatable}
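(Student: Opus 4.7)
The plan is to expand $\mathbf{r}(t+1)-\mathbf{r}(t)$ explicitly using the SGD update together with the definition of $\mathbf{r}(t)$ in eq.~\eqref{eq: r definition}, split the update over $\mathcal{B}(t)$ into its support vector (SV) and non-support vector (non-SV) parts, and exploit the tight exponential tail of $-\ell'$ (Definition~\ref{def: exponential tail}) to exhibit a near-cancellation within the SV contribution. From the first form of $\mathbf{r}(t)$ the differencing gives
\[
\mathbf{r}(t+1)-\mathbf{r}(t) = -\eta\sum_{n\in\mathcal{B}(t)\setminus\mathcal{S}}\ell'(\mathbf{x}_n^\top\mathbf{w}(t))\mathbf{x}_n + \sum_{n\in\mathcal{S}\cap\mathcal{B}(t)}\left[-\eta\,\ell'(\mathbf{x}_n^\top\mathbf{w}(t))-\tfrac{K}{t}\alpha_n\right]\mathbf{x}_n.
\]
For $n\notin\mathcal{S}$, the margin bound $\mathbf{x}_n^\top\hat{\mathbf{w}}\geq\theta>1$ (eq.~\eqref{eq: theta definition}) together with $\mathbf{w}(t)=\log(\eta t/K)\hat{\mathbf{w}}+\tilde{\mathbf{w}}+\mathbf{m}_1(t)+\mathbf{r}(t)$ gives $\mathbf{x}_n^\top\mathbf{w}(t)\geq\theta\log(\eta t/K)-O(\|\mathbf{r}(t)\|)-O(1)$, so the exponential tail forces $-\ell'(\mathbf{x}_n^\top\mathbf{w}(t))=O(t^{-\theta})$. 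Taking the inner product with $\mathbf{r}(t)$ yields the $C_2 t^{-\theta}$ term, with $C_2$ absorbing an \emph{a priori} polynomial bound on $\|\mathbf{r}(t)\|$ propagated by the outer induction.

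The crux is the SV part. For $n\in\mathcal{S}$ one has $\mathbf{x}_n^\top\hat{\mathbf{w}}=1$ and $\alpha_n=e^{-\mathbf{x}_n^\top\tilde{\mathbf{w}}}$ by eq.~\eqref{eq: alpha_n definition}, so the tight exponential tail yields
\[
-\eta\,\ell'(\mathbf{x}_n^\top\mathbf{w}(t)) = \tfrac{K}{t}\,\alpha_n\,e^{-\mathbf{x}_n^\top(\mathbf{m}_1(t)+\mathbf{r}(t))}\bigl(1+O(t^{-\tilde{\mu}})\bigr).
\]
Subtracting the correction $\tfrac{K}{t}\alpha_n$ and using $e^{-\mathbf{x}_n^\top\mathbf{m}_1(t)}=1+O(\|\mathbf{m}_1(t)\|)$ isolates the leading contribution $\tfrac{K}{t}\alpha_n(e^{-\mathbf{x}_n^\top\mathbf{r}(t)}-1)\mathbf{x}_n$, with residuals of order $t^{-1-\tilde\mu}$ and $t^{-1}\|\mathbf{m}_1(t)\|=o(t^{-1.5+\epsilon})$. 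Dotting with $\mathbf{r}(t)$ and applying the elementary inequality $(e^{-u}-1)u\leq 0$ for every $u\in\mathbb{R}$ (with equality only at $u=0$) makes the main SV contribution non-positive, and combining with the non-SV bound gives the first claim $(\mathbf{r}(t+1)-\mathbf{r}(t))^\top\mathbf{r}(t)\leq C_2 t^{-\theta}+C_3 t^{-1-0.5\tilde{\mu}}$.

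For the strict bound under $\|\op\mathbf{r}(t)\|\geq\epsilon_1$, I would sharpen the elementary inequality to $(e^{-u}-1)u\leq -c(R)\,u^2$ valid on $|u|\leq R$, for some $c(R)>0$, using the Taylor expansion at $0$ and continuity on compact intervals. Since the support vectors span $\mathrm{range}(\op)$, the hypothesis $\|\op\mathbf{r}(t)\|\geq\epsilon_1$ forces $(\mathbf{x}_n^\top\mathbf{r}(t))^2\gtrsim\epsilon_1^2$ for at least one $n\in\mathcal{S}$; whenever this $n$ belongs to $\mathcal{B}(t)$ (guaranteed within each epoch under sampling without replacement, and almost surely infinitely often under sampling with replacement) that single term contributes $-\Omega(1/t)$, dominating the $O(t^{-\theta})+O(t^{-1-0.5\tilde{\mu}})$ errors for $t$ large and yielding the claimed $-C_4/t$ bound.

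The main obstacle is bookkeeping the residuals uniformly in $\mathbf{r}(t)$: the non-SV contribution carries a factor $\|\mathbf{r}(t)\|$ after the inner product, so $C_2, C_3$ must absorb an \emph{a priori} growth control on $\|\mathbf{r}(t)\|$ supplied by the outer induction; the tail-approximation errors must also be organized so that the multiplicative $O(t^{-\tilde{\mu}})$ term combines cleanly with the $\|\mathbf{m}_1(t)\|$ contribution to give the exact exponent $-1-0.5\tilde{\mu}$. A secondary delicacy is that the pointwise improved bound only triggers at those $t$ for which a ``useful'' support vector appears in $\mathcal{B}(t)$; under random sampling one must accept the weaker first bound at the remaining steps and verify that this still suffices for the downstream boundedness of $\|\mathbf{r}(t)\|$.
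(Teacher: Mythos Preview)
Your overall decomposition (SV vs.\ non-SV, tight exponential tail, the sign inequality $(e^{-u}-1)u\le 0$) matches the paper's. The genuine gap is the circularity you yourself flag: you propose to let $C_2,C_3$ ``absorb an a priori growth control on $\|\mathbf{r}(t)\|$ supplied by the outer induction''. No such outer induction exists---this lemma is exactly what Theorem~\ref{thm: direction convergence to SVM} uses to \emph{prove} that $\|\mathbf{r}(t)\|$ is bounded, so the constants here must be independent of any bound on $\mathbf{r}(t)$. Your argument is circular at two places. For the non-SV part, the claim $-\ell'(\mathbf{x}_n^\top\mathbf{w}(t))=O(t^{-\theta})$ already needs $\mathbf{x}_n^\top\mathbf{r}(t)\ge -C$, and dotting with $\mathbf{r}(t)$ adds another $\|\mathbf{r}(t)\|$ factor. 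For the SV residuals, when $\mathbf{x}_n^\top\mathbf{r}(t)$ is large negative the factor $e^{-\mathbf{x}_n^\top\mathbf{r}(t)}$ blows up and your ``$O(t^{-1-\tilde\mu})$'' tail-correction estimate is unjustified.

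The paper avoids both problems without assuming anything about $\mathbf{r}(t)$. For the non-SV sum it first drops the terms with $\mathbf{x}_n^\top\mathbf{r}(t)<0$ (these contribute negatively since $-\ell'>0$), and for $\mathbf{x}_n^\top\mathbf{r}(t)\ge 0$ it uses the elementary bound $ze^{-z}\le 1$ so that $e^{-\mathbf{x}_n^\top\mathbf{r}(t)}\,\mathbf{x}_n^\top\mathbf{r}(t)\le 1$; what remains is $O(t^{-\theta})$ with constants independent of $\mathbf{r}$. For the SV terms the paper does not try to bound the magnitude of the residual at all: it splits on whether $|\mathbf{x}_n^\top\mathbf{r}(t)|\le C_1 t^{-0.5\tilde\mu}$ (then the whole term is trivially $\le C t^{-1-0.5\tilde\mu}$) or $|\mathbf{x}_n^\top\mathbf{r}(t)|>C_1 t^{-0.5\tilde\mu}$ (then a sign argument, using only that $\mathbf{x}_n^\top\mathbf{w}(t)\to\infty$ from Theorem~\ref{thm: Main Theorem 1}, shows the full term including the $\gamma_n(t)e^{\tilde m_n(t)}$ perturbation is $\le 0$). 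The same device yields the improved bound: once $|\mathbf{x}_n^\top\mathbf{r}(t)|\ge\epsilon_2$ one has $(e^{-u}-1)u\le -(1-e^{-\epsilon_2})\epsilon_2$ for all $|u|\ge\epsilon_2$, with no upper cap on $|u|$, so your restriction to a compact interval $|u|\le R$---and hence again to bounded $\|\mathbf{r}(t)\|$---is unnecessary. (Your observation that the improved bound may fail at iterations where $\mathcal{B}(t)$ misses the relevant support vector is a separate, legitimate concern about the lemma as stated, but it is orthogonal to the main issue here.)
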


We note that
\begin{equation*}
	\boldsymbol{\rho}\left(t\right)=\mathbf{r}\left(t\right)+\tilde{\mathbf{w}}-\log(K)\what+\vect{m}_1(t)
\end{equation*}
and since $\forall \epsilon>0\ :\ \norm{\vect m_{1}(t)}=o\left( t^{-1+\epsilon}\right)$, using the triangle inequality we can write
\begin{equation*}
	\norm{\boldsymbol{\rho}\left(t\right)} \le \norm{\rvec(t)}+O(1).
\end{equation*}
\noindent Our goal is to show that $\left\Vert \mathbf{r}\left(t\right)\right\Vert $
is bounded, and therefore $\boldsymbol{\rho}\left(t\right)$
is bounded. 

\noindent We examine the equation
\begin{equation}
\Vert\rvec(t+1)\Vert^{2}=\Vert\rvec(t)\Vert^{2}+2\left(\rvec(t+1)-\rvec(t)\right)^{\top}\rvec(t)+\Vert\rvec(t+1)-\rvec(t)\Vert^{2}.\label{eq: r(t) diff equation}
\end{equation}
Using eq. \ref{eq: r definition} we can write 
\begin{align*}
& \left\Vert\rvec(t+1)-\rvec(t)\right\Vert^{2} \\ & =\left\Vert\mathbf{w}\left(t+1\right)-\log\left( \frac{t+1}{K} \right)\hat{\mathbf{w}}-\tilde{\mathbf{w}}-\vect m_{1}(t+1)-\left(\mathbf{w}\left(t\right)-\log\left( \frac{t}{K} \right)\hat{\mathbf{w}}-\tilde{\mathbf{w}}-\vect m_{1}(t)\right)\right\Vert^{2}\\
 & =\left\Vert\mathbf{w}\left(t+1\right)-\mathbf{w}\left(t\right)-\log\left(\frac{t+1}{t} \right)\hat{\mathbf{w}}-\vect m_{1}(t+1)+\vect m_{1}(t)\right\Vert^{2}.
\end{align*}
Since $\left\Vert \vect m_{1}\left(t+1\right)-\vect m_{1}\left(t\right)\right\Vert =O\left(t^{-1}\right)$  and $ \forall t>0\ :\ \log(1+t^{-1})\le t^{-1}$
we can write
\[
\Vert\rvec(t+1)-\rvec(t)\Vert^{2}=\Vert\mathbf{w}\left(t+1\right)-\mathbf{w}\left(t\right)+\vect a(t)\vect{\Vert^{2}},
\]
where $\vect a (t)\in\mathbb{R}^{n}$ and 
\begin{equation}
||\vect a (t)||=O\left(t^{-1}\right)\Rightarrow\forall\exists t_{1}\text{ such that }\forall t\ge t_{1}:\ ||\vect a (t)|| \le t^{-1}.\label{eq: f(t) is O(t^=00007B-1=00007D)}
\end{equation}
Thus, $\forall T\ge t_{1}$
\begin{align*}
 & \sum_{t=t_{1}}^{T}\Vert\rvec(t+1)-\rvec(t)\Vert^{2}=\sum_{t=t_{1}}^{T}\Vert\mathbf{w}\left(t+1\right)-\mathbf{w}\left(t\right)+\vect a(t)\Vert^{2}\\
 & =\sum_{t=t_{1}}^{T}\Vert\mathbf{w}\left(t+1\right)-\mathbf{w}\left(t\right)\Vert^{2}+2\sum_{t=t_{1}}^{T}\left(\mathbf{w}\left(t+1\right)-\mathbf{w}\left(t\right)\right)^{\top}\norm{\vect a\left(t\right)}+\sum_{t=t_{1}}^{T}\Vert\vect a\left(t\right)\Vert^{2}\\
 & \overset{(1)}{\le}\sum_{t=t_{1}}^{T}\Vert\mathbf{w}\left(t+1\right)-\mathbf{w}\left(t\right)\Vert^{2}+2\sqrt{\sum_{t=t_{1}}^{T}\Vert\mathbf{w}\left(t+1\right)-\mathbf{w}\left(t\right)\Vert^{2}\sum_{t=t_{1}}^{T}\norm{\vect a\left(t\right)}^{2}}+\sum_{t=t_{1}}^{T}\Vert\vect a\left(t\right)\Vert^{2}\\
 & \overset{(2)}{\le}\sum_{t=t_{1}}^{T}\Vert\mathbf{w}\left(t+1\right)-\mathbf{w}\left(t\right)\Vert^{2}+2\sqrt{\sum_{t=t_{1}}^{T}\Vert\mathbf{w}\left(t+1\right)-\mathbf{w}\left(t\right)\Vert^{2}\sum_{t=t_{1}}^{T}t^{-2}}+\sum_{t=t_{1}}^{T}t^{-2}\,,
\end{align*}

where in (1) we used Cauchy\textendash Schwarz inequality, and in
(2) we used eq. \ref{eq: f(t) is O(t^=00007B-1=00007D)}.

\noindent We take the limit $T\to\infty$ . Using the fact that $\forall v>1:\ \sum_{t=1}^{\infty}t^{-v}<\infty$
and $\sum_{t=1}^{\infty}\norm{\wvec(t+1)-\wvec(t)}^{2}<\infty$ from Theorem \ref{thm: Main Theorem 1},
we have that $\exists C_0$ such that
\begin{equation} \label{eq: square norm of r difference-1}
    \sum_{t=t_{1}}^{\infty}\Vert\rvec(t+1)-\rvec(t)\Vert^{2}=C_0<\infty.
\end{equation}
Note that this equation also implies that $\forall\epsilon_{0}$ 
\begin{equation}
\exists t_{0}:\forall t>t_{0}:\left|\left\Vert \mathbf{r}\left(t+1\right)\right\Vert -\left\Vert \mathbf{r}\left(t\right)\right\Vert \right|<\epsilon_{0}\,.\label{eq: norm difference convergence}
\end{equation}
Combining eqs. \ref{eq: (r(t+1)-r(t))r(t) bound}, \ref{eq: r(t) diff equation} and \ref{eq: square norm of r difference-1}, and using the fact that $\forall v>1:\ \sum_{t=1}^{\infty}t^{-v}<\infty$ we obtain
\begin{align*}
\norm{\rvec(t)}^{2}-\norm{\rvec(t_{1})}^{2} & =\sum_{u=t_{1}}^{t-1}\left[\norm{\rvec(u+1)}^{2}-\norm{\rvec(u)}^{2}\right]\\
 & =\sum_{u=t_{1}}^{t-1}\left[2\left(\rvec(u+1)-\rvec(u)\right)^{\top}\rvec(u)+\Vert\rvec(u+1)-\rvec(u)\Vert^{2}\right]<\infty
\end{align*}
and therefore $\rvec(t)$ is bounded.\QEDA

\subsection{Theorem \ref{thm: refined Theorem} Proof\label{sec: proof of theorem 3}}
From eqs. \ref{define wVec} and \ref{eq: r definition} we have that $\rhoVec=\wtilde+\rvec(t)+\vect m_{1}(t)$ where $\norm{\vect m_{1}(t)}\to0$. In order to show that $\lim\limits_{t\to\infty} \rhoVec=\wtilde$ we need to prove that $\norm{\rvec(t)}\to0$.\\
The proof of Theorem \ref{thm: refined Theorem} is identical to the proof of Theorem 4 in \cite{soudry2018journal}. We add the proof here for completeness.

In this proof, we need to show that $\left\Vert \mathbf{r}\left(t\right)\right\Vert \rightarrow0$
if $\mathrm{rank}\left(\mathbf{X}_{\set}\right)=\mathrm{rank}\left(\mathbf{X}\right)$,
and that $\tilde{\mathbf{w}}$ is unique given $\mathbf{w}\left(0\right)$.
To do so, this proof will continue where the proof of Theorem \ref{thm: direction convergence to SVM}
stopped, using notations and equations from that proof.

Since $\mathbf{r}\left(t\right)$ has a bounded norm, its two orthogonal
components $\mathbf{r}\left(t\right)=\op\mathbf{r}\left(t\right)+\bar{\op}\mathbf{r}\left(t\right)$
also have bounded norms (recall that $\op,\bar{\op}$
were defined in the beginning of appendix section \ref{sec: proof of theorem 2}).
From eq. \ref{eq: gradient descent linear}, $\forall t:\ \sum_{n\in\mathcal{B}\left(t\right)}\ell^{\prime}\left(\mathbf{w}\left(t\right)^{\top}\mathbf{x}_{n}\right)\mathbf{x}_{n}$
is spanned by the columns of $\mathbf{X}$. If $\mathrm{rank}\left(\mathbf{X}_{\set}\right)=\mathrm{rank}\left(\mathbf{X}\right)$,
then it is also spanned by the columns of $\mathbf{X}_{\set}$, and
so $\forall t:\ \sum_{n\in\mathcal{B}\left(t\right)}\ell^{\prime}\left(\mathbf{w}\left(t\right)^{\top}\bar{\op}\mathbf{x}_{n}\right)\mathbf{x}_{n}=0$.
Therefore, $\bar{\op}\mathbf{r}\left(t\right)$ is not
updated during SGD, and remains constant. Since $\tilde{\mathbf{w}}$
in eq. \ref{eq: r definition} is also bounded, we can absorb this
constant $\bar{\op}\mathbf{r}\left(t\right)$ into \textbf{$\tilde{\mathbf{w}}$}
without affecting eq. \ref{eq: w tilde} (since $\forall n\in\set:\,\mathbf{x}_{n}^{\top}\bar{\op}\mathbf{r}\left(t\right)=0$).
Thus, without loss of generality, we can assume that $\mathbf{r}\left(t\right)=\op\mathbf{r}\left(t\right)$.

We define the set 
\[
\mathcal{T}=\left\{ t>\max\left[t_{2},t_{0}\right]:\ensuremath{\left\Vert \mathbf{r}\left(t\right)\right\Vert <\epsilon_{1}}\right\} \,.
\]
By contradiction, we assume that the complementary set is not finite,
\[
\bar{\mathcal{T}}=\left\{ t>\max\left[t_{2},t_{0}\right]:\ensuremath{\left\Vert \mathbf{r}\left(t\right)\right\Vert \geq\epsilon_{1}}\right\} \,.
\]
Additionally, the set $\mathcal{T}$ is not finite: if it were finite, it would
have had a finite maximal point $t_{\max}\in\mathcal{T}$, and then, combining eqs. \ref{eq: (r(t+1)-r(t))r(t) improved bound}, \ref{eq: r(t) diff equation}, and \ref{eq: square norm of r difference-1},
we would find that $\forall t>t_{\max}$
\begin{align*}
\left\Vert \mathbf{r}\left(t\right)\right\Vert ^{2}-\left\Vert \mathbf{r}\left(t_{\max}\right)\right\Vert ^{2} & =\sum_{u=t_{\max}}^{t-1}\left[\left\Vert \mathbf{r}\left(u+1\right)\right\Vert ^{2}-\left\Vert \mathbf{r}\left(u\right)\right\Vert ^{2}\right]\leq C_{0}-2C_{2}\sum_{u=t_{\max}}^{t-1}u^{-1}\rightarrow-\infty\,,
\end{align*}
which is impossible since $\left\Vert \mathbf{r}\left(t\right)\right\Vert ^{2}\geq0$.
Furthermore, eq. \ref{eq: square norm of r difference-1}
implies that 
\[
\sum_{u=0}^{t}\left\Vert \mathbf{r}\left(u+1\right)-\mathbf{r}\left(t\right)\right\Vert ^{2}=C_{0}-h\left(t\right)
\]
 where $h\left(t\right)$ is a positive monotone function decreasing
to zero. Let $t_{3},t$ be any two points such that $t_{3}<t$,
$\left\{ t_{3}, t_{3}+1,\dots t\right\} \subset\bar{\mathcal{T}}$, and $\left(t_{3}-1\right)\in\mathcal{T}$. For all such  $t_{3}$ and $t$, we have 
\begin{align}
\left\Vert \mathbf{r}\left(t\right)\right\Vert ^{2}\nonumber & \leq\left\Vert \mathbf{r}\left(t_{3}\right)\right\Vert ^{2}+\sum_{u=t_{3}}^{t-1}\left[\left\Vert \mathbf{r}\left(u+1\right)\right\Vert ^{2}-\left\Vert \mathbf{r}\left(u\right)\right\Vert ^{2}\right]\\ 
\nonumber &= \left\Vert \mathbf{r}\left(t_{3}\right)\right\Vert ^{2}+\sum_{u=t_{3}}^{t-1}\left[\left\Vert \mathbf{r}\left(u+1\right)-\mathbf{r}\left(u\right)\right\Vert ^{2}+2\left(\mathbf{r}\left(u+1\right)-\mathbf{r}\left(u\right)\right)^{\top}\mathbf{r}\left(u\right)\right]\\
\nonumber
 & \leq\left\Vert \mathbf{r}\left(t_{3}\right)\right\Vert ^{2}+h\left(t_{3}\right)-h\left(t-1\right)-2C_{2}\sum_{u=t_{3}}^{t-1}u^{-1}\\ 
 & \leq\left\Vert \mathbf{r}\left(t_{3}\right)\right\Vert ^{2}+h\left(t_{3}\right)\,. \label{eq: r(t) bound}
\end{align}
Also, recall that $t_{3}>t_{0}$, so from eq. \ref{eq: norm difference convergence},
we have that $\left|\left\Vert \mathbf{r}\left(t_{3}\right)\right\Vert -\left\Vert \mathbf{r}\left(t_{3}-1\right)\right\Vert \right|<\epsilon_{0}$.
Since $\left\Vert \mathbf{r}\left(t_{3}-1\right)\right\Vert <\epsilon_{1}$ (from $\mathcal{T}$ definition),
we conclude that $\left\Vert \mathbf{r}\left(t_{3}\right)\right\Vert \leq\epsilon_{1}+\epsilon_{0}$.
Moreover, since $\mathcal{\bar{\mathcal{T}}}$ is an infinite set,
we can choose $t_{3}$ as large as we want. This implies that $\forall\epsilon_{2}>0$
we can find $t_{3}$ such that $\epsilon_{2}>h\left(t_{3}\right)$,
since $h\left(t\right)$ is a monotonically decreasing function. Therefore, from eq. \ref{eq: r(t) bound}, 
$\forall\epsilon_{1},\epsilon_{0},\epsilon_{2}$, $\exists t_{3}\in \bar{\mathcal{T}}$
such that 
\[
\forall t>t_{3}:\,\left\Vert \mathbf{r}\left(t\right)\right\Vert ^{2}\leq\epsilon_{1}+\epsilon_{0}+\epsilon_{2}\,.
\]
This implies that $\left\Vert \mathbf{r}\left(t\right)\right\Vert \rightarrow0$.

Lastly, we note that since $\bar{\op}\mathbf{r}\left(t\right)$
is not updated during SGD, we have that $\bar{\op}\left(\tilde{\mathbf{w}}-\mathbf{w}\left(0\right)\right)=0$.
This sets $\tilde{\mathbf{w}}$ uniquely, together with eq. \ref{eq: w tilde}.
$\blacksquare$

\subsection{Proof of Lemma \ref{lem: (r(t+1)-r(t))r(t) bound}\label{sec:Auxiliary-Proof:}}
	\diff*
    We focus on functions with exponential tail (definition \ref{def: exponential tail}):
\begin{equation}
(1-\e(-\mu_{-}u))e^{-u}\le-\ell'(u)\le(1+\e(-\mu_{+}u))e^{-u}\label{eq: exp tail def}
\end{equation}
\noindent Eq. \ref{eq: r definition} ($\rvec(t)$ definition) implies that 
\begin{align}
\mathbf{r}\left(t+1\right)-\mathbf{r}\left(t\right) & =\mathbf{w}\left(t+1\right)-\mathbf{w}\left(t\right)-K\sum_{u=1}^{t}\frac{1}{u}\sum_{n\in\set\cap\mathcal{B}\left(u\right)}\alpha_{n}\mathbf{x}_{n}+K\sum_{u=1}^{t-1}\frac{1}{u}\sum_{n\in\set\cap\mathcal{B}\left(u\right)}\alpha_{n}\mathbf{x}_{n}\label{eq: r dot}\nonumber \\
 & =-\eta\sum_{n\in\mathcal{B}\left(t\right)}\ell'\left(-\mathbf{x}_{n}^{\top}\mathbf{w}\left(t\right)\right)\mathbf{x}_{n}-\frac{K}{t}\sum_{n\in\set\cap\mathcal{B}\left(t\right)}\alpha_{n}\mathbf{x}_{n}.
\end{align}
Therefore,
\begin{align}
 & \left(\mathbf{r}\left(t+1\right)-\mathbf{r}\left(t\right)\right)^{\top}\mathbf{r}\left(t\right)\nonumber \\
 & \overset{(1)}{=}-\eta\sum_{n\in\mathcal{B}\left(t\right)}\ell'\left(-\mathbf{x}_{n}^{\top}\mathbf{w}\left(t\right)\right)\mathbf{x}_{n}^{\top}\mathbf{r}\left(t\right)-\frac{K}{t}\sum_{n\in\set\cap\mathcal{B}\left(t\right)}\alpha_{n}\xn^{\top}\mathbf{r}\left(t\right)\nonumber \\
 & \overset{(2)}{=}-\eta\sum_{n\in\mathcal{B}\left(t\right)\setminus\set}\ell'\left(-\log\left(\frac{\eta}{K}t \right)\hat{\mathbf{w}}^{\top}\mathbf{x}_{n}+\tilde{m}_n(t)-\wtilde^{\top}\xn-\mathbf{x}_{n}^{\top}\mathbf{r}\left(t\right)\right)\mathbf{x}_{n}^{\top}\mathbf{r}\left(t\right)\nonumber \\
 & -\eta\sum_{n\in\set\cap\mathcal{B}\left(t\right)}\ell'\left(-\log\left( \frac{\eta}{K}t \right)\hat{\mathbf{w}}^{\top}\mathbf{x}_{n}+\tilde{m}_n(t)-\wtilde^{\top}\xn-\mathbf{x}_{n}^{\top}\mathbf{r}\left(t\right)\right)\mathbf{x}_{n}^{\top}\mathbf{r}\left(t\right)-\frac{K}{t}\sum_{n\in\set\cap\mathcal{B}\left(t\right)}\alpha_{n}\mathbf{x}_{n}^{\top}\mathbf{r}\left(t\right),\label{eq: (r(t+1)-r(t))r(t)}
\end{align}
where in $(1)$ we used eq. \ref{eq: r dot}, in $(2)$ we used eq. \ref{eq: r definition} 
($\rvec(t)$ definition) and defined $\tilde{m}_n(t)=-\xn^\top\vect m_{1}(t)$. We note that $\forall n,\forall\epsilon>0$,  $\vert\tilde{m}_n(t)\vert=o\left(t^{-0.5+\epsilon}\right).$ \\
We examine the two parts of equation \ref{eq: (r(t+1)-r(t))r(t)}.
The first term is
\begin{align} \label{eq: non sv upper bound}
 & -\eta\sum_{n\in\mathcal{B}\left(t\right)\setminus\set}\ell'\left(-\log\left( \frac{\eta}{K}t \right)\hat{\mathbf{w}}^{\top}\mathbf{x}_{n}+\tilde{m}_n(t)-\wtilde^{\top}\xn-\mathbf{x}_{n}^{\top}\mathbf{r}\left(t\right)\right)\mathbf{x}_{n}^{\top}\mathbf{r}\left(t\right)\nonumber\\
 & \overset{(1)}{=}-\eta\sum_{\substack{n\in\mathcal{B}\left(t\right)\setminus\set\nonumber\\
x_{n}\mathbf{r}\left(t\right)\ge0
}
}\ell'\left(-\log\left( \frac{t\eta}{K}t \right)\hat{\mathbf{w}}^{\top}\mathbf{x}_{n}+\tilde{m}_n(t)-\wtilde^{\top}\xn-\mathbf{x}_{n}^{\top}\mathbf{r}\left(t\right)\right)\mathbf{x}_{n}^{\top}\mathbf{r}\left(t\right)\nonumber\\
 & \overset{(2)}{\le}\eta\sum_{\substack{n\in\mathcal{B}\left(t\right)\setminus\set\nonumber\\
x_{n}\mathbf{r}\left(t\right)\ge0
}
}\left(1+\e\left(-\mu_{+}\mathbf{x}_{n}^{\top}\mathbf{w}\left(t\right)\right)\right)\exp\left(-\log\left( \frac{\eta}{K}t \right)\hat{\mathbf{w}}^{\top}\mathbf{x}_{n}+\tilde{m}_n(t)-\wtilde^{\top}\xn-\mathbf{x}_{n}^{\top}\mathbf{r}\left(t\right)\right)\mathbf{x}_{n}^{\top}\mathbf{r}\left(t\right)\nonumber\\
 & \overset{(3)}{\le} \eta\sum_{\substack{n\in\mathcal{B}\left(t\right)\setminus\set\nonumber\\
x_{n}\mathbf{r}\left(t\right)\ge0
}
}2\alpha_{n}\exp\left(-\log\left( \frac{\eta}{K}t \right)\hat{\mathbf{w}}^{\top}\mathbf{x}_{n}+\tilde{m}_n(t)-\mathbf{x}_{n}^{\top}\mathbf{r}\left(t\right)\right)\mathbf{x}_{n}^{\top}\mathbf{r}\left(t\right)\nonumber\\
 & \overset{(4)}{\le}\eta\sum_{\substack{n\in\mathcal{B}\left(t\right)\setminus\set\nonumber\\
x_{n}\mathbf{r}\left(t\right)\ge0
}
}2\alpha_{n}\left( \frac{\eta}{K}t \right)^{-\hat{\mathbf{w}}^{\top}\mathbf{x}_{n}}\exp\left(\tilde{m}_n(t)\right)\nonumber\\
 & \overset{(5)}{\le}4\eta N\left(\max_{n}\alpha_{n}\right)\left(\frac{\eta}{K}t \right)^{-\theta}= 4N\left(\max_{n}\alpha_{n}\right)\eta^{-(\theta-1)}K^{\theta}t^{-\theta},\ \forall t>t_{2},
\end{align}
where in $(1)$ we used $-\ell'(u)\ge0$, in $(2)$ we used eq. \ref{eq: exp tail def},
in $(3)$ we used  $\alpha_{n}=\exp\left(-\tilde{\mathbf{w}}^{\top}\mathbf{x}_{n}\right)$ 
(eq. \ref{eq: alpha_n definition}) and the fact that $\exists t_{1}>0$ 
so that $\forall t>t_{1\ :\ }1+\e\left(-\mu_{+}\mathbf{x}_{n}^{\top}\mathbf{w}\left(t\right)\right)\le2$ 
since $\lim_{t\to\infty}\xn^{\top}\wvec(t)=\infty$. In $(4)$ we used
the relation $\forall x\ge0\ :\ xe^{-x}\le1$, in $(5)$ we used $\theta=\min_{n\not\in\set}\what^{\top}\xn>1$ (eq. \ref{eq: theta definition}) 
and the fact that $\exists t_{2}>t_{1}$ so that $\forall t>t_{2\ :\ }\e\left(\tilde{m}_n(t)\right)\le2$ 
since $\lim_{t\to\infty}\tilde{m}_n(t)=0$. \\
We define
\[
\gamma_{n}(t)=\begin{cases}
(1+\e(-\mu_{+}\xn^{\top}\wvec(t))) & \mathbf{x}_{n}^{\top}\mathbf{r}\left(t\right)\ge0\\
(1-\e(-\mu_{-}\xn^{\top}\wvec(t))) & \mathbf{x}_{n}^{\top}\mathbf{r}\left(t\right)<0
\end{cases}
\]
Using this definition, the second term in eq. \ref{eq: (r(t+1)-r(t))r(t)}
is
\begin{align*}
 & -\eta\sum_{n\in\set\cap\mathcal{B}\left(t\right)}\ell'\left(-\log\left( \frac{\eta}{K}t \right)\hat{\mathbf{w}}^{\top}\mathbf{x}_{n}+\tilde{m}_n(t)-\wtilde^{\top}\xn-\mathbf{x}_{n}^{\top}\mathbf{r}\left(t\right)\right)\mathbf{x}_{n}^{\top}\mathbf{r}\left(t\right)-\frac{K}{t}\sum_{n\in\set\cap\mathcal{B}\left(t\right)}\alpha_{n}\mathbf{x}_{n}^{\top}\mathbf{r}\left(t\right)\\
 & \overset{(1)}{\le}\eta\sum_{n\in\set\cap\mathcal{B}\left(t\right)}\gamma_{n}(t)\exp\left(-\log\left( \frac{\eta}{K}t \right)\hat{\mathbf{w}}^{\top}\mathbf{x}_{n}+\tilde{m}_n(t)-\wtilde^{\top}\xn-\mathbf{x}_{n}^{\top}\mathbf{r}\left(t\right)\right)\mathbf{x}_{n}^{\top}\mathbf{r}\left(t\right)-\frac{K}{t}\sum_{n\in\set\cap\mathcal{B}\left(t\right)}\alpha_{n}\mathbf{x}_{n}^{\top}\mathbf{r}\left(t\right)\\
 & \overset{(2)}{=}\sum_{n\in\set\cap\mathcal{B}\left(t\right)}\gamma_{n}(t)\alpha_{n}\frac{K}{t}\exp\left(\tilde{m}_n(t)-\mathbf{x}_{n}^{\top}\mathbf{r}\left(t\right)\right)\mathbf{x}_{n}^{\top}\mathbf{r}\left(t\right)-\frac{K}{t}\sum_{n\in\set\cap\mathcal{B}\left(t\right)}\alpha_{n}\mathbf{x}_{n}^{\top}\mathbf{r}\left(t\right)\\
 & =\sum_{n\in\set\cap\mathcal{B}\left(t\right)}\alpha_{n}\frac{K}{t}\left(\gamma_{n}(t)\exp\left(\tilde{m}_n(t)-\mathbf{x}_{n}^{\top}\mathbf{r}\left(t\right)\right)-1\right)\mathbf{x}_{n}^{\top}\mathbf{r}\left(t\right)
\end{align*}
where in $(1)$ we used eq. \ref{eq: exp tail def} and in $(2)$ we used $\forall n\in\set:\ \what^\top\xn=1$ and $\alpha_{n}=\eta\exp\left(-\tilde{\mathbf{w}}^{\top}\mathbf{x}_{n}\right)$
(eq. \ref{eq: alpha_n definition}).\\
We denote $\tilde{\mu}=\min\left(\mu_{+},\mu_{-},0.5\right)$. Recalling that $\forall n,\forall\epsilon>0$: $\vert \tilde{m}_n(t)\vert=o\left(t^{-1+\epsilon}\right)$ we note that this implies
\begin{equation} \label{eq: m_n(t)=o(t^{-0.5mu})}
	\forall n:\ \vert \tilde{m}_n(t)\vert =o(t^{-0.5\tilde{\mu}}).
\end{equation}
We examine each term $n$ in the sum:
\begin{equation}
\alpha_{n}\frac{K}{t}\left(\gamma_{n}(t)\exp\left(\tilde{m}_n(t)-\mathbf{x}_{n}^{\top}\mathbf{r}\left(t\right)\right)-1\right)\mathbf{x}_{n}^{\top}\mathbf{r}\left(t\right)\label{eq: individual term sv}
\end{equation}
and divide into cases.
\begin{enumerate}
    \item If $\vert\vect x_{n}^{\top}\rvec(t)\vert\le C_{1}t^{-0.5\tilde{\mu}}$
    then eq. \ref{eq: individual term sv} can be upper bounded by
    \begin{equation} \label{eq: sv term upper bound when xr is bounded}
        \left(\max_{n}\alpha_{n}\right)4KC_{1}t^{-1-0.5\tilde{\mu}},\forall t>t_{3}
    \end{equation}
    where we used $\vert \tilde{m}_n(t)\vert\to 0\Rightarrow\exists t_{3}>t_2$
    so that $\forall t>t_{3}:\ \exp\left(\tilde{m}_n(t)\right)\le2$.
    \item If $\vert\vect x_{k}^{\top}\rvec(t)\vert>C_{1}t^{-0.5\tilde{\mu}}$
    and $\vect x_{n}^{\top}\rvec(t)\ge0$ then
    \begin{align*}
    \e(-\mu_{+}\xn^{\top}\wvec(t)) & =\e\left(\mu_{+}\left(-\log\left( \frac{t}{K} \right)+\tilde{m}_n(t)-\wtilde^{\top}\xn-\mathbf{x}_{n}^{\top}\mathbf{r}\left(t\right)\right)\right)\\
     & \le\exp\left(-\mu_{+}\wtilde^{\top}\xn\right)\left(\frac{K}{t}\exp\left(\tilde{m}_n(t)\right)\right)^{\mu_{+}}\\
     & \le\left(2K\right)^{\mu_{+}}\exp\left(-\mu_{+}\min_{n}\wtilde^{\top}\xn\right)t^{-\mu_{+}}\triangleq C_{4}t^{-\mu_{+}}, \forall t>t_3.
    \end{align*}
    Using the last equation, eq. \ref{eq: exp tail def} and the fact that
    $\forall x\le1\ :\ e^{x}\le1+x+x^{2}$, eq. \ref{eq: individual term sv} can be upper bounded by
    \begin{align*}
     & \alpha_{n}\frac{K}{t}\left((1+\e(-\mu_{+}\xn^{\top}\wvec(t)))\exp(\tilde{m}_n(t))\exp(-C_{1}t^{-0.5\tilde{\mu}})-1\right)\mathbf{x}_{n}^{\top}\mathbf{r}\left(t\right)\\
     & \le\alpha_{n}\frac{K}{t}\left((1+C_{4}t^{-\mu_{+}})\left(1+\tilde{m}_n(t)+\tilde{m}^{2}_n(t)\right)\left(1-C_{1}t^{-0.5\tilde{\mu}}+C_{1}^{2}t^{-\tilde{\mu}}\right)-1\right)\mathbf{x}_{n}^{\top}\mathbf{r}\left(t\right)\\
     & \overset{(1)}{\le}\alpha_{n}\frac{K}{t}\left(-C_{1}t^{-0.5\tilde{\mu}}+o(t^{-0.5\tilde{\mu}})\right)\mathbf{x}_{n}^{\top}\mathbf{r}\left(t\right)\overset{(2)}{<}0,\forall t>t_{+}
    \end{align*}
    where in $(1)$ we used $\tilde{\mu}$ definition and eq. \ref{eq: m_n(t)=o(t^{-0.5mu})}. In $(2)$ we used that $-C_{1}t^{-0.5\tilde{\mu}}$ decrease to zero
    slower than the other terms and therefore $\exists t_{+}>t_{3}$ such
    that $\forall t>t_{+}$the last equation is negative.
    \item If $\vect x_{k}^{\top}\rvec(t)\ge\epsilon_2$ then $\exists t'+\ge t_+$ so that
    \begin{align*}
        &\gamma_{n}(t)\exp\left(\tilde{m}_n(t)-\mathbf{x}_{n}^{\top}\mathbf{r}\left(t\right)\right)\\
        &=
        (1+\e(-\mu_{+}\xn^{\top}\wvec(t)))\exp\left(\tilde{m}_n(t)-\mathbf{x}_{n}^{\top}\mathbf{r}\left(t\right)\right)\\
        &\le
        (1+\e(-\mu_{+}\xn^{\top}\wvec(t)))\exp\left(\tilde{m}_n(t)-\epsilon_2\right)\\
        & \le \exp\left(-0.5\epsilon_2\right)\,,
    \end{align*}
    where in the last transition we used the fact that $\tilde{m}_n(t)\to0$ and $\xn^{\top}\wvec(t)\to\infty$. Using this result, eq. \ref{eq: individual term sv} can be upper bounded $\forall t\ge t'_+$ by
    \begin{equation} \label{eq: sv term positive improved bound}
        -\min_n\alpha_{n}\frac{K}{t}\left(1- \exp\left(-0.5\epsilon_2\right)\right)\epsilon_2\triangleq - C_{+}''t^{-1}\,.
    \end{equation}
    where we defined $C_{+}''=\min_n\alpha_{n}K\left(1- \exp\left(-0.5\epsilon_2\right)\right)\epsilon_2$.
    \item If $\vert\vect x_{k}^{\top}\rvec(t)\vert>C_{1}t^{-0.5\tilde{\mu}}$
    and $\vect x_{n}^{\top}\rvec(t)<0$ then eq. \ref{eq: individual term sv}
    can be upper bounded by
    \begin{align*}
     & \alpha_{n}\frac{K}{t}\left(1-(1-\e(-\mu_{-}\xn^{\top}\wvec(t)))\exp\left(\tilde{m}_n(t)-\mathbf{x}_{n}^{\top}\mathbf{r}\left(t\right)\right)\right)\abs{\mathbf{x}_{n}^{\top}\mathbf{r}\left(t\right)}\,.
    \end{align*}
    We will now show that this equation is negative for sufficiently large
    $t$. We need to show that
    \[
    \left(1-\e\left(-\mu_{-}\xn^{\top}\wvec(t)\right)\right)\exp\left(\tilde{m}_n(t)-\mathbf{x}_{n}^{\top}\mathbf{r}\left(t\right)\right)>1
    \]
    Let $M>1$ be some arbitrary constant. We note that since $\lim_{t\to\infty}\xn^{\top}\wvec(t)=\infty$,
    $\exists t_{M}$ so that $\forall t>t_{M}:\ 1-\e\left(-\mu_{-}\xn^{\top}\wvec(t)\right)>0$.
    \\
    $\forall t>t_{M}$ if $\exp\left(-\xn^{\top}\rvec(t)\right)\ge M>1$
    then since $\lim_{t\to\infty}\xn^{\top}\wvec(t)=\infty$, 
    $\lim_{t\to\infty}\tilde{m}_n(t)=0$, $\exists t_{1}^{-}>t_{M}$ so that $\forall t>t_{1}^{-}$
    \begin{align*}
    	&\left(1-\e\left(-\mu_{-}\xn^{\top}\wvec(t)\right)\right)\exp\left(\tilde{m}_n(t)-\mathbf{x}_{n}^{\top}\mathbf{r}\left(t\right)\right)\\
        &\ge M\left(1-\e\left(-\mu_{-}\xn^{\top}\wvec(t)\right)\right)\exp\left(\tilde{m}_n(t)\right)\ge M'>1.
    \end{align*}
    
    In addition, if $\exists t>t_{M}$ so that $\exp\left(-\xn^{\top}\rvec(t)\right)<M$
    then
    \begin{align*}
     & \exp\left(-\mathbf{x}_{n}^{\top}\mathbf{r}\left(t\right)\right)\left(1-\e\left(-\mu_{-}\xn^{\top}\wvec(t)\right)\right)\exp\left(\tilde{m}_n(t)\right)\\
     & \overset{(1)}{=}\exp\left(-\mathbf{x}_{n}^{\top}\mathbf{r}\left(t\right)\right)\left(1-\e\left(\mu_{-}\left[-\log\left( \frac{t}{K} \right)+\tilde{m}_n(t)-\wtilde^{\top}\xn-\mathbf{x}_{n}^{\top}\mathbf{r}\left(t\right)\right]\right)\right)\exp\left(m_n(t)\right)\\
     & \overset{(2)}{\ge}\exp\left(-\mathbf{x}_{n}^{\top}\mathbf{r}\left(t\right)\right)\left(1-M^{\mu_{-}}\left( \frac{t}{K} \right)^{-\mu_{-}}\e\left(\mu_{-}\left[\tilde{m}_n(t)-\wtilde^{\top}\xn\right]\right)\right)\exp\left(\tilde{m}_n(t)\right)\\
     &  \overset{(3)}{\ge}\left(1+C_{1}t^{-0.5\tilde{\mu}}\right)\left(1-C't^{-\mu_{-}}\right)\left(1+\tilde{m}_n(t)\right)\\
     & =1+C_{1}t^{-0.5\tilde{\mu}}+o\left(t^{-0.5\tilde{\mu}}\right)\overset{(4)}{>}1,\ \forall t>t_{-},
    \end{align*}
    where in $(1)$ we used eq. \ref{eq: r definition} ($\rvec(t)$ definition),
    in $(2)$ we used $\exp\left(-\xn^{\top}\rvec(t)\right)<M$, in $(3)$ we used $\forall x:\ e^{x}\ge1+x$, $\vert\vect x_{k}^{\top}\rvec(t)\vert>C_{1}t^{-0.5\tilde{\mu}}$ and the fact that we can find $C'$ that satisfies the equation (since $\lim_{t\to\infty} \tilde{m}_n(t)=0$ and the other terms except $t^{-\mu_-}$ are constant).
    In $(4)$ we used that $C_{1}t^{-0.5\tilde{\mu}}$ decrease to zero slower
    than the other terms and therefore $\exists t_{-}>t_{1}^{-}$ such
    that $\forall t>t_{-}$ the last equation is greater than 1.
    \item If $\vert\vect x_{k}^{\top}\rvec(t)\vert>\epsilon_2$
    and $\vect x_{n}^{\top}\rvec(t)<0$ then $\exists t'_{-}\ge t_-, M''>1$ so that
    \begin{align*}
         &\left(1-\e\left(-\mu_{-}\xn^{\top}\wvec(t)\right)\right)\exp\left(\tilde{m}_n(t)\right)\exp\left(-\mathbf{x}_{n}^{\top}\mathbf{r}\left(t\right)\right)\\
         &\ge \left(1-\e\left(-\mu_{-}\xn^{\top}\wvec(t)\right)\right)\exp\left(\tilde{m}_n(t)\right)\exp\left(\epsilon_2\right)\ge M''>1
    \end{align*}
    and thus eq. \ref{eq: individual term sv}
    can be upper bounded $\forall t\ge t_{-}'$ by
    \begin{align} \label{eq: sv term negative improved bound}
     & -\min_n\alpha_{n}\frac{K}{t}\left(M''-1\right)\epsilon_2 \triangleq -C_{-}'' t^{-1}\,.
    \end{align}
    where we defined $C_{-}''=\min_n\alpha_{n}K\left(M''-1\right)\epsilon_2$.
\end{enumerate}
In conclusion, $\forall t\ge\max\left(t_{+}',t_{-}',t_{3}\right)$
\begin{enumerate}
    \item Each term in eq. \ref{eq: (r(t+1)-r(t))r(t)} can be upper bounded by either zero or a term proportional to $t^{-\theta}$ or $t^{-1-0.5\tilde{\mu}}$. Thus, we can find positive constants $C_{2},\ C_{3}$ such that
    \[
    \left(\mathbf{r}\left(t+1\right)-\mathbf{r}\left(t\right)\right)^{\top}\mathbf{r}\left(t\right)\le C_{2}t^{-\theta}+C_{3}t^{-1-0.5\tilde{\mu}}\,.
    \]
    \item If, in addition, $\norm{\op\rvec\left(t\right)}\ge\epsilon_1$ (eq. \ref{eq: Pr(t) lower bouded}), we have that
    \[
        \max_{n\in\set\cap\mathcal{B}\left(t\right)} \abs{\xnT\rvec\left(t\right)}^2 \overset{(1)}{\ge}
        \frac{1}{\abs{\set}} \sum_{n\in\set\cap\mathcal{B}\left(t\right)} \abs{\xnT\op\rvec\left(t\right)}^2 \overset{(2)}{=} \frac{1}{\abs{\set}}\norm{\vect{X}_{\set_1(t)}\op\rvec\left(t\right)}^2\overset{(3)}{\ge} \frac{1}{\abs{\set}}\epsilon_2'\epsilon_1^2\,,
    \]
    where in (1) we used $\forall n\in\set:\ \op^\top\xn=\xn$, in (2) we defined $\set_1(t)=\set\cap\mathcal{B}\left(t\right)$ and used this to define $\vect{X}_{\set_1(t)}$ as the matrix whose columns are a subset $\set_1\subset \left\{1,...,N\right\}$ of the columns of $\vect{X}=\left[\vect{x}_1,...,\vect{x}_N\right]\in\mathbb{R}^{d\times N}$. In (3) we used eq. \ref{eq: Pr(t) lower bouded} and also the fact that, for almost every data set, the support vectors are linearly independent and thus $\forall t:\ \lambda_{\min}\left( \vect{X}_{\set_1(t)}^\top\vect{X}_{\set_1(t)}\right)>0$. This implies that $\exists \epsilon_2'>0$ such that $\forall t:\ \lambda_{\min}\left( \vect{X}_{\set_1(t)}^\top\vect{X}_{\set_1(t)}\right)\ge\epsilon_2'>0$.
    Therefore, for some $n\in\set\cap\mathcal{B}\left(t\right)$, $\abs{\xnT\rvec\left(t\right)}>\epsilon_2=\sqrt{\abs{\set}^{-1}\epsilon_2'\epsilon_1^2}$.
    We define $C'' = \min\left(C_{+}'',C_{-}''\right)$. Using eqs. \ref{eq: non sv upper bound}, \ref{eq: sv term upper bound when xr is bounded}, \ref{eq: sv term positive improved bound} and \ref{eq: sv term negative improved bound}, we obtain,
    \[
        \left(\mathbf{r}\left(t+1\right)-\mathbf{r}\left(t\right)\right)^{\top}\mathbf{r}\left(t\right)\le -C''t^{-1}+o(t^{-1}).
    \]
    This implies that $\exists C_4, \bar{t}_2\ge\max\left(t_{+}',t_{-}',t_{3}\right)$ so that $\forall t\ge\bar{t}_2$ we have
    \[
        \left(\mathbf{r}\left(t+1\right)-\mathbf{r}\left(t\right)\right)^{\top}\mathbf{r}\left(t\right)\le -C_4t^{-1}.
    \]
\end{enumerate}
\QEDA

\subsection{Proof of Lemma \ref{lem:: sum sv aux lemma}\label{sec:Proof-of-Lemma}}
\sumSVAux*

\subsubsection{Proof for random sampling with replacement}

We define $z_{t,n}$ as the random variable equal to $1$ if sample
$n$ is selected at iteration $t$, and $0$ otherwise. Using this
variable, we can write
\begin{equation}
K\sum_{u=1}^{t-1}\frac{1}{u}\sum_{n\in\set\cap\mathcal{B}\left(u\right)}\alpha_{n}\mathbf{x}_{n}=K\sumnsv\sum_{u=1}^{t-1}\frac{z_{u,n}}{u}\alpha_{n}\mathbf{x}_{n}\,.\label{eq: Lemma 5 with replacment eq zero}
\end{equation}
We note that $\E z_{u,n}=K^{-1}$, and therefore,
\begin{align}
K\sum_{u=1}^{t-1}\frac{z_{u,n}}{u} & =K\sum_{u=1}^{t-1}\frac{\E z_{u,n}}{u}+K\sum_{u=1}^{t-1}\frac{z_{u,n}-\E z_{u,n}}{u}\nonumber \\
 & =\log\left(t\right)+\gamma+K\sum_{u=1}^{\infty}\frac{z_{u,n}-\E z_{u,n}}{u}-K\sum_{u=t}^{\infty}\frac{z_{u,n}-\E z_{u,n}}{u}+O\left(t^{-1}\right)\label{eq: Lemma 5 with replacment first eq}
\end{align}
where in the last we used the relations 
\[
\sum_{n=1}^{M}\frac{1}{m}=\log M+\gamma+O\left(\frac{1}{M}\right)\,,
\]
\[
\forall c:\,\log\left(\frac{t+c}{K}\right)-\log\left(\frac{t}{K}\right)=\log\left(1+ct^{-1}\right)=O\left(t^{-1}\right)\,,
\]
where $\gamma$ is the Euler-Mascheroni constant. Next, we bound the
remaining terms. We examine the value of the infinite sum for all $n$:
\[
\sum_{u=1}^{\infty}\frac{z_{u,n}-\E z_{u,n}}{u} \,.
\]

Since
\[
-\frac{1}{u}\le \frac{z_{u,n}-\E z_{u,n}}{u}\leq\frac{1}{u}\,,
\]
we have, by the Hoeffding inequality, that $\forall c$
\[
P\left(\left|\sum_{u=1}^{T}\frac{z_{u,n}-\E z_{u,n}}{u}\right|\ge c\right)\leq2\exp\left(-\frac{2c^{2}}{4\sum_{u=1}^{T}\frac{1}{u^2}}\right)\,.
\]Taking $T$ and $c$ to infinity and using $\sum_{k=1}^{\infty}\frac{1}{k^{2}}=\frac{\pi^{2}}{6}$ we get that 
this sum is convergent with probability $1$, i.e. 
\begin{equation}
P\left(\left|\sum_{u=1}^{\infty}\frac{z_{u,n}-\E z_{u,n}}{u}\right|<\infty\right)=1\,.\label{eq: Lemma 5 with replacment second eq}
\end{equation}

Next we examine the tail sum. Note that
\[
\sum_{k=t}^{\infty}\frac{1}{k^{2}}=\frac{1}{t}+O\left(\frac{1}{t^{2}}\right)
\]
and therefore, by the Hoeffding inequality,
\[
P\left(\left|\sum_{u=t}^{\infty}\frac{z_{u,n}-\E z_{u,n}}{u}\right|>c\right)\leq2\exp\left(-\frac{2tc^{2}}{1+O\left(t^{-1}\right)}\right)\,.
\]
or
\[
P\left(\left|\sum_{u=t}^{\infty}\frac{z_{u,n}-\E z_{u,n}}{u}\right|>\frac{c}{\sqrt{t}}\right)\leq2\exp\left(-\frac{2c^{2}}{1+O\left(t^{-1}\right)}\right)\,.
\]
Taking $c$ to infinity we obtain that, with probability $1$, $\forall\epsilon>0$
\begin{equation}
\sum_{u=t}^{\infty}\frac{z_{u,n}-\E z_{u,n}}{u}=o\left(t^{-0.5+\epsilon}\right)\,.\label{eq: Lemma 5 with replacment third eq}
\end{equation}
Recalling that $\what=\sum_{n\in\set}\alpha_{n}\xn$, and denoting
\[
\vect m_{1}(t)\triangleq K\sum_{u=t}^{\infty}\frac{z_{u,n}-\E z_{u,n}}{u}+O\left(t^{-1}\right)
\]
\[
\check{\mathbf{w}}\triangleq K\sum_{n\in\set\cap\mathcal{B}\left(u\right)}\sum_{u=1}^{\infty}\frac{z_{t,n}-\E z_{t,n}}{u}\alpha_{n}\mathbf{x}_{n}+\left(\log\left(K\right)+\gamma\right)\what,
\]
we combine this with eq. \ref{eq: Lemma 5 with replacment third eq},
\ref{eq: Lemma 5 with replacment second eq}, \ref{eq: Lemma 5 with replacment first eq}
into eq. \ref{eq: Lemma 5 with replacment eq zero}. This proves the
Lemma since $\check{\mathbf{w}}$ is a finite constant with probability
1, and $\vect m_{1}(t)=o\left(t^{-0.5+\epsilon}\right)$ and $\vect m_{1}(t+1)-\vect m_{1}(t)=O\left(t^{-1}\right)$.
\QEDB

\subsubsection{Proof for sampling without replacement}

\begin{align}
 & K\sum_{u=1}^{t-1}\frac{1}{u}\sum_{n\in\set\cap\mathcal{B}\left(u\right)}\alpha_{n}\mathbf{x}_{n}\nonumber \\
 & =K\sum_{u=1}^{K\left\lfloor \frac{t-1}{K}\right\rfloor }\frac{1}{u}\sum_{n\in\set\cap\mathcal{B}\left(u\right)}\alpha_{n}\mathbf{x}_{n}+K\sum_{u=K\left\lfloor \frac{t-1}{K}\right\rfloor +1}^{t-1}\frac{1}{u}\left(\sum_{n\in\set\cap\mathcal{B}\left(u\right)}\alpha_{n}\mathbf{x}_{n}\right)\nonumber \\
 & =\sum_{k=1}^{\left\lfloor \frac{t-1}{K}\right\rfloor }\sum_{n\in\set}\frac{1}{k-1+u_{n,k}/K}\alpha_{n}\mathbf{x}_{n}+\vect m_{1}\left(t\right) \, ,\label{eq: first eq in lemma 5}
\end{align}
where in the last line we recall we defined $u_{n,k}$ as the index
of the $n$'th example minibatch in the $k$'th epoch and therefore
$1\le u_{n,k}\le K$, and use the fact that we can write the second
term as $\vect m_{1}\left(t\right)$ since it is $O\left(t^{-1}\right)$
and is also is difference (i.e., $\vect m_{1}(t+1)-\vect m_{1}(t)=O\left(t^{-1}\right)$)
--- it is a finite sum of $Cu^{-1}$ terms where $u\geq t$-2. Next,
we examine the remaining term for a given $n$: 
\begin{align}
 & \sum_{k=1}^{\left\lfloor \frac{t-1}{K}\right\rfloor }\frac{1}{k-1+u_{n,k}/K}=\sum_{k=1}^{\left\lfloor \frac{t-1}{K}\right\rfloor }\left[\frac{1}{k}+\frac{K-u_{n,k}}{k^{2}K+(u_{n,k}-K)k}\right]\nonumber \\
 & =\log\left(\left\lfloor \frac{t-1}{K}\right\rfloor \right)+\gamma+O\left(t^{-1}\right)+\sum_{k=1}^{\left\lfloor \frac{t-1}{K}\right\rfloor }\frac{K-u_{n,k}}{k^{2}K+(u_{n,k}-K)k}\label{eq: second eq in lemma 5}
\end{align}
where in the last line we used the relations 
\[
\sum_{n=1}^{M}\frac{1}{m}=\log M+\gamma+O\left(\frac{1}{M}\right)\,,
\]
\[
\forall c:\,\log\left(\frac{t+c}{K}\right)-\log\left(\frac{t}{K}\right)=\log\left(\frac{1+ct^{-1}}{K}\right)=O\left(t^{-1}\right)\,,
\]
where $\gamma$ is the Euler-Mascheroni constant. We examine the remaining
sum in eq. \ref{eq: second eq in lemma 5}. Since $1\le u_{n,k}\le K,$
each term in the sum is $\Theta\left(k^{-2}\right)$, which implies
that this sum is convergent, and we can write it as
\[
\sum_{k=1}^{\left\lfloor \frac{t-1}{K}\right\rfloor }\frac{K-u_{n,k}}{k^{2}K+(u_{n,k}-K)k}=\sum_{k=1}^{\infty}\frac{K-u_{n,k}}{k^{2}K+(u_{n,k}-K)k}+O\left(t^{-1}\right)\,.
\]
Recalling that $\what=\sum_{n\in\set}\alpha_{n}\xn$, defining 
\[
\check{\mathbf{w}}\triangleq\sum_{n\in\set}\sum_{k=1}^{\infty}\frac{K-u_{n,k}}{k^{2}K+(u_{n,k}-K)k}\alpha_{n}\mathbf{x}_{n}+\gamma\what
\]
and combining this into eq. \ref{eq: first eq in lemma 5}, using
eq. \ref{eq: second eq in lemma 5}, we prove the Lemma. \QEDB
\remove{
\newpage
\section{Additional Empiric Results} \label{sec: more emipirical results}
In figure \ref{fig: DNN results} we observe the learning dynamics of a ResNet-18 trained on CIFAR10 using SGD with momentum. Particularly, we observe that even though the learning rate is fixed, the training loss and classification error converge to zero.
In the next figure we demonstrate similar results when using SGD without momentum.
\begin{figure*}[h]
\begin{centering}
\begin{tabular}{cc}
\includegraphics[width=0.48\columnwidth]{Plots/resnetWithoutMom_loss}  & \includegraphics[width=0.48\columnwidth]{Plots/resnetWithoutMom_error.pdf}  \tabularnewline
\end{tabular}
\par\end{centering}
\caption{Training of a convolutional neural network on CIFAR10 using stochastic
gradient descent with constant learning rate and \underline{without} momentum, softmax
output and a cross entropy loss, where we achieve $8.3\%$ final validation
error. We observe that, approximately: (1) The training loss and (classification) error both decays to zero; (2) after a while, the validation loss starts to increase; and (3) in contrast, the validation (classification) error slowly improves.}
\end{figure*}
}
\newpage
\section{Additional empirical results}
\remove{
\begin{figure*}[h]
\begin{centering}
\includegraphics[width=0.95\columnwidth]{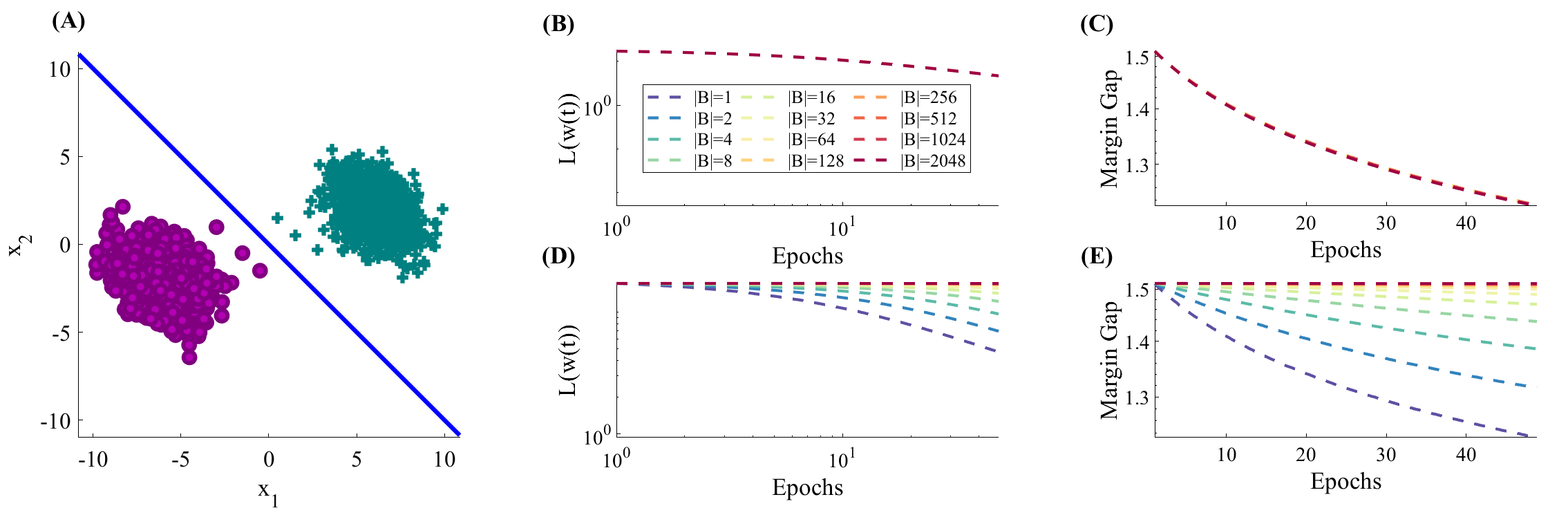}
\includegraphics[width=0.95\columnwidth]{Plots/sgd_differentBatchSizes_n2048_offsetDataset2_logScale_1e6Iter.png}
\end{centering}
\caption{We repeat the experiment observed in Figure \ref{fig: minibatch} with different datasets. In both figures $\eta=\frac{2\gamma^{2}}{\beta\sigma_{\max}^{2}}B$ in panels \textbf{B} and \textbf{C}, vs. $\eta=\frac{2\gamma^{2}}{\beta\sigma_{\max}^{2}}$ in panels \textbf{D} and \textbf{E}. We initialized $\mathbf{w}(0)$ to be a standard normal vector. We used a random dataset  \textbf{(A)} with $N=2048$ samples divided into two classes, and with the same support vectors as in Figure \ref{fig:Synthetic-dataset}. We can see that in the top figure the convergence of the loss \textbf{(B)} and margin \textbf{(C)} is practically identical for all minibatch sizes in the top figure and is different initially in the bottom figure but is asymptotically identical for all batch sizes. When we used a fixed learning rate, the convergence rate was different \textbf{(D-E)}. In contrast, we can see that when generating the samples with different seed the convergence of the loss \textbf{(B)} and margin \textbf{(C)} are not identical with larger minibatch sizes.} \label{fig: different batch sizes}
\end{figure*}
}
\begin{figure*}[h]
\begin{centering}
\includegraphics[width=0.95\columnwidth]{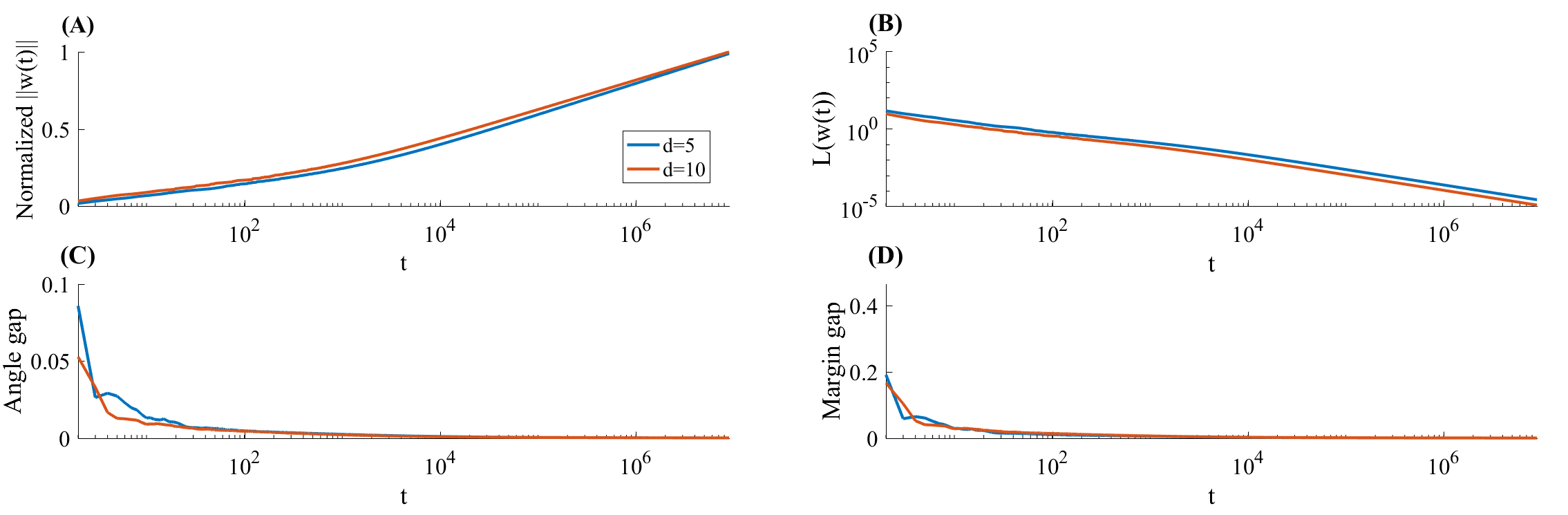}
\end{centering}
\caption{\textbf{Visualization of Theorem \ref{thm: direction convergence to SVM} on a synthetic datasets with dimension $d=5$ and $d=10$ in which the
$L_{2}$ max margin vector $\hat{\mathbf{w}}$ are precisely known}.
We show: \textbf{(B)
}
The norm of $\mathbf{w}\left(t\right)$, normalized so it would equal
to $1$ at the last iteration. As expected
(from eq. \ref{define wVec}), the norm increases logarithmically;
\textbf{(C) }the training loss. As expected, it decreases as $t^{-1}$
(eq. \ref{eq: logistic loss convergence}); and \textbf{(D\&E) }the
angle and margin gap of $\mathbf{w}\left(t\right)$ from $\hat{\mathbf{w}}$
(eqs. \ref{eq: angle} and \ref{eq: margin}). As expected, these
are logarithmically decreasing to zero. Figure reproduced from \citet{soudry2017implicit}. } \label{fig: SGD results in higher dimension}
\end{figure*}

\remove{
\begin{proof}
\begin{align*}
& K\sum_{u=1}^{t-1}\frac{1}{u}\sum_{n\in\set\cap\mathcal{B}\left(u\right)}\alpha_{n}\mathbf{x}_{n} \\ & = K\sum_{u=1}^{K\left\lfloor \frac{t-1}{K}\right\rfloor}\frac{1}{u}\sum_{n\in\set\cap\mathcal{B}\left(u\right)}\alpha_{n}\mathbf{x}_{n}
+K\sum_{u=K\left\lfloor \frac{t-1}{K}\right\rfloor +1}^{t-1}\frac{1}{u}\sum_{n\in\set\cap\mathcal{B}\left(u\right)}\alpha_{n}\mathbf{x}_{n}\\
&\overset{(1)}{=} K\sum_{k=1}^{\left\lfloor \frac{t-1}{K}\right\rfloor }\sum_{n\in\set}\frac{1}{(k-1)K+u_{n,k}}\alpha_{n}\mathbf{x}_{n}+K\sum_{u=K\left\lfloor \frac{t-1}{K}\right\rfloor +1}^{t-1}\frac{1}{u}\sum_{n\in\set\cap\mathcal{B}\left(u\right)}\alpha_{n}\mathbf{x}_{n}\\
 & \overset{(2)}{=}K\sum_{k=1}^{\left\lfloor \frac{t-1}{K}\right\rfloor }\sum_{n\in\set}\frac{1}{(k-1)K+u_{n,k}}\alpha_{n}\mathbf{x}_{n}+\vect m_{1}(t)\\
 & \overset{(3)}{=}K\sum_{k=1}^{\left\lfloor \frac{t-1}{K}\right\rfloor }\sum_{n\in\set}\left(\frac{1}{kK}+\frac{K-u_{n,k}}{k^{2}K^{2}+(u_{n,k}-K)kK}\right)\alpha_{n}\mathbf{x}_{n}+\vect m_{1}(t)\\
 & \overset{(4)}{=}\sum_{k=1}^{\left\lfloor \frac{t-1}{K}\right\rfloor }\frac{1}{k}\sum_{n\in\set}\alpha_{n}\mathbf{x}_{n}+\vect m_{1}(t)+\vect m_{2}(t)\\
 & \overset{(5)}{=}\log\left(\left\lfloor \frac{t-1}{K}\right\rfloor \right)\hat{\mathbf{w}}+\gamma\what+\vect m_{3}(t)\\
 & =\log\left(\frac{t}{K}\right)\hat{\mathbf{w}}+\gamma\what+\vect m_{3}(t)-\left(\log\left(\frac{t}{K}\right)-\log\left(\left\lfloor \frac{t-1}{K}\right\rfloor \right)\right)\what\\
 & \overset{(6)}{=}\log\left(\frac{t}{K} \right)\hat{\mathbf{w}}+\gamma\what+\vect m_{4}(t)
\end{align*}
where in (1) we defined $u_{n,k}$ as the index of the $n$'th example
minibatch in the $k$ epoch and therefore $1\le u_{n,k}\le K$.\\
In order to explain the second transition we note that $\forall b=1,...,K$:
\begin{equation*}
	\lim_{t\to\infty}\frac{t}{K\left\lfloor \frac{t-1}{K}\right\rfloor +b} = \lim_{t\to\infty}\frac{1}{K\nicefrac{\left\lfloor \frac{t-1}{K}\right\rfloor}{\frac{t-1}{K}} +\nicefrac{b}{\frac{t-1}{K}}} \frac{t}{\frac{t-1}{K}}=1
\end{equation*}
 which means that $\frac{1}{K\left\lfloor \frac{t-1}{K}\right\rfloor +b}=\Theta\left(\frac{1}{t}\right).$
This explains the second transition since in (2) we defined $\norm{\vect m_{1}(t)}=O\left(\frac{1}{t}\right)$
and used the fact that $\sum_{u=K\left\lfloor \frac{t-1}{K}\right\rfloor +1}^{t-1}\frac{1}{u}\sum_{n\in\set\cap\mathcal{B}\left(u\right)}\alpha_{n}\mathbf{x}_{n}$
contains a finite number of terms (N at most) and each of theirs elements
is $O\left(\frac{1}{t}\right)$.\\
In 3 we used the relation:
\begin{align*}
 & \frac{1}{1+x}=1-\frac{x}{1+x}\\
 & \Rightarrow\frac{1}{(k-1)K+u_{n,k}}=\frac{1}{kK}\cdot\frac{1}{1+\frac{u_{n,k}-K}{kK}}\\ &=\frac{1}{kK}\cdot\left(1-\frac{u_{n,k}-K}{kK+u_{n,k}-K}\right)=\frac{1}{kK}+\frac{K-u_{n,k}}{k^{2}K^{2}+(u_{n,k}-K)kK}.
\end{align*}
In 4 we defined $\norm{\vect m_{2}(t)}=O\left(\frac{1}{t}\right)$
and used the fact that $\exists k_{1},C_{1}$ positive constant so
that $\forall k\ge k_{1}$

\[
0\le\frac{K-u_{n,k}}{k^{2}K^{2}+(u_{n,k}-K)kK}\le\frac{K-1}{k^{2}K^{2}+(1-K)kK}\le\frac{C_{1}}{k^{2}}
\]
since

\begin{align*}
&\lim_{k\to\infty}\frac{(K-1)k^{2}}{k^{2}K^{2}+(1-K)kK}=\lim_{k\to\infty}\frac{K-1}{K^{2}+(1-K)K/k}=\frac{K-1}{K^{2}}\\
&\Rightarrow\frac{K-1}{k^{2}K^{2}+(1-K)kK}=\Theta\left(\frac{1}{k^{2}}\right).
\end{align*}
and $\sum_{k=1}^{\infty}\frac{1}{k^{2}}=O\left(\frac{1}{k}\right)$.\\
In 5 we defined $\norm{\vect m_{3}(t)}=O\left(\frac{1}{t}\right)$
and used the relation $\sum_{n=1}^{K}\frac{1}{n}=\log K+\gamma+O\left(\frac{1}{2K}\right)$,
where $\gamma$ is the Euler\textendash Mascheroni constant, and the
fact that $\what=\sum_{n\in\set}\alpha_{n}\xn$.
In 6 we used the fact that $\forall \epsilon \in(0,1)$
\begin{equation} \label{eq: asym floor(t-1) and t}
0\le t^{-(-1+\epsilon)}\log\left(\dfrac{\frac{t}{K}}{\left\lfloor \frac{t-1}{K}\right\rfloor }\right)
\le t^{-(-1+\epsilon)}\log\left(\dfrac{\frac{t}{K}}{ \frac{t-1}{K}-1 }\right) = t^{-(-1+\epsilon)}\log\left(1+\frac{K+1}{t-1-K}\right) \xrightarrow{t\to\infty} 0
\end{equation}
since
\begin{align*}
	&\lim_{t\to\infty} \frac{\log\left(1+\frac{K+1}{t-1-K}\right)}{t^{-1+\epsilon}} 
	\overset{(1)}{=} \lim_{t\to\infty} \frac{-(K+1)(t-1-K)^{-2}}{1+\frac{K+1}{t-1-K}}\cdot \frac{1}{(\epsilon-1)t^{-2+\epsilon}}\\
	&= \lim_{t\to\infty} \frac{(K+1)t^2}{(t-1-K)^{2}+(K+1)(t-1-K)}\cdot \frac{1}{(1-\epsilon)t^{\epsilon}}=0,
	\end{align*}
	where in 1 we used L'Hospital's rule.
	From eq. \ref{eq: asym floor(t-1) and t} and Squeeze Theorem:
	\begin{equation*}
		\forall \epsilon\in(0,1):\ \lim_{t\to\infty} t^{-(-1+\epsilon)}\log\left(\dfrac{\frac{t}{K}}{\left\lfloor \frac{t-1}{K}\right\rfloor }\right) = 0
	\end{equation*}  
	and therefore $\forall\epsilon\in(0,1)\ :\ \log\left(\dfrac{\frac{t}{K}}{\left\lfloor \frac{t-1}{K}\right\rfloor }\right) = o(t^{-1+\epsilon})$.
\end{proof}
}
\end{document}